\documentclass[11 pt]{article}
\usepackage{style}
\usetikzlibrary{arrows.meta}

\title{\LARGE\bfseries A Minimal-Assumption Analysis of Q-Learning with Time-Varying Policies}
\author{Phalguni Nanda and Zaiwei Chen\\
{\small \textit{
Edwardson School of Industrial Engineering, Purdue University}}\\
{\small \href{mailto:nanda14@purdue.edu}{\texttt{nanda14@purdue.edu}}, \href{mailto:chen5252@purdue.edu}{\texttt{chen5252@purdue.edu}}}}
\date{\vspace{-10 mm}}

\begin{document}
\maketitle
\begin{abstract}
In this work, we present the first \emph{finite-time analysis} of Q-learning with \emph{time-varying learning policies} (i.e., on-policy sampling) for discounted Markov decision processes under \emph{minimal assumptions}, requiring only the existence of a policy that induces an \emph{irreducible} Markov chain over the state space.
We establish a last-iterate convergence rate for $\mathbb{E}[\|Q_k - Q^*\|_\infty^2]$, which implies a sample complexity of order $\mathcal{O}(1/\xi^2)$ for achieving $\mathbb{E}[\|Q_k - Q^*\|_\infty]\le \xi$. This rate matches that of off-policy Q-learning, but with a worse dependence on exploration-related parameters. We also derive an explicit finite-time rate for $\mathbb{E}[\|Q^{\pi_k} - Q^*\|_\infty^2]$, where $\pi_k$ denotes the learning policy at iteration $k$. Together, these results highlight the exploration--exploitation trade-off in on-policy Q-learning. While exploration is weaker than in off-policy methods, on-policy learning enjoys an exploitation advantage since the learning policy itself converges to an optimal one. Numerical experiments corroborate our theoretical findings.

From a technical perspective, the combination of rapidly time-varying learning policies, which induce time-inhomogeneous Markovian noise, and minimal exploration assumptions presents significant analytical challenges. To address these challenges, we develop a Poisson-equation-based decomposition of the Markovian noise associated with a \emph{lazy} transition matrix, separating it into a martingale-difference term and residual terms. We then control the residual terms through a sensitivity analysis of the Poisson equation solution with respect to both the Q-function estimate and the learning policy. These techniques may facilitate the analysis of other reinforcement learning algorithms with rapidly time-varying learning policies, such as single-timescale actor--critic methods and learning-in-games algorithms.
\end{abstract}

\section{Introduction}\label{sec:intro}
Reinforcement learning (RL) provides a principled framework for sequential decision-making under uncertainty \cite{sutton2018reinforcement}, with broad applications in game playing \cite{silver2017mastering}, robotics \cite{levine2016end}, recommendation systems \cite{afsar2022reinforcement}, and large language models \cite{ouyang2022training}. Among the diverse algorithmic approaches in RL, Q-learning \cite{watkins1992q} stands out as one of the most fundamental and widely studied methods, owing to its simplicity, its natural interpretation as solving the Bellman equation via stochastic approximation \cite{robbins1951stochastic}, and its empirical success. In particular, a notable variant of Q-learning, known as the deep Q-network (DQN) \cite{mnih2015human}, achieved human-level performance on Atari games, which is widely regarded as a milestone in the modern development of RL.

Due to the popularity of Q-learning, substantial efforts have been devoted to establishing its theoretical foundations. As discussed, Q-learning can be viewed as a stochastic approximation algorithm for solving the Bellman equation \cite{tsitsiklis1994asynchronous,borkar2000ode}. The randomness arises from the agent’s interaction with the environment under a learning policy, during which it collects potentially noisy samples of state transitions and rewards. From this perspective, the literature has developed a broad range of theoretical results to deepen our understanding of Q-learning. Early work established asymptotic convergence \cite{tsitsiklis1994asynchronous,borkar2000ode,borkar2009stochastic,szepesvari1998asymptotic}, while more recent studies have provided non-asymptotic guarantees, including finite-time mean-square error bounds \cite{beck2012error,beck2013improved,chen2024lyapunov,lee2024final,wainwright2019stochastic,wainwright2019variance} and high-probability bounds \cite{even2003learning,qu2020finite,li2020sample,li2023statistical}. In particular, it has been shown that variance-reduced Q-learning \cite{wainwright2019variance,li2020sample} almost achieves the minimax lower bound \cite{azar2012dynamic}.

For most existing results—especially those concerning non-asymptotic analysis \cite{even2003learning, beck2012error, qu2020finite, beck2013improved,chen2024lyapunov,lee2024final}—the learning policy is typically assumed to be stationary, with a few exceptions \cite{jin2018q,liu2025linear}, which we discuss in more detail in Section~\ref{subsec:literature}. In practice, however, Q-learning is almost always implemented with time-varying policies, such as $\epsilon$-greedy \cite{mnih2015human}, Boltzmann (softmax) exploration \cite{leslie2005individual,liu2025linear,meyn2024projected}, or combinations and variants of these \cite{van2016deep,tokic2011value,haarnoja2017reinforcement}. This gap between theoretical assumptions and practical implementations motivates us to develop new theoretical insights into the non-asymptotic behavior of Q-learning under time-varying policies, with the aim of better guiding its use in modern applications.

From a stochastic approximation viewpoint, the time-varying nature of the learning policy implies that the noise sequence in Q-learning with on-policy sampling\footnote{Throughout this paper, we refer to Q-learning with time-varying learning policies (such as $\epsilon$-greedy, softmax, or their combinations and variants) as \emph{Q-learning with on-policy sampling}, in contrast to off-policy Q-learning where the learning policy is stationary.} forms a rapidly varying time-inhomogeneous Markov chain, which poses a fundamental analytical challenge. Existing analyses of RL algorithms under stationary learning policies typically rely on Markov chain mixing arguments \cite{srikant2019finite,bhandari2018finite}. However, when the policy is time-varying, it is unclear how to apply such techniques without imposing strong assumptions—such as requiring every policy encountered by the algorithm's trajectory to induce a uniformly ergodic Markov chain with mixing rates uniformly bounded from above and stationary distributions uniformly bounded away from zero \cite{liu2025linear,zou2019finite}. Moreover, under such assumptions, one cannot theoretically capture the exploration–exploitation trade-off inherent in Q-learning with on-policy sampling. We return to this issue in greater detail in Section~\ref{sec:main results}.
\begin{center}
    \textit{In this paper, we address these challenges by providing a principled non-asymptotic study of}\\
    \textit{Q-learning with time-varying learning policies under minimal assumptions.$\qquad\qquad\qquad\;\;\;$}
\end{center}
Specifically, we assume only the existence of a policy that induces an \emph{irreducible} Markov chain over states. This policy need not be encountered along the algorithm’s trajectory and can therefore be viewed as a mild, algorithm-independent assumption on the underlying stochastic model. Under this assumption, we establish last-iterate convergence rates for on-policy Q-learning that quantitatively characterize the exploration--exploitation trade-off. These results are further validated through numerical simulations. We next summarize the main contributions of this work in more detail.

\subsection{Main Contributions}\label{subsec:contributions}
For infinite-horizon discounted MDPs with finite state--action spaces, we study the classical tabular Q-learning algorithm implemented with a learning policy that is a convex combination (with parameter $\epsilon \in (0,1)$) of a uniform policy and a softmax policy (with temperature $\tau>0$) induced by the current Q-function estimate.

\begin{itemize}
    \item \textbf{Finite-time analysis under minimal assumptions.} We establish a last-iterate convergence rate for $\mathbb{E}[\|Q_k - Q^*\|_\infty^2]$, which implies a sample complexity of order $\widetilde{\mathcal{O}}(\xi^{-2})$ for achieving $\mathbb{E}[\|Q_k - Q^*\|_\infty] \le \xi$. We further characterize the dependence of this rate on the exploration parameters $\epsilon$ and $\tau$, as well as on other intrinsic quantities that capture the fundamental exploration properties of the underlying MDP. In addition, for the learning policy $\pi_k$ used at iteration $k$, we derive an explicit convergence rate for $\mathbb{E}[\|Q^{\pi_k} - Q^*\|_\infty^2]$. Together, these results quantitatively demonstrate that on-policy Q-learning exhibits weaker exploration than its off-policy counterpart but enjoys a distinct exploitation advantage, as the learning policy itself converges to an optimal one rather than remaining stationary. Importantly, our analysis is developed under the assumption that there exists a policy inducing an \emph{irreducible} Markov chain. This assumption is the weakest among those used in the literature, and we further show that it is necessary even for the asymptotic convergence of Q-learning. Our theoretical findings are corroborated by numerical simulations.

\vspace{1 em}

    \item \textbf{Handling rapidly varying time-inhomogeneous Markovian noise.} The combination of minimal assumptions (existence of a policy that induces an irreducible Markov chain) and the time-varying nature of the learning policy presents unique technical challenges that, to the best of our knowledge, have not been addressed before. Inspired by \cite{haque2024stochastic,chandak2022concentration}, we tackle this challenge by developing an approach based on using the \emph{Poisson equation} to decompose the Markov chain into a martingale-difference sequence and residual terms. To handle time inhomogeneity, we perform a sensitivity analysis and establish an almost-Lipschitz continuity property of the Poisson equation solution with respect to both the transition matrix and the forcing function (cf.\ Proposition~\ref{prop:MC}). To address the minimal assumption challenge, our analysis is built upon the \emph{lazy chain} associated with the original transition kernel. More details are presented in Section~\ref{subsec:Poisson}. The proposed approach for handling time-inhomogeneous Markovian noise is of independent interest and can potentially be applied to other RL algorithms, such as single-timescale actor--critic methods and multi-agent settings where learning policies are often rapidly time-varying.
\end{itemize}

\subsection{Related Literature}\label{subsec:literature}
The most closely related works are those that study Q-learning, SARSA, and general stochastic approximation algorithms with time-inhomogeneous Markovian noise. However, existing studies either do not employ on-policy sampling or require strong assumptions. We next discuss these works in more detail.

\textbf{Q-learning.} The Q-learning method was first introduced in \cite{watkins1992q} and later proven to converge asymptotically to the optimal Q-function \cite{tsitsiklis1994asynchronous,jaakkola1994convergence,borkar2000ode,lee2019unified}. 
Beyond asymptotic guarantees, non-asymptotic analyses have established an $\mathcal{O}(1/k)$ convergence rate of $\|Q_k - Q^*\|_\infty^2$ (both in expectation and with high probability), under the assumption that the learning policy is stationary and induces an irreducible and aperiodic Markov chain over the states
\cite{even2003learning,qu2020finite,beck2012error,beck2013improved,chen2024lyapunov,li2020sample,wainwright2019stochastic,wainwright2019variance,lee2024final}. 
In addition, several variants of Q-learning have been proposed and analyzed, including Zap Q-learning \cite{devraj2017zap}, Q-learning with variance reduction \cite{wainwright2019variance,xia2024instance}, Q-learning with Polyak--Ruppert averaging \cite{li2023statistical,zhang2024constant}, Q-learning with function approximation \cite{melo2008analysis,chen2022target,meyn2024projected}, federated Q-learning \cite{woo2025blessing,khodadadian2022federated}, etc.

For Q-learning with on-policy sampling, existing results are far more limited and rely on strong assumptions about the set of all policies or all learning policies encountered along the algorithm’s trajectory. In particular, the analysis in \cite{chandak2022concentration} can, in principle, be extended to this setting, but it requires the existence of a uniform lower bound on the stationary distribution over states for all policy-induced Markov chains, and the resulting bounds (i) hold only for sufficiently large $k$ (e.g., $k \geq N$ for some $N$), (ii) depend on a random quantity $Q_N$, and (iii) involve implicit problem-dependent constants. More recently, \cite{liu2025linear} studied on-policy Q-learning with linear function approximation, with the tabular case as a special instance. However, their analysis assumes that every policy induces a uniformly ergodic Markov chain whose mixing rate is uniformly bounded away from $1$ and whose stationary distribution is uniformly bounded away from $0$. Moreover, the problem-dependent constants are implicit, and as a result, the bound cannot quantitatively capture the exploration–exploitation trade-off in on-policy Q-learning. A related but distinct line of research studies online (and offline) Q-learning, primarily in the episodic setting, where performance is measured in terms of regret; see \cite{jin2018q,yang2021q} and references therein. Since the problem formulations (episodic vs.\ infinite-horizon) and performance criteria (regret vs.\ last-iterate convergence) differ fundamentally, the corresponding results and analytical techniques are not directly comparable.

\textbf{SARSA.} A closely related algorithmic framework to Q-learning is SARSA, proposed in \cite{rummery1994line}. Similar to Q-learning with on-policy sampling, the learning policy in SARSA is time-varying. The key distinction is that SARSA updates the Q-function using the actual action chosen by the learning policy, whereas Q-learning relies on a virtual action that maximizes the current Q-function. The asymptotic convergence of SARSA was established in \cite{singh2000convergence}. For finite-time analysis, SARSA with linear function approximation has been studied in \cite{zou2019finite,zhang2023convergence}, which also covers the tabular case as a special instance. However, in addition to requiring strong assumptions (uniform ergodicity under all policies), both \cite{zou2019finite,zhang2023convergence} assume that the policy is Lipschitz with a sufficiently small Lipschitz constant. In contrast, \cite{singh2000convergence} showed that SARSA converges to the optimal Q-function only if the policy can be arbitrarily close to the greedy policy with respect to the Q-function. Consequently, the guarantees in \cite{zou2019finite,zhang2023convergence} do not ensure convergence to the optimal Q-function.

\textbf{Stochastic approximation with time-inhomogeneous Markovian noise.} Mathematically, Q-learning with on-policy sampling can be modeled as a stochastic approximation method \cite{robbins1951stochastic} for solving the Bellman equation, where the noise sequence forms a time-inhomogeneous Markov chain due to the learning policy being time-varying. While finite-time analyses of stochastic approximation have been extensively studied (see \cite{srikant2019finite,bhandari2018finite,chen2024lyapunov} and the references therein), results for the case of time-inhomogeneous Markovian noise are relatively rare, with notable exceptions in specific settings such as actor–critic algorithms \cite{wu2020finite,khodadadian2021finite,zhang2023convergence} and learning in games \cite{chen2023finite,chen2023twotimescale}.
However, these results all rely on a timescale separation assumption, namely that the transition kernel of the Markovian noise evolves much more slowly (either orderwise or by a large multiplicative factor) than the main iterate. 
As a result, the Markovian noise in these works is not rapidly changing, which stands in sharp contrast to Q-learning with on-policy sampling. 

The rest of the paper is organized as follows. Section~\ref{sec:background} reviews the background on RL and Q-learning with on-policy sampling. Section~\ref{sec:main results} presents the main results, including convergence rates for $\mathbb{E}[\|Q_k - Q^*\|_\infty^2]$ (Theorem~\ref{thm1}) and $\mathbb{E}[\|Q^{\pi_k} - Q^*\|_\infty^2]$ (Theorem~\ref{thm2}), with proofs given in Sections~\ref{sec:proof1} and~\ref{sec:proof2}, respectively, and technical lemmas deferred to the appendix. Numerical experiments are presented in Section~\ref{sec:numerical}, and conclusions are drawn in Section~\ref{sec:conclusion}.

\section{Background}\label{sec:background}
In this section, we introduce the mathematical model of RL and the Q-learning algorithm with time-varying learning policies.

\subsection{Reinforcement Learning}\label{subsec:RL}
Consider an infinite-horizon discounted MDP defined by a finite set of states $\mathcal{S}$, a finite set of actions $\mathcal{A}$, a transition kernel $\{p(s'\mid s,a)\mid s,s'\in\mathcal{S},\, a\in\mathcal{A}\}$, a reward function $\mathcal{R}:\mathcal{S}\times \mathcal{A}\to \mathbb{R}$, and a discount factor $\gamma\in (0,1)$. We assume, without loss of generality, that $|\mathcal{R}(s,a)|\leq 1$ for all $(s,a)$. At each time step $k \geq 0$, let $S_k$ denote the current state of the environment. The agent selects an action $A_k$ according to a policy $\pi:\mathcal{S} \to \Delta(\mathcal{A})$, receives a stage-wise reward $\mathcal{R}(S_k, A_k)$, and the environment transitions to a new state $S_{k+1} \sim p(\cdot \mid S_k, A_k)$. This process then repeats. Importantly, the parameters of the stochastic model (e.g., the transition kernel and the reward function) are unknown to the agent, who must learn by interacting with the environment.

The goal of the agent is to find a policy that maximizes the cumulative reward. Specifically, given a policy $\pi$, its quality is characterized by the Q-function $Q^\pi:\mathcal{S} \times \mathcal{A} \to \mathbb{R}$, defined as $ Q^\pi(s,a)
    = \mathbb{E}_\pi[\sum_{k=0}^\infty \gamma^k \mathcal{R}(S_k, A_k) \mid S_0 = s,\, A_0 = a ]$ for all $(s,a)$,
where $\mathbb{E}_\pi[\cdot]$ denotes the expectation under the policy $\pi$, that is, $A_k \sim \pi(\cdot \mid S_k)$ for all $k \geq 1$. Since we work with a finite MDP, the Q-function can be equivalently viewed as a vector in $\mathbb{R}^{|\mathcal{S}||\mathcal{A}|}$. 
Moreover, for notational simplicity, for any $x \in \mathbb{R}^{|\mathcal{S}||\mathcal{A}|}$ (where $x$ may represent either a Q-function or a policy), we denote by $x(s)$ the $|\mathcal{A}|$-dimensional vector whose $a$-th entry is $x(s,a)$. 
With the Q-function defined, a policy $\pi^*$ is said to be optimal if $Q^*(s,a) := Q^{\pi^*}(s,a) \geq Q^\pi(s,a)$ for all policies $\pi$ and all state-action pairs $(s,a)$. Although this formulation corresponds to a multi-objective optimization problem, it is well known that such an optimal policy always exists \cite{puterman2014markov}. 

The key to finding an optimal policy is the Bellman equation:
\begin{align}\label{eq:Bellman}
    \mathcal{H}(Q)=Q,
\end{align}
where $\mathcal{H}:\mathbb{R}^{|\mathcal{S}||\mathcal{A}|}\to \mathbb{R}^{|\mathcal{S}||\mathcal{A}|}$ is the Bellman operator defined as
\begin{align}\label{def:Bellman_H}
    [\mathcal{H}(Q)](s,a)=\mathcal{R}(s,a)+\gamma\sum_{s'}p(s'|s,a)\max_{a'}Q(s',a'),\quad \forall\,(s,a).
\end{align}
It has been shown in the literature that the Bellman equation \eqref{eq:Bellman} admits a unique solution—the optimal Q-function $Q^*$. Once $Q^*$ is known, an optimal policy $\pi^*$ can be obtained by choosing actions greedily with respect to $Q^*$ \cite{puterman2014markov,bertsekas1996neuro}.

To solve the Bellman equation \eqref{eq:Bellman}, note that $\mathcal{H}(\cdot)$ is a contraction mapping with respect to $\|\cdot\|_\infty$ \cite{puterman2014markov}. A natural approach is therefore to perform the fixed-point iteration $Q_{k+1} = \mathcal{H}(Q_k)$, also known as \emph{Q-value iteration}, which converges geometrically to $Q^*$ by the Banach fixed-point theorem \cite{banach1922operations}. While Q-value iteration is theoretically appealing, it is not implementable in RL because the transition kernel and reward function of the underlying MDP are unknown. This limitation motivates \emph{Q-learning} \cite{watkins1992q}, a data-driven stochastic approximation method for solving the Bellman equation, which we introduce next.

\subsection{Q-Learning with Time-Varying Learning Policies}\label{subsec:sarsa}
The Q-learning algorithm, first introduced in \cite{watkins1992q}, is summarized in Algorithm~\ref{algo:Q-learning}. At iteration $k$, the algorithm computes a learning policy $\pi_k$ from the current estimate $Q_k$ via a mapping $f(\cdot)$, which is discussed in detail below. The agent then samples a transition using $\pi_k$ and updates $Q_k$ via a stochastic approximation algorithm for solving the Bellman equation~\eqref{eq:Bellman}, where $\alpha$ denotes the stepsize (learning rate).

\begin{algorithm}[ht]\caption{Q-Learning with Time-Varying Learning Policies}\label{algo:Q-learning}
	\begin{algorithmic}[1]
		\STATE \textbf{Input:} Integer $K$, initialization $Q_0\in\mathbb{R}^{|\mathcal{S}||\mathcal{A}|}$ satisfying $\|Q_0\|_\infty\leq 1/(1-\gamma)$ and $S_0\in\mathcal{S}$.
		\FOR{$k=0,1,2,\cdots,K-1$}
        \STATE $\pi_k(\cdot\mid S_k)=[f(Q_k)](S_k,\cdot)$
        \STATE Take $A_{k}\sim \pi_k(\cdot\mid S_{k})$, receive $\mathcal{R}(S_k,A_k)$, and observe $S_{k+1}\sim p(\cdot| S_k,A_k)$
		\STATE Update the Q-function according to 
        \begin{align*}
            Q_{k+1}(s,a)=Q_k(s,a)+\alpha\mathds{1}_{\{(S_k,A_k)=(s,a)\}}\left(\mathcal{R}(S_k,A_k)+\gamma \max_{a'} Q_k(S_{k+1},a')-Q_k(S_k,A_k)\right)
        \end{align*}
        for all $(s,a)\in\mathcal{S}\times \mathcal{A}$.
		\ENDFOR
        \STATE \textbf{Output:} $\{Q_k\}_{0\leq k\leq K}$ and $\{\pi_k\}_{0\leq k\leq K}$
	\end{algorithmic}
\end{algorithm} 

As for the function $f(\cdot)$, when it is constant, that is, $f(Q)\equiv \pi_b$ for any $Q\in\mathbb{R}^{|\mathcal{S}||\mathcal{A}|}$, the learning policy is stationary, which corresponds to off-policy Q-learning. Motivated by practical implementations, we instead consider time-varying learning policies. 

To introduce this setting in a general manner, let $\nu:\Delta(\mathcal{A})\to \mathbb{R}$ be any closed, nonnegative, and strongly concave function, and denote $\nu_{\max}=\max_{\mu\in\Delta(\mathcal{A})}\nu(\mu)$. Without loss of generality, we assume that the strong concavity is with respect to the $\ell_1$ norm $\|\cdot\|_1$ (since all norms are equivalent up to multiplicative constants) and that the strong concavity parameter is $1$ (since any other value can be obtained by rescaling $\nu(\cdot)$). Let $\sigma:\mathbb{R}^{|\mathcal{A}|}\to \Delta(\mathcal{A})$ denote the softmax operator induced by $\nu(\cdot)$, defined as
$\sigma(x)=\argmax_{\mu\in\Delta(\mathcal{A})}\{\mu^\top x+\nu(\mu)\}$. For any $Q\in\mathbb{R}^{|\mathcal{S}||\mathcal{A}|}$, we define
\begin{align}\label{def:softmax_general}
    [f(Q)](s)
    = \epsilon\, \frac{\mathbf{1}}{|\mathcal{A}|}
    + (1-\epsilon)\, \sigma\!\left(\frac{Q(s)}{\tau}\right),
    \quad \forall\, s\in\mathcal{S},
\end{align}
where $\mathbf{1}\in\mathbb{R}^{|\mathcal{A}|}$ is the all-ones vector, and $\epsilon\in(0,1]$ and $\tau>0$ are tunable parameters. Also, recall our notation that $[f(Q)](s)$ denotes the $|\mathcal{A}|$-dimensional vector with its $a$-th entry given by $[f(Q)](s,a)$; the same convention applies to $Q(s)$ and $\pi(s)$. A representative example of $\nu(\cdot)$ is the entropy function $\nu(\mu)=-\sum_{a\in\mathcal{A}} \mu(a)\log \mu(a)$, in which case $\sigma(\cdot)$ reduces to the exponential softmax defined as $[\sigma(x)](a)=e^{x(a)}/\sum_{a'\in\mathcal{A}} e^{x(a')}$ for all $a\in\mathcal{A}$.

The rationale for adopting learning policies of the form \eqref{def:softmax_general} is twofold. First, the convex combination with a uniform policy enables explicit control of exploration via $\epsilon$, since $\min_{s,a}\pi_k(a\mid s)\ge \epsilon/|\mathcal{A}|$ for all $k\ge 0$. Second, using a softmax policy rather than the exact greedy policy ensures that the learning policy is Lipschitz continuous with respect to $Q_k$ by the conjugate correspondence theorem \cite[Theorem~5.26]{beck2017first}. This regularity is crucial for our Poisson equation–based analysis of the time-inhomogeneous Markov chain $\{(S_k,A_k)\}$ induced by Algorithm~\ref{algo:Q-learning}. More details are provided in Section~\ref{sec:proof1}. Beyond single-agent Q-learning \cite{tokic2011value,barber2023smoothed}, similar policy structures have been used in Q-learning for normal-form games \cite{leslie2005individual} and in Q-learning with linear function approximation \cite{meyn2024projected,liu2025linear}. More broadly, there is a line of work on RL for entropy-regularized MDPs, where the learning policy is inherently an exponential softmax \cite{haarnoja2017reinforcement}.

Note that while $\tau>0$ can be chosen arbitrarily in \eqref{def:softmax_general}, we do not allow $\tau=0$, which would correspond to the $\epsilon$-greedy policy in~\eqref{def:softmax_general}. Although Q-learning with $\epsilon$-greedy policies is empirically popular \cite{mnih2015human} and has been studied in terms of asymptotic convergence \cite{bertsekas1996neuro}, to the best of our knowledge it has not been analyzed from a finite-time perspective. We leave such an analysis as an interesting direction for future work.

We conclude this section with a brief remark. Although we adopt a constant stepsize $\alpha$ and constant exploration parameters $\epsilon$ and $\tau$ for ease of presentation, most of our analysis extends to time-varying sequences $\{\alpha_k\}$, $\{\epsilon_k\}$, and $\{\tau_k\}$. Further details are provided in Sections~\ref{sec:proof1} and~\ref{sec:proof2}.

\section{Main Results}\label{sec:main results}
This section presents our main theoretical findings. We begin by stating our assumption.

\begin{assumption}\label{as:MC}
    There exists a policy $\pi_b$ such that the Markov chain $\{S_k\}$ induced by $\pi_b$ is irreducible.
\end{assumption}
\begin{remark}\label{remark}
   Note that $\pi_b$ need not be visited along the algorithmic trajectory of Algorithm~\ref{algo:Q-learning}; rather, it should be viewed as an algorithm-independent assumption on the underlying MDP that characterizes its inherent exploration capability. 
   In Section~\ref{subsec:ass_necessity}, we show that Assumption~\ref{as:MC} is necessary even for the asymptotic convergence of Q-learning, which justifies this assumption as minimal.
   Without loss of generality, we assume that $\pi_b(a \mid s) > 0$ for all $(s,a)$, which will serve as the standing assumption throughout the rest of this paper. See Appendix~\ref{ap:behavior_policy} for a proof.
\end{remark}

Assumption~\ref{as:MC} is considerably weaker than those adopted in prior studies of Q-learning. Even in the off-policy setting (where the learning policy $\pi$ is stationary), it is typically assumed that $\pi$ induces an irreducible and aperiodic Markov chain \cite{chen2024lyapunov,qu2020finite,li2020sample}, with only a few recent exceptions \cite{haque2024stochastic,chandak2022concentration}. In the case of on-policy Q-learning \cite{liu2025linear}, and more broadly for RL algorithms with time-varying learning policies—such as SARSA \cite{zou2019finite,chenziyi2022sample} and actor--critic methods \cite{khodadadian2021finite,chenziyi2022sample,chenzy2021sample,xu2021sample,wu2020finite,qiu2019finite}—it is typically assumed that every learning policy along the algorithmic trajectory, or even all policies, induces a uniformly ergodic Markov chain, with stationary distributions uniformly bounded away from zero and mixing parameters uniformly bounded from above. See Appendix \ref{ap:ass_necessity} for a detailed discussion. By adopting a much weaker assumption, our framework not only provides a theoretical contribution but also enables a quantitative characterization of the exploration–exploitation trade-off in Q-learning with on-policy sampling, as demonstrated later in Section~\ref{subsec:main-results}.

The following notation is needed throughout this paper. Let $P_{\pi_b}\in\mathbb{R}^{|\mathcal{S}|\times |\mathcal{S}|}$ denote the transition matrix of the Markov chain $\{S_k\}$ induced by $\pi_b$, and define $\pi_{b,\min} := \min_{s,a} \pi_b(a \mid s)$, which is strictly positive. Since we work with finite MDPs, under Assumption~\ref{as:MC}, the Markov chain $\{S_k\}$ with transition matrix $P_{\pi_b}$ admits a unique stationary distribution \cite{levin2017markov}, denoted by $\mu_{\pi_b}\in\Delta(\mathcal{S})$, satisfying $\mu_{\pi_b,\min} := \min_s \mu_{\pi_b}(s) > 0$. Define $\mathcal{P}_{\pi_b}$ as the transition matrix of the corresponding lazy chain, i.e., $\mathcal{P}_{\pi_b} = (P_{\pi_b} + I)/2$. It is straightforward to verify that the Markov chain under $\mathcal{P}_{\pi_b}$ is \textit{irreducible and aperiodic}, sharing the same stationary distribution $\mu_{\pi_b}$. Moreover, there exist $r_b \in \mathbb{Z}_+$ and $\delta_b > 0$ such that $\min_{s,s'} \mathcal{P}_{\pi_b}^{r_b}(s,s') \geq \delta_b$ \cite[Proposition 1.7]{levin2017markov}. Importantly, the lazy chain is introduced solely for analytical purposes, while the actual sample trajectory in Algorithm~\ref{algo:Q-learning} is generated by the sequence of time-varying learning policies $\{\pi_k\}$. Before proceeding, we emphasize that the constants $\pi_{b,\min}$, $\mu_{\pi_b,\min}$, $r_b$, and $\delta_b$ reflect fundamental exploration properties of the underlying MDP, rather than being algorithm-dependent quantities.

\subsection{Finite-Time Analysis}\label{subsec:main-results}

We now present our main result.

\begin{theorem}\label{thm1}
    Suppose that Assumption~\ref{as:MC} holds and that the stepsize and exploration parameters satisfy $\alpha < 1/c_1$, $\epsilon \in (0,1]$, and $\tau \in (0,1/(1-\gamma)]$, where $c_1 = \frac{1}{2}(\epsilon/|\mathcal{A}|)^{r_b}\mu_{\pi_b,\min}\delta_b(1-\gamma)$. Then, the following inequality holds for all $k \geq 0$:
    \begin{align*}
        \mathbb{E}[\|Q_{k}-Q^*\|_{\infty}^{2}]\leq \underbrace{3 \|Q_0-Q^*\|_{\infty}^{2} \left(1 - \alpha c_{1} \right)^{k}}_{\mathrm{Bias}}  + \underbrace{c_2 \alpha + c_3 \alpha^{2}\log^{4}\left(\frac{c_4}{\alpha}\right) }_{\mathrm{Variance}}, 
    \end{align*}
    where 
    \begin{align*}
        c_{2} = \,&\frac{c_2'(r_b+1)\log(|\mathcal{S}||\mathcal{A}|) }{ \lambda^{3r_{b}+1}\pi_{b,\min}\mu_{\pi_b,\min}^3\delta_b^3(1-\gamma)^4},\\
        c_{3} =\,& \frac{c_3'(r_b+1)^4}{\tau^2\lambda^{6r_b+4}\mu_{\pi_b,\min}^6\pi_{b,\min}^4\delta_b^6(1-\gamma)^6},\quad c_4=\frac{4(r_b+1)}{\delta_{b} \lambda^{r_{b}+1} \mu_{\pi_b,\min} \pi_{b,\min}},
    \end{align*}
    with $\lambda:= \epsilon/|\mathcal{A}|$ and $c_2',c_3'$ being absolute constants.
\end{theorem}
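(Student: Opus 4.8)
The plan is to cast Algorithm~\ref{algo:Q-learning} as a stochastic approximation recursion $Q_{k+1} = Q_k + \alpha(\mathcal H(Q_k) - Q_k + w_k)$, where the noise $w_k$ decomposes into (i) the sampling noise from observing only the coordinate $(S_k,A_k)$, and (ii) the Markovian noise from the fact that $(S_k,A_k)$ is drawn from the time-inhomogeneous chain induced by $\pi_k$ rather than from $\mu_{\pi_b}$. I would work with a Lyapunov function of the form $M_k(Q_k-Q^*) := \max_{s,a}(Q_k-Q^*)(s,a)/z(s,a)$ (or its squared smoothed variant), where $z$ is the Metra/generalized-Perron weight associated with the $\ell_\infty$-contraction $\mathcal H$; in the tabular case this reduces essentially to the weighted $\ell_\infty$ norm, and the key point is that $\mathcal H$ is a $\gamma$-contraction, so in expectation the drift pulls $Q_k$ toward $Q^*$ at rate $(1-\gamma)$ — but only on the coordinate that is actually updated, which is why the effective contraction rate picks up a factor proportional to $\min_{s,a}\mathbb P(\text{update }(s,a))$.

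The first substantive step is to lower-bound, uniformly in $k$, the probability that any fixed state-action pair is updated within a window of length $r_b$. Here the minimal-assumption difficulty enters: because $\pi_k$ is arbitrary and time-varying, the induced chain need not be uniformly ergodic, so I cannot directly invoke mixing. Instead I would use the comparison argument implicit in the constants: every $\pi_k$ satisfies $\pi_k(a\mid s)\ge \lambda \ge \epsilon/|\mathcal A|$, hence the $k$-step transition probabilities of the learning chain dominate $\lambda^{r_b}$ times those of $P_{\pi_b}$ (restricted to the support), and after passing to the lazy chain $\mathcal P_{\pi_b}=(P_{\pi_b}+I)/2$ one has the uniform minorization $\min_{s,s'}\mathcal P_{\pi_b}^{r_b}(s,s')\ge\delta_b$. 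Combining this with $\mu_{\pi_b,\min}>0$ and $\pi_{b,\min}>0$ yields a uniform-in-$k$ lower bound $\asymp \lambda^{r_b}\mu_{\pi_b,\min}\delta_b$ on the per-window update probability of each $(s,a)$; multiplied by the $(1-\gamma)$ contraction this produces exactly $c_1$. This is the step I expect to be the main obstacle — making the lazy-chain comparison quantitative and uniform over the unknown trajectory of learning policies, while keeping track of the window length $r_b$ in the recursion.

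Next I would handle the Markovian noise via the Poisson equation, as signposted in the introduction: for a fixed $Q$, let $\hat V_{P,g}$ solve $(I-P)\hat V = g - \mu_P(g)$ for the lazy transition matrix $P=\mathcal P_{\pi_b}$ (which is well-posed by irreducibility and aperiodicity of the lazy chain) with forcing function $g$ built from the Bellman increment; then telescoping $\hat V$ along the trajectory converts the Markovian noise into a martingale-difference term plus boundary terms plus a term measuring how much $\hat V$ changes from step to step. The change from step to step is controlled by the sensitivity/almost-Lipschitz bound of Proposition~\ref{prop:MC}: $\hat V$ is Lipschitz in the forcing function $g$ (hence in $Q_k$, using the $\|Q_k-Q_{k+1}\|\lesssim\alpha$ one-step increment) and almost-Lipschitz in the transition matrix (hence in $\pi_k$, using that $f_k$ is Lipschitz in $Q_k$ with a constant $\lesssim 1/\tau$, which is the source of the $1/\tau^2$ in $c_3$). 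The martingale-difference term contributes the $c_2\alpha$ variance floor after a standard second-moment recursion (unrolling $\mathbb E[M_{k+1}^2]\le(1-\alpha c_1)\mathbb E[M_k^2] + O(\alpha^2\cdot\text{noise variance})$ and summing the geometric series); the residual Poisson terms contribute the $c_3\alpha^2\log^4(c_4/\alpha)$ term, where the $\log^4$ arises from the standard device of choosing a burn-in/window index $\asymp\log(1/\alpha)$ to split early iterations (whose iterates are not yet bounded by $O(1/(1-\gamma))$) from later ones, raised to the fourth power because the sensitivity bound is applied to quantities that themselves already carry up to two logarithmic factors from the boundedness of the iterates and from nested telescoping over the length-$r_b$ window.

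Finally I would assemble the pieces: $\|Q_k\|_\infty\le 1/(1-\gamma)$ for all $k$ (an easy induction from $\|Q_0\|_\infty\le 1/(1-\gamma)$ and the contraction form of the update, since the Bellman target lies in the same ball), which bounds all noise terms by absolute multiples of $1/(1-\gamma)$; substitute into the second-moment recursion; unroll to get $\mathbb E[M_k^2]\le 3 M_0^2(1-\alpha c_1)^k + \text{(variance)}$, where the factor $3$ absorbs the equivalence constants between $M_k$ and $\|\cdot\|_\infty$ and the constant in front of the geometric sum; and track the explicit dependence of every constant on $(\lambda,\mu_{\pi_b,\min},\delta_b,r_b,\pi_{b,\min},1-\gamma,\tau)$ to match $c_1,\dots,c_4$. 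The condition $\tau\le 1/(1-\gamma)$ is used only to simplify the Lipschitz constant of $f_k$ (so that $Q_k/\tau$-type quantities are controlled), and $\alpha<1/c_1$ guarantees $1-\alpha c_1\in(0,1)$ so the geometric series converges.
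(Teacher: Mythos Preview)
Your high-level plan---SA reformulation, smoothed-$\ell_\infty$ Lyapunov drift, Poisson-equation decomposition of the Markovian noise, and a lazy-chain comparison to $\pi_b$ to make all constants explicit---matches the paper's architecture. But there is one genuine gap and two smaller misattributions.

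\textbf{The gap: which Poisson equation?} You propose to solve $(I-P)\hat V = g-\mu_P(g)$ for the \emph{fixed} lazy chain $P=\mathcal P_{\pi_b}$. This does not produce a martingale difference: the trajectory $\{Y_k\}$ transitions according to $\bar P_{\pi_k}$, not $\mathcal P_{\pi_b}$, so $\hat V(Y_{k+1})-(\mathcal P_{\pi_b}\hat V)(Y_k)$ carries a bias $[(\bar P_{\pi_k}-\mathcal P_{\pi_b})\hat V](Y_k)$ that is $O(1)$ (there is no reason $\pi_k$ should be close to $\pi_b$, and the lazy kernel is not even the actual one-step law under $\pi_b$). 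The paper instead writes the Poisson equation for the \emph{time-varying} kernel $\bar P_{\pi_k}$ itself (Eq.~\eqref{eq:PE-SARSA-1}), so that $h(Q_k,\pi_k,Y_{k+1})-\sum_{y'}\bar P_k(Y_k,y')h(Q_k,\pi_k,y')$ is a genuine martingale difference; the lazy chain $\bar{\mathcal P}_{\pi_k}$ enters only as an analytic device to represent the solution as $\tfrac12\sum_{n\ge 0}\bar{\mathcal P}_{\pi_k}^n y$ and bound it via mixing (Proposition~\ref{prop:MC}), with Lemma~\ref{le:mixing-parameters} then comparing $\bar{\mathcal P}_{\pi_k}$ to $\bar{\mathcal P}_{\pi_b}$ to extract the primitive constants $(\mu_{\pi_b,\min},\pi_{b,\min},\delta_b,r_b)$. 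Your later sentence about $\hat V$ being ``almost-Lipschitz in the transition matrix (hence in $\pi_k$)'' is in fact the right idea and is exactly what Proposition~\ref{prop:MC}(2) delivers---but it only makes sense once the Poisson equation is indexed by $\pi_k$, not by the fixed $\pi_b$; as written, your two paragraphs are mutually inconsistent.

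\textbf{Smaller points.} The $\log^4(c_4/\alpha)$ does not come from a burn-in argument: $\|Q_k\|_\infty\le 1/(1-\gamma)$ holds for \emph{all} $k\ge 0$ by induction from the initialization, so there are no ``early iterations whose iterates are not yet bounded.'' The logarithms come directly from the sensitivity bound in Proposition~\ref{prop:MC}(2), which carries a factor $\bigl(\log(\|\bar P_{k+1}-\bar P_k\|_\infty)/\log\bar\rho_{\max}\bigr)^2$; since $\|\bar P_{k+1}-\bar P_k\|_\infty\le\|\pi_{k+1}-\pi_k\|_\infty\lesssim\alpha/[\tau(1-\gamma)]$, this becomes $\log^2(c_4/\alpha)$ and is then squared in $N_k^2$ (Lemma~\ref{le:E24}) to give $\log^4$. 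Also, the drift coefficient $c_1$ in the paper is obtained not by a ``per-window update probability'' argument but by lower-bounding the stationary distribution $\mu_{\pi_k}(s)$ directly via the entrywise inequality $\mathcal P_{\pi_k}\ge\lambda\,\mathcal P_{\pi_b}$ together with $\mu_{\pi_k}^\top=\mu_{\pi_k}^\top\mathcal P_{\pi_k}^{r_b}\ge \lambda^{r_b}\mu_{\pi_k}^\top\mathcal P_{\pi_b}^{r_b}\ge \lambda^{r_b}\delta_b\mu_{\pi_b,\min}$ (Lemma~\ref{le:negDrift}); this feeds into the contraction factor $\gamma_{\pi_k}=1-D_{\pi_k,\min}(1-\gamma)$ of the averaged operator $\bar F(\cdot,\pi_k)$, not into a windowed conditioning bound.
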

\begin{remark}
Let $\bar{\pi}_k$ be a policy greedily induced by $Q_k$, that is,
$\{a\in\mathcal{A}\mid \bar{\pi}_k(a\mid s)>0\}\subseteq \arg\max_{a\in\mathcal{A}} Q_k(s,a)$
for all $s\in\mathcal{S}$, which is different from the learning policy $\pi_k$. Then, with probability one, we have
$\|Q^{\bar{\pi}_k}-Q^*\|_\infty
\le \frac{2\gamma}{1-\gamma}\,\|Q_k-Q^*\|_\infty$
(cf.~Lemma~\ref{le:Q-function-gap}). Therefore, up to a constant factor, the convergence rate of the iterates $Q_k$ translates directly into the convergence rate of the Q-function corresponding to the greedily induced policy $\bar{\pi}_k$.
\end{remark}

The convergence bound shows that the mean-square error decays geometrically to a neighborhood of radius $\mathcal{O}(\alpha)$. The first term on the right-hand side corresponds to the \emph{bias}, capturing the decay of the error due to initialization, while the second term corresponds to the \emph{variance}. Since a constant stepsize cannot eliminate the variance even asymptotically, the steady-state error scales with the chosen stepsize. This bias–variance trade-off is consistent with existing results for off-policy Q-learning and, more generally, stochastic approximation algorithms with constant stepsizes \cite{srikant2019finite,bhandari2018finite,chen2024lyapunov,zhang2024constant}.

Additionally, we emphasize that the convergence bound is expressed entirely in terms of either primitive algorithm design parameters (e.g., $\alpha$, $\epsilon$, and $\tau$) or algorithm-independent parameters (e.g., $1/(1-\gamma)$, $\mu_{\pi_b,\min}$, $\pi_{b,\min}$, $r_b$, and $\delta_b$), with \emph{no implicit algorithm-dependent constants}. Such explicit quantification is crucial for understanding how exploration limitations affect Q-learning with on-policy sampling. In particular, the exploration behavior depends on both the learning policies $\pi_k$ and the intrinsic properties of the MDP. While the lower bound $\lambda=\epsilon/|\mathcal{A}|$ on the policies captures the degree of exploration induced by $\pi_k$, the parameters $\delta_b$, $r_b$, $\pi_{b,\min}$, and $\mu_{\pi_b,\min}$ characterize the intrinsic exploration capacity of the MDP. Smaller values of $\lambda$, $\delta_b$, $\pi_{b,\min}$, and $\mu_{\pi_b,\min}$, or a larger $r_b$, make it more difficult to explore the state--action space. Quantitatively, this leads to a smaller $c_1$ (slower bias decay) and larger $c_2$, $c_3$, and $c_4$ (higher variance). The effect of these parameters is further reflected in the sample complexity discussed next.

\begin{corollary}\label{coro1}
    For a given $\xi > 0$, the sample complexity to achieve $\mathbb{E}[\|Q_{k} - Q^*\|_{\infty}] \leq \xi$ is 
    \begin{align*}        \mathcal{O}\left(\frac{(r_b+1)\log\left(3\|Q_0 - Q^*\|_\infty/\xi\right)}{\lambda^{4r_b+2}\mu_{\pi_b,\min}^4\pi_{b,\min}\delta_b^4(1-\gamma)^4}\max\left(\frac{\log(|\mathcal{S}||\mathcal{A}|) }{ (1-\gamma)}\frac{1}{\xi^2},\frac{r_b+1}{\tau\lambda\pi_{b,\min}}\frac{1}{\xi}\right)\right).
    \end{align*}
\end{corollary}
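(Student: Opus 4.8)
The plan is to derive Corollary~\ref{coro1} directly from Theorem~\ref{thm1} by choosing the stepsize $\alpha$ appropriately so that both the bias term and the variance term are each at most $\xi^2/2$ (say), and then counting the number of iterations needed. Recall that one iteration of Algorithm~\ref{algo:Q-learning} consumes one sample transition, so the sample complexity equals the number of iterations $k$.

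First I would handle the \emph{variance} term. From Theorem~\ref{thm1} the variance is $c_2\alpha + c_3\alpha^2\log^4(c_4/\alpha)$. The dominant piece for small $\alpha$ is $c_2\alpha$, so I would first require $c_2\alpha \le \xi^2/4$, i.e. $\alpha \le \xi^2/(4c_2)$; for the second piece, since $\alpha^2\log^4(c_4/\alpha)$ is $\tilde{\mathcal O}(\alpha^2)$, requiring $\alpha \lesssim \xi^2/(c_3^{1/2}\,\mathrm{polylog})$ suffices, but one must be slightly careful because of the $\log^4(c_4/\alpha)$ factor — I would absorb this into a $\tilde{\mathcal O}(\cdot)$ / treat $\log(c_4/\alpha)$ as $\mathcal{O}(\log(c_4/\xi^2))$ which is logarithmic in the problem parameters, and then demand $c_3\alpha^2\log^4(c_4/\alpha) \le \xi^2/4$. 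Combining, the binding constraint on $\alpha$ is
\begin{align*}
    \alpha \;=\; \Theta\!\left(\min\!\left(\frac{\xi^2}{c_2},\; \frac{\xi}{\sqrt{c_3}\,\mathrm{polylog}}\right)\right),
\end{align*}
and I would also ensure $\alpha < 1/c_1$ (this is implied once $\xi$ is small enough, since $c_1/c_2 \lesssim 1$, but it should be stated). Plugging in the explicit forms of $c_2$ and $c_3$ from Theorem~\ref{thm1}, and noting that $1/\sqrt{c_3} = \Theta(\tau\lambda^{3r_b+2}\mu_{\pi_b,\min}^3\pi_{b,\min}^2\delta_b^3(1-\gamma)^3/(r_b+1)^2)$, the two candidate values of $\alpha$ can be compared, which is where the $\max(\cdot,\cdot)$ in the corollary's statement comes from (the $1/\xi^2$ branch versus the $1/\xi$ branch).

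Next I would handle the \emph{bias} term $3\|Q_0-Q^*\|_\infty^2(1-\alpha c_1)^k$. Requiring this to be at most $\xi^2/2$ and using $1-\alpha c_1 \le e^{-\alpha c_1}$ gives
\begin{align*}
    k \;\ge\; \frac{1}{\alpha c_1}\log\!\left(\frac{6\|Q_0-Q^*\|_\infty^2}{\xi^2}\right) \;=\; \frac{2}{\alpha c_1}\log\!\left(\frac{\sqrt{6}\,\|Q_0-Q^*\|_\infty}{\xi}\right).
\end{align*}
Substituting the value of $\alpha$ chosen above and the explicit form $c_1 = \tfrac12\lambda^{r_b}\mu_{\pi_b,\min}\delta_b(1-\gamma)$, then simplifying $1/(\alpha c_1)$ — which equals $\max(c_2/(\xi^2 c_1), \sqrt{c_3}\,\mathrm{polylog}/(\xi c_1))$ up to constants — produces exactly the claimed expression, with the $1/(\lambda^{4r_b+2}\mu_{\pi_b,\min}^4\pi_{b,\min}\delta_b^4(1-\gamma)^4)$ prefactor arising from $c_2/c_1$, and the $\max$ of $\log(|\mathcal S||\mathcal A|)/((1-\gamma)\xi^2)$ and $(r_b+1)/(\tau\lambda\pi_{b,\min}\xi)$ arising from $\max(c_2/(\xi^2 c_1)\cdot(\cdots), \sqrt{c_3}/(\xi c_1)\cdot(\cdots))$ after cancelling common factors. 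The final step is just bookkeeping to confirm the exponents of $\lambda,\mu_{\pi_b,\min},\pi_{b,\min},\delta_b,(1-\gamma)$ match.

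The main obstacle is purely computational bookkeeping: carefully tracking how the explicit constants $c_1,c_2,c_3,c_4$ combine when $\alpha$ is set to the minimum of two quantities, and verifying that the $\log^4(c_4/\alpha)$ factor is genuinely lower-order (so it can be swept into the $\tilde{\mathcal O}$ or into a logarithmic term) rather than affecting the polynomial dependence. There is no conceptual difficulty — it is an optimization of $\alpha$ followed by substitution — but getting every exponent of the exploration parameters exactly right in the two-branch $\max$ requires care.
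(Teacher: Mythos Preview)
Your proposal is correct and follows essentially the same approach as the paper: the paper also applies Jensen's inequality to reduce to $\mathbb{E}[\|Q_k-Q^*\|_\infty^2]\le \xi^2$, splits the bound from Theorem~\ref{thm1} into pieces (using $\xi^2/3$ each rather than your $\xi^2/2,\xi^2/4$), ignores the logarithmic factor in the $c_3\alpha^2$ term, sets $1/\alpha \ge \max(3c_2/\xi^2,\sqrt{3c_3}/\xi)$, and then uses $1-\alpha c_1\le e^{-\alpha c_1}$ to obtain $k\ge \tfrac{2}{c_1}\log(3\|Q_0-Q^*\|_\infty/\xi)\cdot\max(3c_2/\xi^2,\sqrt{3c_3}/\xi)$ before substituting the explicit forms of $c_1,c_2,c_3$. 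The only differences are cosmetic constant choices and that the paper is terser about the bookkeeping you flag as the main obstacle.
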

The proof of Corollary~\ref{coro1} is provided in Appendix~\ref{pf:coro1}. In terms of the dependence on the accuracy level $\xi$, the leading-order term is $\widetilde{\mathcal{O}}(1/\xi^{2})$, which matches that of off-policy Q-learning \cite{li2020sample,qu2020finite,even2003learning,chen2024lyapunov}. However, the dependence on other problem-specific constants, such as the effective horizon $1/(1-\gamma)$ and the size of the state--action space $|\mathcal{S}||\mathcal{A}|$ (which lower bounds $\mu_{\pi_b,\min}\pi_{b,\min}$), is worse than that of off-policy Q-learning \cite{li2020sample}. This behavior is expected, since Q-learning with on-policy sampling faces greater difficulty in exploring the entire state--action space, whereas off-policy Q-learning typically assumes a stationary learning policy, often uniform. In Section~\ref{sec:numerical}, we present numerical simulations confirming that on-policy Q-learning indeed converges more slowly than off-policy Q-learning.

While on-policy Q-learning exhibits a slower convergence rate (measured in $\mathbb{E}[\|Q_k - Q^*\|_\infty^2]$) compared to off-policy Q-learning, an important advantage is that the learning policies $\pi_k$ also converge to an optimal one, as opposed to remaining stationary in off-policy Q-learning. The explicit convergence rate is characterized in the following theorem.

\begin{theorem}\label{thm2}
    Under the same assumptions as those for Theorem \ref{thm1}, the following inequality holds for all $k \geq 0$. 
    \begin{align*}
        \mathbb{E}[\|Q^{\pi_{k}} - Q^*\|_{\infty}^{2}] \leq\; & \underbrace{ \frac{12\gamma^2}{(1-\gamma)^2} \mathbb{E}[\|Q_{k} - Q^*\|_{\infty}^{2}]}_{T_{1}} +  \underbrace{\frac{12 \epsilon^2}{(1-\gamma)^{4}}  + \frac{3\tau^2\nu_{\max}^2 }{(1-\gamma)^{2}}}_{T_{2}},
    \end{align*}
    where $\nu_{\max}=\max_{\mu\in\Delta(\mathcal{A})}\nu(\mu)$.
\end{theorem}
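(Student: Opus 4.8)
### Proof Proposal for Theorem~\ref{thm2}

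The goal is to bound $\|Q^{\pi_k} - Q^*\|_\infty$ in terms of $\|Q_k - Q^*\|_\infty$ and the exploration parameters $\epsilon, \tau$. The natural starting point is the triangle inequality $\|Q^{\pi_k} - Q^*\|_\infty \le \|Q^{\pi_k} - Q^{g_k}\|_\infty + \|Q^{g_k} - Q^*\|_\infty$, where $g_k$ denotes the greedy policy with respect to $Q_k$. The second term is controlled by a standard policy-mismatch / performance-difference argument: since $g_k$ is greedy w.r.t.\ $Q_k$ and $\pi^*$ is greedy w.r.t.\ $Q^*$, the suboptimality of $g_k$ is quantified by how far $Q_k$ is from $Q^*$. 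Concretely, one uses the well-known bound $\|Q^{g_k} - Q^*\|_\infty \le \frac{2\gamma}{1-\gamma}\|Q_k - Q^*\|_\infty$ (this is the classical result that greedy policies w.r.t.\ near-optimal value functions are near-optimal; see e.g.\ Singh--Yee or Bertsekas--Tsitsiklis). Squaring and using $(a+b)^2 \le 2a^2 + 2b^2$ repeatedly will produce the $\frac{12\gamma^2}{(1-\gamma)^2}$ coefficient on $\mathbb{E}[\|Q_k - Q^*\|_\infty^2]$ in $T_1$.

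For the first term, $\|Q^{\pi_k} - Q^{g_k}\|_\infty$, I would use the fact that $Q^\pi$ depends Lipschitz-continuously on $\pi$ in total-variation-type distance: specifically, $\|Q^{\pi} - Q^{\pi'}\|_\infty \le \frac{1}{(1-\gamma)^2}\max_s \|\pi(\cdot\mid s) - \pi'(\cdot\mid s)\|_1$ (a standard simulation-lemma / perturbation bound, again following from the Bellman equation and a telescoping/Neumann-series argument, with one factor $\frac{1}{1-\gamma}$ from the horizon and another from the reward magnitude bound $\|Q\|_\infty \le 1/(1-\gamma)$). It then remains to bound $\max_s \|\pi_k(\cdot\mid s) - g_k(\cdot\mid s)\|_1$. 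Writing $\pi_k = \frac{\epsilon}{|\mathcal{A}|}\mathbf{1} + (1-\epsilon)\,\mathrm{softmax}(Q_k/\tau)$ from Eq.~\eqref{eq:policy}, I split the deviation from the greedy policy into two pieces: the $\epsilon$-mixing with the uniform policy contributes at most $O(\epsilon)$ in $\ell_1$, and the gap between $\mathrm{softmax}(Q_k/\tau)$ and the $\arg\max$ of $Q_k$ is controlled by $\tau$. The cleaner route for the latter is not to bound $\|\mathrm{softmax} - g_k\|_1$ directly (which can be bad when $Q_k$ has near-ties) but to bound the value gap: for the softmax policy $\sigma_k$ at temperature $\tau$, one has $\langle Q_k, \sigma_k(\cdot\mid s) - g_k(\cdot\mid s)\rangle \ge -\tau\log|\mathcal{A}|$ (this is the entropy-regularization bound — the softmax policy is the $\arg\max$ of $\langle Q_k,\cdot\rangle + \tau H(\cdot)$, and the entropy term is at most $\log|\mathcal{A}|$). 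This is exactly what produces the $\tau^2\log^2|\mathcal{A}|$ term in $T_2$. One has to be a little careful to route everything through value gaps rather than $\ell_1$ distances so that the near-tie pathology is avoided; I expect the proof to actually work with the chain $Q^* \ge Q^{\pi_k}$ and lower-bound $Q^{\pi_k}$ via the performance-difference lemma applied to $\pi_k$ versus $\pi^*$, collecting the advantage-function terms $\mathbb{E}_{s\sim d^{\pi_k}}[\langle A^*(s,\cdot), \pi_k(\cdot\mid s)\rangle]$ and bounding each by (greedy-gap w.r.t.\ $Q_k$) + (softmax entropy slack $\tau\log|\mathcal{A}|$) + ($\epsilon$-uniform slack, bounded by $\epsilon \cdot 2/(1-\gamma)$), then dividing by $1-\gamma$.

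The main obstacle is organizing the constants so they land exactly at $12$, $12$, and $3$, and in particular getting the right powers of $1/(1-\gamma)$: the greedy term naturally carries $\gamma/(1-\gamma)$, the $\epsilon$ term carries $1/(1-\gamma)^2$ from the performance-difference lemma applied to the $\epsilon/|\mathcal{A}|$ uniform perturbation (one horizon factor from the occupancy-measure sum and one from $\|Q^*\|_\infty \le 1/(1-\gamma)$, giving $\epsilon/(1-\gamma)^2$ before squaring, hence $\epsilon^2/(1-\gamma)^4$), and the softmax term carries only $\tau\log|\mathcal{A}|/(1-\gamma)$ before squaring. The bookkeeping with $(a+b+c)^2 \le 3a^2 + 3b^2 + 3c^2$ and then further $(x+y)^2 \le 2x^2+2y^2$ splits accounts for the remaining factors of $2$ and $3$. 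None of these steps is individually deep — the result is essentially a deterministic, pointwise inequality relating $Q^{\pi_k}$, $Q_k$, and the policy-construction parameters — so no probabilistic argument beyond taking expectations at the very end is needed; the care required is entirely in the constant-tracking and in choosing the value-gap formulation over naive $\ell_1$ bounds.
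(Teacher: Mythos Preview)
Your proposal lands on the right ingredients—the softmax value-gap bound $\max_a Q_k(s,a) - \langle Q_k(s,\cdot),\sigma_k(\cdot\mid s)\rangle \le \tau\log|\mathcal{A}|$, the $\epsilon$-uniform contribution of size $2\epsilon/(1-\gamma)$, and a $\|Q_k-Q^*\|_\infty$ term with coefficient $2\gamma/(1-\gamma)$—and your second route via the performance-difference lemma would indeed produce the stated bound. Your instinct to abandon the $\ell_1$ policy-Lipschitz bound on $\|Q^{\pi_k}-Q^{g_k}\|_\infty$ is correct: that step genuinely fails under near-ties in $Q_k$, and the greedy intermediary $g_k$ is not needed at all.

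The paper's route is different and somewhat cleaner: rather than invoking a performance-difference lemma, it works directly with the fixed-point equations $Q^{\pi_k}=\mathcal{H}_{\pi_k}(Q^{\pi_k})$ and $Q^*=\mathcal{H}(Q^*)$, inserts the intermediate terms $\mathcal{H}_{\pi_k}(Q_k)$ and $\mathcal{H}(Q_k)$, applies the $\gamma$-contraction of both operators, and then solves the resulting inequality for $\|Q^{\pi_k}-Q^*\|_\infty$. This yields in one shot
\[
\|Q^{\pi_k}-Q^*\|_\infty \le \frac{2\gamma}{1-\gamma}\|Q_k-Q^*\|_\infty + \frac{1}{1-\gamma}\|\mathcal{H}_{\pi_k}(Q_k)-\mathcal{H}(Q_k)\|_\infty,
\]
and the second term is exactly the per-state value gap $\gamma\big(\max_{a'}Q_k(s',a') - \sum_{a'}\pi_k(a'\mid s')Q_k(s',a')\big)$ that your softmax and $\epsilon$ estimates control. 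The final constants $12,12,3$ come from a \emph{single} application of $(a+b+c)^2\le 3(a^2+b^2+c^2)$ to the three-term bound above, not from nested $(a+b)^2\le 2a^2+2b^2$ splits as you suggested. Your performance-difference approach buys a direct interpretation in terms of occupancy measures and advantage functions; the paper's Bellman-operator approach buys brevity and avoids introducing $g_k$ or $d^{\pi_k}$ altogether.
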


The proof of Theorem~\ref{thm2} is presented in Section~\ref{sec:proof2}. Note that Theorem~\ref{thm2} quantitatively demonstrates an exploration–exploitation trade-off in on-policy Q-learning. Specifically, consider the following two cases.
\begin{itemize}
    \item \textbf{Small $\epsilon$ and $\tau$: The Exploitation-Dominated Regime.} Suppose we choose $\epsilon$ and $\tau$ close to zero. In this case, the learning policy $\pi_k$ becomes nearly greedy with respect to $Q_k$ and thus lacks sufficient exploration. As a result, the term $T_1$ is large, meaning that the convergence of $Q_k$ to $Q^*$ is slow, as clearly demonstrated by Theorem~\ref{thm1} and Corollary~\ref{coro1}. However, small values of $\epsilon$ and $\tau$ promote exploitation, since $Q_k$ eventually converges to $Q^*$ and $\pi_k$ remains almost greedy with respect to $Q_k$. In this case, the term $T_2$ is small.
    \item \textbf{Large $\epsilon$ and $\tau$: The Exploration-Dominated Regime.}  
    When $\epsilon$ and $\tau$ are large, in particular, $\epsilon\rightarrow  1$ or $\tau\rightarrow\infty$, the learning policy $\pi_k$ is nearly uniform and does not depend on the current estimate $Q_k$. This broad exploration accelerates the convergence of $Q_k$ to $Q^*$, making the term $T_1$ smaller. However, excessive exploration limits exploitation, preventing the policy from fully leveraging the learned $Q_k$ and leading to a persistent gap between $Q^{\pi_k}$ and $Q^*$, as captured by the term $T_2$ in the bound. In the extreme case where $\epsilon = 1$, the algorithm performs pure uniform exploration with no exploitation at all, effectively reducing to off-policy Q-learning.
\end{itemize}

Traditionally, the exploration–exploitation trade-off has been studied primarily in the context of online learning \cite{lattimore2020bandit}, where performance is measured by regret. In recent years, this line of research has been extended to RL, focusing mainly on the episodic setting \cite{jin2018q}—where regret is defined in terms of the averaged value function gap—and the infinite-horizon average-reward setting \cite{zhang2023sharper,wang2022near}, where a natural notion of regret is given by $\sum_{k=0}^{K-1}(r_k - r^*)$. In contrast, our work characterizes an exploration–exploitation trade-off in discounted Q-learning, with the performance metric being the \textit{last-iterate convergence rate}. Importantly, our minimal-assumption framework and explicit characterization of all parameter dependencies (cf. Theorem~\ref{thm1}) are crucial for capturing this trade-off in a precise and interpretable manner. We provide further discussion on the exploration--exploitation trade-off in on-policy Q-learning in Appendix~\ref{ap:Q_to_Policy}.

\subsection{The Necessity of Assumption \ref{as:MC}}\label{subsec:ass_necessity}

We conclude this section by showing that Assumption~\ref{as:MC} is necessary even for the asymptotic convergence of Q-learning. In particular, we construct an MDP instance such that, if Assumption~\ref{as:MC} is violated, there exists an initialization $Q_0$ and a constant $c>0$ for which $\|Q_k - Q^*\|_\infty \ge c$ almost surely for all $k \ge 0$.

Consider a finite Markov reward process (MRP) $\mathcal{M}=(\mathcal{S},P,\mathcal{R},\gamma)$, where $P\in\mathbb{R}^{|\mathcal{S}|\times|\mathcal{S}|}$ is the transition probability matrix, $R(s)\equiv 1$ for all $s\in\mathcal{S}$, and $\gamma=0$. An MRP can be viewed as a special case of an MDP with only one feasible action (and hence a single deterministic policy) in each state. Although there is no policy optimization in an MRP, this setting suffices for our purpose, since our goal is to demonstrate the necessity of Assumption~\ref{as:MC} for the convergence of Q-learning rather than for identifying an optimal policy.

In this simple setup, it is clear that $Q^*(s)=1$ for all $s\in\mathcal{S}$. Moreover, the Q-learning update rule presented in Algorithm~\ref{algo:Q-learning}, Line~5, becomes
\begin{align}\label{eq:Q_example}
    Q_{k+1}(s)=Q_k(s)+\alpha_k\mathds{1}_{\{S_k=s\}}(1-Q_k(s)),\quad \forall\,s\in\mathcal{S}.
\end{align}
The next proposition shows that if $P$ is not irreducible, we cannot hope for the global convergence of $Q_k$ to $Q^*$.

\begin{proposition}\label{prop:ass_necessity}
    Suppose that $P$ is not irreducible. Then, regardless of the choice of $\{\alpha_k\}$, there exists an initialization $Q_0$ and a constant $c>0$ such that $\|Q_k-Q^*\|_\infty\geq c$ almost surely for all $k\geq 0$.
\end{proposition}

The proof of Proposition~\ref{prop:ass_necessity} is given in Appendix~\ref{pf:prop:ass_necessity}. Intuitively, when Assumption~\ref{as:MC} is violated, the Markov chain $\{S_k\}$ contains transient states. However, by the Q-learning update in \eqref{eq:Q_example}, the value $Q_k(s)$ is updated only when state $s$ is visited along the sample trajectory $\{S_k\}$. Therefore, convergence of Q-learning requires that every state be visited infinitely often, which cannot occur in the presence of transient states.

\section{Proof of Theorem \ref{thm1}}\label{sec:proof1}
This section presents the complete proof of Theorem \ref{thm1}. Specifically, we reformulate the main update equation of Q-learning with on-policy sampling as a stochastic approximation with time-inhomogeneous Markovian noise (cf. Section \ref{subsec:SA_reformulation}), set up the Lyapunov drift framework together with the error decomposition for the analysis (cf. Section \ref{subsec:Lyapunov}), and discuss in detail how to handle the rapidly varying time-inhomogeneous Markovian noise using a Poisson equation–based approach (cf. Section \ref{subsec:Poisson}). Finally, we solve the recursive Lyapunov drift inequality to establish the finite-time convergence bound.

To maintain generality in our analysis, we allow the stepsize $\alpha$ and the algorithm design parameters $\epsilon$ and $\tau$ in Algorithm~\ref{algo:Q-learning} to be time-varying sequences $\{\alpha_k\}$, $\{\epsilon_k\}$, and $\{\tau_k\}$. In this case, we denote the policy mapping by $f_k(Q)$, defined as
\begin{align}\label{eq:policy_timevarying}
[f_k(Q)](s)
    = \epsilon_k\, \frac{\mathbf{1}}{|\mathcal{A}|}
    + (1-\epsilon_k)\, \sigma\!\left(\frac{Q(s)}{\tau_k}\right),
    \quad \forall\, s\in\mathcal{S}.
\end{align}

\subsection{Stochastic Approximation with Time-Inhomogeneous Markovian Noise}
\label{subsec:SA_reformulation}
We start by reformulating Algorithm \ref{algo:Q-learning} as a stochastic approximation algorithm for solving the Bellman equation (\ref{eq:Bellman}). Let $\{Y_k\}$ be a stochastic process defined as $Y_k = (S_k, A_k)$ for all $k \geq 0$. Due to the time-varying nature of the learning policies $\{\pi_k\}$, the stochastic process $\{Y_k\}$ forms a time-inhomogeneous Markov chain evolving on the state space $\mathcal{Y} = \mathcal{S} \times \mathcal{A}$. Specifically, at time step $k$, the transition matrix is given by $\bar{P}_k((s,a), (s',a')) := p(s' | s,a) \pi_k(a' | s')$ for any $(s,a), (s',a') \in \mathcal{Y}$. Let $F:\mathbb{R}^{|\mathcal{S}||\mathcal{A}|} \times \mathcal{Y}\to \mathbb{R}^{|\mathcal{S}||\mathcal{A}|}$ be an operator such that given inputs $Q\in\mathbb{R}^{|\mathcal{S}||\mathcal{A}|}$ and $y=(s_0,a_0)\in\mathcal{Y}$, the $(s,a)$-th component of the output is defined as
\begin{align*}
    [F(Q,y)](s,a)  =\,& \mathds{1}_{\{(s_0,a_0)=(s,a)\}}\left(\mathcal{R}(s,a)+\gamma \sum_{s'\in\mathcal{S}}p(s'|s,a)\max_{a'\in\mathcal{A}} Q(s',a')
    -Q(s,a)\right)+ Q(s,a).
\end{align*}
Moreover, for any $k\geq 0$, let 
$M_{k}:\mathbb{R}^{|\mathcal{S}||\mathcal{A}|} \to \mathbb{R}^{|\mathcal{S}||\mathcal{A}|}$ be defined as
\begin{align*}
    [M_{k}(Q)](s,a) = \gamma \mathds{1}_{\{(S_k,A_k)=(s,a)\}}\left(\max_{a'\in\mathcal{A}} Q(S_{k+1},a') - \sum_{s'\in\mathcal{S}}p(s'|s,a)\max_{a'\in\mathcal{A}} Q(s',a')\right)
\end{align*}
for all $Q \in \mathbb{R}^{|\mathcal{S}||\mathcal{A}|}$.
Then, the main update equation presented in Line 5 of Algorithm \ref{algo:Q-learning} can be reformulated as 
\begin{align}\label{algo:sa_reformulation}
    Q_{k+1} = Q_{k} + \alpha_{k}(F(Q_{k},Y_{k}) - Q_{k}+M_{k}(Q_{k})),\quad \forall\,k\geq 0. 
\end{align}
To show Eq.~(\ref{algo:sa_reformulation}) corresponds to a stochastic approximation method for finding $Q^*$, we first establish preliminary results on the Markov chains induced by the learning policies along the algorithm trajectory. Let $\Pi=\{\pi\mid \min_{s,a}\pi(a\mid s)>0\}$.

\begin{lemma}\label{le:policy_exploration}
Under Assumption \ref{as:MC}, for any $\pi \in \Pi$, the induced Markov chain $\{S_n\}_{n\geq 0}$ is irreducible.
\end{lemma}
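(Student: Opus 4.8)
\textbf{Proof proposal for Lemma~\ref{le:policy_exploration}.}

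The plan is to show that the set of state-pairs reachable (with positive probability, in some number of steps) under an arbitrary $\pi \in \Pi$ contains the set of state-pairs reachable under $\pi_b$, and then invoke Assumption~\ref{as:MC}. The key observation is that for a fixed transition kernel $p(\cdot\mid s,a)$ of the MDP, a one-step transition $s \to s'$ has positive probability under a policy $\pi$ if and only if there exists some action $a$ with $\pi(a\mid s) > 0$ and $p(s'\mid s,a) > 0$. Since every $\pi \in \Pi$ satisfies $\pi(a\mid s) > 0$ for \emph{all} $(s,a)$, the set of one-step-reachable successors of $s$ under any such $\pi$ is exactly $\{s' : \exists\, a,\ p(s'\mid s,a) > 0\}$, which does not depend on $\pi$ at all. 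In particular, this set is the same for $\pi$ and for $\pi_b$ (recall from Remark~\ref{remark} that we take $\pi_b(a\mid s) > 0$ for all $(s,a)$ without loss of generality, so $\pi_b \in \Pi$).

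Concretely, I would proceed as follows. First, fix any $\pi \in \Pi$ and let $P_\pi$ be its induced transition matrix on $\mathcal{S}$, with $P_\pi(s,s') = \sum_a \pi(a\mid s)\, p(s'\mid s,a)$. Show the one-step support claim: $P_\pi(s,s') > 0 \iff \exists\, a \text{ such that } p(s'\mid s,a) > 0$; the forward direction is immediate since all summands are nonnegative, and the reverse direction uses $\pi(a\mid s) > 0$. Hence the support of $P_\pi$ (as a $0$/$1$ pattern) coincides with the support of $P_{\pi_b}$. Second, pass from one-step to multi-step reachability: by induction on $n$, the support of $P_\pi^n$ equals the support of $P_{\pi_b}^n$, since the $(s,s')$ entry of the $n$-th power is positive iff there is a path $s = s_0, s_1, \ldots, s_n = s'$ with each consecutive pair in the support of $P_\pi$ (equivalently, of $P_{\pi_b}$). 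Third, recall that irreducibility of $\{S_n\}$ under $\pi_b$ means: for every $s, s' \in \mathcal{S}$ there is some $n \geq 1$ with $P_{\pi_b}^n(s,s') > 0$. By the support equivalence, the same $n$ gives $P_\pi^n(s,s') > 0$, so $\{S_n\}$ under $\pi$ is irreducible as well.

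I do not anticipate a genuine obstacle here — the lemma is essentially a bookkeeping statement about supports of stochastic matrices. The only point requiring a modicum of care is making the induction on $n$ rigorous: one should phrase it in terms of the directed graph on $\mathcal{S}$ whose edges are the support of $P_{\pi_b}$ (equivalently $P_\pi$), and note that positivity of the $(s,s')$ entry of the $n$-th power of a nonnegative matrix is equivalent to the existence of a directed walk of length $n$ from $s$ to $s'$ in that graph; this is a standard fact (see, e.g., \cite{levin2017markov}). Once this is in place, irreducibility — which is precisely the statement that the graph is strongly connected — transfers verbatim from $\pi_b$ to $\pi$.
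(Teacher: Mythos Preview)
Your proposal is correct. Both your argument and the paper's hinge on the same observation---that positivity of entries of $P_\pi$ is controlled by the underlying kernel $p(\cdot\mid s,a)$ once $\pi(a\mid s)>0$ for all $(s,a)$---but they package it differently. You argue combinatorially: the \emph{support} (zero/nonzero pattern) of $P_\pi$ coincides with that of $P_{\pi_b}$, hence so do the supports of all powers, and irreducibility is a property of the support graph. The paper instead proves a quantitative entrywise bound $P_\pi(s,s') \geq \delta\, P_{\pi_b}(s,s')$ with $\delta = \min_{s,a}\pi(a\mid s)/\pi_b(a\mid s) > 0$, iterates to get $P_\pi^k \geq \delta^k P_{\pi_b}^k$, and reads off irreducibility from there. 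Your route is arguably cleaner for this lemma in isolation; the paper's route has the advantage that the same entrywise inequality (in the form $P_\pi \geq \pi_{\min} P_{\pi_b}$ and its lazy-chain analogue) is reused verbatim later when bounding $\gamma_k$, stationary-distribution minima, and mixing parameters, so it fits more economically into the overall analysis.
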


The proof of Lemma \ref{le:policy_exploration} is given in Appendix~\ref{pf:le:policy_exploration}.
As a result of Lemma~\ref{le:policy_exploration}, for any $\pi \in \Pi$, the Markov chain $\{S_n\}$ induced by $\pi$ admits a unique stationary distribution $\mu_\pi \in \Delta(\mathcal{S})$ \cite{levin2017markov}, which satisfies $\mu_\pi(s)>0$ for all $s\in\mathcal{S}$. Moreover, since $\pi(a|s)>0$ for all $\pi\in \Pi$, the Markov chain $\{Y_n = (S_n, A_n)\}_{n \geq 0}$ induced by $\pi$ is also irreducible and admits a unique stationary distribution $\bar{\mu}_\pi \in \Delta(\mathcal{S} \times \mathcal{A})$, which satisfies $\bar{\mu}_\pi(s,a) = \mu_\pi(s)\pi(a \mid s)$ for all $(s,a)$. Since Algorithm~\ref{algo:Q-learning} employs learning policies of the form $\pi_k = f_k(Q_k)$, all policies encountered along the algorithm trajectory belong to $\Pi$, and hence Lemma~\ref{le:policy_exploration} applies. For each policy $\pi_k$ along the trajectory, we define $\mu_k := \mu_{\pi_k}$ and $\bar{\mu}_k := \bar{\mu}_{\pi_k}$ accordingly.

Let $\Bar{F}:\mathbb{R}^{|\mathcal{S}||\mathcal{A}|} \times \Pi \to \mathbb{R}^{|\mathcal{S}||\mathcal{A}|}$ be defined as
\begin{align*} 
    \Bar{F}(Q,\pi) = \mathbb{E}_{Y \sim \Bar{\mu}_\pi(\cdot)}[F(Q,Y)]
\end{align*}
for any $Q \in \mathbb{R}^{|\mathcal{S}||\mathcal{A}|}$ and $\pi \in \Pi$. 
The following lemma establishes several key properties of the operator $\Bar{F}(\cdot,\cdot)$, which are important for connecting the algorithm presented in Eq. (\ref{algo:sa_reformulation}) with the Bellman equation (\ref{eq:Bellman}). The proof of Lemma \ref{le:f-bar} is presented in Appendix \ref{pf:le:f-bar}.

\begin{lemma}\label{le:f-bar}
The following results hold.
\begin{enumerate}[(1)]
    \item For any  $\pi\in \Pi$, the operator $\Bar{F}(\cdot,\pi)$ is explicitly given by 
    \begin{align*}
        \Bar{F}(Q,\pi) = \big[(I-D_\pi)+D_\pi\mathcal{H}\big](Q), \quad \forall Q\in\mathbb{R}^{|\mathcal{S}||\mathcal{A}|},
    \end{align*}
    where $D_\pi=\text{diag}(\Bar{\mu}_\pi)$ and $\mathcal{H}(\cdot)$ is the Bellman operator defined in \eqref{def:Bellman_H}.
    
    \item For any $Q_1,Q_2\in\mathbb{R}^{|\mathcal{S}||\mathcal{A}|}$ and  $\pi\in \Pi$, we have
    \begin{align*}
        \|\Bar{F}(Q_1,\pi)-\Bar{F}(Q_2,\pi)\|_\infty \leq \gamma_\pi\|Q_1-Q_2\|_\infty, \quad \text{and}\quad
        \|\Bar{F}(Q_1,\pi)\|_\infty \leq \|Q_1\|_\infty+1,    
    \end{align*}
    where $\gamma_{\pi}=1-D_{\pi,\min}(1-\gamma)$ and $D_{\pi,\min}=\min_{s,a}\Bar{\mu}_\pi(s,a)>0$. 
    
    \item For any  $\pi\in \Pi$, the fixed-point equation $\Bar{F}(Q,\pi)=Q$ has a unique solution $Q^\ast$.
    
    \item For any $Q_{1},Q_{2} \in \mathbb{R}^{|\mathcal{S}||\mathcal{A}|}$ 
    satisfying $\|Q_{1}\|_{\infty}, \|Q_{2}\|_{\infty} \leq 1/(1-\gamma)$ and $\pi_{1}, \pi_{2} \in \Pi$, we have
    \begin{align*}
        \|\Bar{F}(Q_{1},\pi_{1}) - \Bar{F}(Q_{2},\pi_{2})\|_{\infty} 
        \leq 3 \|Q_1-Q_2\|_\infty + \frac{2}{1-\gamma}\|\Bar{\mu}_{\pi_1}-\Bar{\mu}_{\pi_2}\|_\infty. 
    \end{align*}
\end{enumerate}
\end{lemma}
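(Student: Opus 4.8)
The plan is to first obtain a closed-form expression for $\bar F(\cdot,\pi)$ and then read off parts (2)--(4) from it, using only the $\gamma$-contraction of $\mathcal{H}$ and elementary properties of the diagonal matrix $D_\pi$. For part (1), observe that $[F(Q,y)](s,a) = \mathds{1}_{\{y=(s,a)\}}\big([\mathcal{H}(Q)](s,a) - Q(s,a)\big) + Q(s,a)$, and that $\mathbb{E}_{Y\sim\bar\mu_\pi}[\mathds{1}_{\{Y=(s,a)\}}] = \bar\mu_\pi(s,a)$. Taking expectations componentwise gives $[\bar F(Q,\pi)](s,a) = \bar\mu_\pi(s,a)\big([\mathcal{H}(Q)](s,a) - Q(s,a)\big) + Q(s,a)$, i.e.\ $\bar F(Q,\pi) = Q + D_\pi(\mathcal{H}(Q) - Q) = [(I - D_\pi) + D_\pi\mathcal{H}](Q)$.

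For part (2), note that $\bar F(Q_1,\pi) - \bar F(Q_2,\pi) = (I - D_\pi)(Q_1 - Q_2) + D_\pi(\mathcal{H}(Q_1) - \mathcal{H}(Q_2))$ is, in each coordinate, a convex combination of $Q_1(s,a) - Q_2(s,a)$ and $[\mathcal{H}(Q_1) - \mathcal{H}(Q_2)](s,a)$ with weights $1 - \bar\mu_\pi(s,a)$ and $\bar\mu_\pi(s,a)$, both in $[0,1]$. Bounding the first term by $\|Q_1 - Q_2\|_\infty$ and the second by $\gamma\|Q_1 - Q_2\|_\infty$ via the $\gamma$-contraction of $\mathcal{H}$, and taking the maximum over $(s,a)$, yields the factor $1 - \bar\mu_\pi(s,a)(1-\gamma) \le 1 - D_{\pi,\min}(1-\gamma) = \gamma_\pi$; here $D_{\pi,\min} = \min_{s,a}\mu_\pi(s)\pi(a\mid s) > 0$ because Lemma~\ref{le:policy_exploration} gives $\mu_\pi(s) > 0$ and $\pi \in \Pi$ gives $\pi(a\mid s) > 0$, so $\gamma_\pi < 1$. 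The boundedness estimate is obtained the same way: $[\bar F(Q_1,\pi)](s,a)$ is a coordinatewise convex combination of $Q_1(s,a)$ and $[\mathcal{H}(Q_1)](s,a)$, hence $|[\bar F(Q_1,\pi)](s,a)| \le \max(\|Q_1\|_\infty,\, 1 + \gamma\|Q_1\|_\infty) \le \|Q_1\|_\infty + 1$, using $|\mathcal{R}|\le 1$ and $\gamma < 1$. Part (3) then follows immediately: by part (1), $\bar F(Q,\pi) = Q$ is equivalent to $D_\pi(\mathcal{H}(Q) - Q) = 0$, and since $D_\pi$ has a strictly positive diagonal and is therefore invertible, this is equivalent to $\mathcal{H}(Q) = Q$, whose unique solution is $Q^*$ (alternatively, $\bar F(\cdot,\pi)$ is a $\gamma_\pi$-contraction by part (2), so Banach's theorem gives a unique fixed point, and $Q^*$ is one since $\mathcal{H}(Q^*) = Q^*$).

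For part (4), using part (1) I would split
\[
\bar F(Q_1,\pi_1) - \bar F(Q_2,\pi_2) = (Q_1 - Q_2) + D_{\pi_1}\big[(\mathcal{H}(Q_1) - Q_1) - (\mathcal{H}(Q_2) - Q_2)\big] + (D_{\pi_1} - D_{\pi_2})(\mathcal{H}(Q_2) - Q_2).
\]
The first term contributes $\|Q_1 - Q_2\|_\infty$. For the second, $\|D_{\pi_1}\|_\infty \le 1$ and, by the triangle inequality and the $\gamma$-contraction of $\mathcal{H}$, $\|(\mathcal{H}(Q_1) - Q_1) - (\mathcal{H}(Q_2) - Q_2)\|_\infty \le (1+\gamma)\|Q_1 - Q_2\|_\infty \le 2\|Q_1 - Q_2\|_\infty$. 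For the third, $D_{\pi_1} - D_{\pi_2} = \text{diag}(\bar\mu_{\pi_1} - \bar\mu_{\pi_2})$, so its $(s,a)$-coordinate has magnitude $|\bar\mu_{\pi_1}(s,a) - \bar\mu_{\pi_2}(s,a)|\cdot|[\mathcal{H}(Q_2)](s,a) - Q_2(s,a)|$, and the second factor is at most $1 + (1+\gamma)\|Q_2\|_\infty \le 1 + (1+\gamma)/(1-\gamma) = 2/(1-\gamma)$ by the a priori bound $\|Q_2\|_\infty \le 1/(1-\gamma)$; this yields $\tfrac{2}{1-\gamma}\|\bar\mu_{\pi_1} - \bar\mu_{\pi_2}\|_\infty$. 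Summing gives $3\|Q_1 - Q_2\|_\infty + \tfrac{2}{1-\gamma}\|\bar\mu_{\pi_1} - \bar\mu_{\pi_2}\|_\infty$.

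There is no substantial obstacle here; the lemma is a chain of elementary estimates. The only places requiring care are (i) invoking Lemma~\ref{le:policy_exploration} together with $\pi \in \Pi$ to guarantee $D_{\pi,\min} > 0$, which is what makes $\gamma_\pi < 1$ and $D_\pi$ invertible, and (ii) keeping track of constants in part (4), where the $2/(1-\gamma)$ factor relies crucially on the a priori bound $\|Q_i\|_\infty \le 1/(1-\gamma)$ to control $\|\mathcal{H}(Q) - Q\|_\infty$, and where the decomposition via $\bar F(Q,\pi) = Q + D_\pi(\mathcal{H}(Q)-Q)$ (rather than the contraction bound from part (2)) is what produces the stated coefficient $3$.
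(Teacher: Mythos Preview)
Your proposal is correct and follows essentially the same approach as the paper's proof: the same coordinatewise computation for part~(1), the same convex-combination argument for the contraction in part~(2) (the paper outsources this step to \cite[Proposition~5~(3)(b)]{chen2024lyapunov}, but the content is identical), the same Banach/invertibility reasoning for part~(3), and the same add--subtract decomposition $\bar F(Q,\pi)=Q+D_\pi(\mathcal{H}(Q)-Q)$ for part~(4), differing only in whether one pivots around $D_{\pi_1}$ or $D_{\pi_2}$, which is immaterial.
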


Among the properties established in Lemma \ref{le:f-bar}, the most important are Parts (2) and (3), which show that $\Bar{F}(\cdot, \pi)$ is a contraction mapping and that $Q^\ast$ is its unique fixed point, justifying Eq. (\ref{algo:sa_reformulation}) being a stochastic approximation algorithm for finding $Q^*$.

We end this section with the following lemma, which establishes key properties of the operator $F(Q, y)$ that will be used frequently in the remainder of the proof. The proof of Lemma \ref{le:F-lipschitz} is presented in Appendix \ref{pf:le:F-lipschitz}.
\begin{lemma}\label{le:F-lipschitz} 
Let $Q_{1},Q_{2} \in \mathbb{R}^{|\mathcal{S}||\mathcal{A}|}$,  $\pi \in \Pi$, and $y = (s_{0},a_{0}) \in \mathcal{Y}$ be arbitrary. Suppose that $\|Q_{1}\|_{\infty}, \|Q_{2}\|_{\infty} \leq 1/(1-\gamma)$. Then, we have
\begin{align*}
    \|F(Q_{1},y) - F(Q_{2},y)\|_{\infty} \leq \|Q_{1} - Q_{2}\|_{\infty},\quad\text{and}\quad
    \|F(Q_1,y) - \bar{F}(Q_1,\pi)\|_{\infty} \leq \frac{2}{1-\gamma}.
\end{align*}
\end{lemma}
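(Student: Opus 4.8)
The plan is to prove Lemma~\ref{le:F-lipschitz} by direct computation, exploiting the fact that $F(Q,y)$ differs from $Q$ only in the single coordinate indexed by $y=(s_0,a_0)$. For the first inequality, I would fix an arbitrary coordinate $(s,a)$ and split into two cases. If $(s,a)\neq (s_0,a_0)$, then the indicator in the definition of $F$ vanishes, so $[F(Q_i,y)](s,a)=Q_i(s,a)$ and the difference is trivially bounded by $\|Q_1-Q_2\|_\infty$. If $(s,a)=(s_0,a_0)$, then $[F(Q_i,y)](s,a)=\mathcal{R}(s,a)+\gamma\sum_{s'}p(s'|s,a)\max_{a'}Q_i(s',a')$; the reward term cancels in the difference, and what remains is $\gamma$ times the difference of two expected maxima. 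The key elementary fact here is that $|\max_{a'}Q_1(s',a')-\max_{a'}Q_2(s',a')|\le \|Q_1-Q_2\|_\infty$ (the $\max$ is $1$-Lipschitz in $\ell_\infty$), so averaging over $s'$ with the probability weights $p(s'|s,a)$ and multiplying by $\gamma<1$ gives a bound of $\gamma\|Q_1-Q_2\|_\infty\le\|Q_1-Q_2\|_\infty$. Taking the max over $(s,a)$ completes the first claim.

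For the second inequality, I would use Lemma~\ref{le:f-bar}(1), which gives the closed form $\bar F(Q_1,\pi)=[(I-D_\pi)+D_\pi\mathcal{H}](Q_1)=Q_1+D_\pi(\mathcal{H}(Q_1)-Q_1)$, and observe that likewise $F(Q_1,y)=Q_1+\mathds{1}_{\{\cdot=y\}}\odot(\mathcal{H}(Q_1)-Q_1)$ coordinatewise (since the bracketed expression in the definition of $F$ is exactly $[\mathcal{H}(Q_1)](s,a)-Q_1(s,a)$). Hence $F(Q_1,y)-\bar F(Q_1,\pi)$ equals $(\mathds{1}_{\{\cdot=y\}}-D_\pi)\odot(\mathcal{H}(Q_1)-Q_1)$ coordinatewise; each coordinate is bounded in magnitude by $\max_{s,a}|[\mathcal{H}(Q_1)](s,a)-Q_1(s,a)|$ since the indicator is in $\{0,1\}$ and the diagonal entries $\bar\mu_\pi(s,a)$ lie in $[0,1]$. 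It then suffices to bound $\|\mathcal{H}(Q_1)-Q_1\|_\infty$. Using $|\mathcal{R}|\le 1$ and $\|Q_1\|_\infty\le 1/(1-\gamma)$, I get $\|\mathcal{H}(Q_1)\|_\infty\le 1+\gamma\|Q_1\|_\infty\le 1+\gamma/(1-\gamma)=1/(1-\gamma)$, so $\|\mathcal{H}(Q_1)-Q_1\|_\infty\le \|\mathcal{H}(Q_1)\|_\infty+\|Q_1\|_\infty\le 2/(1-\gamma)$, which is the desired bound.

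The argument is essentially routine, so there is no serious obstacle; the only points requiring a little care are the clean bookkeeping of the coordinatewise identities (making sure the reward term cancels and that the bracketed quantity in $F$ is recognized as $\mathcal{H}(Q_1)-Q_1$) and the invocation of the $1$-Lipschitz property of the $\max$ operator. I would state that Lipschitz fact as a one-line sub-observation before the main case split to keep the write-up clean. The hypothesis $\|Q_1\|_\infty,\|Q_2\|_\infty\le 1/(1-\gamma)$ is used only in the second inequality (to control $\|\mathcal{H}(Q_1)\|_\infty$), not the first; I would note this in passing, though it is immaterial for how the lemma is applied later.
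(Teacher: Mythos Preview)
Your proposal is correct and follows essentially the same approach as the paper. The paper's proof also splits the first inequality into the two cases $(s,a)=(s_0,a_0)$ and $(s,a)\neq(s_0,a_0)$ using the $1$-Lipschitz property of $\max$, and for the second inequality arrives at the same coordinatewise identity $[F(Q_1,y)-\bar F(Q_1,\pi)](s,a)=(\mathds{1}_{\{(s,a)=(s_0,a_0)\}}-D_\pi(s,a))\big([\mathcal{H}(Q_1)](s,a)-Q_1(s,a)\big)$, bounding the first factor by $1$ and the second by $2/(1-\gamma)$ via $\|\mathcal{R}\|_\infty\le 1$ and $\|Q_1\|_\infty\le 1/(1-\gamma)$.
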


\subsection{A Lyapunov Drift Approach for Error Decomposition}\label{subsec:Lyapunov}
Inspired by \cite{chen2024lyapunov}, we employ a Lyapunov-drift method to analyze the finite-time behavior of the stochastic approximation algorithm presented in Eq. (\ref{algo:sa_reformulation}). The Lyapunov function $M:\mathbb{R}^{|\mathcal{S}||\mathcal{A}|} \to \mathbb{R}^{|\mathcal{S}||\mathcal{A}|}$ is defined as 
\begin{align}\label{def:Moreau}
    M(Q) = \min_{u \in \mathbb{R}^{|\mathcal{S}||\mathcal{A}|}} \left\{\frac{1}{2} \|u\|_\infty^2 + \frac{1}{2\theta} \|Q - u\|_p^2\right\}
\end{align}
for all $Q \in \mathbb{R}^{|\mathcal{S}||\mathcal{A}|}$,
where $\|\cdot\|_p$ denotes the $\ell_p$-norm defined by $\smash{\|Q\|_p = \left(\sum_{s,a} |Q(s,a)|^p\right)^{1/p}}$. The parameters $\theta > 0$ and $p \geq 1$ are tunable and will be chosen in the final step of the proof to optimize the convergence bound.

Since we work in a finite-dimensional Euclidean space, norm equivalence ensures the existence of constants $\ell_p = (|\mathcal{S}||\mathcal{A}|)^{-1/p}$ and $u_p = 1$ such that $\ell_p \|Q\|_p \leq \|Q\|_\infty \leq u_p \|Q\|_p$ for all $Q \in \mathbb{R}^{|\mathcal{S}||\mathcal{A}|}$. Several key properties of the Lyapunov function $M(\cdot)$ were established in \cite{chen2024lyapunov}, and are summarized in the following lemma for completeness.

\begin{lemma}[Proposition 1 from \cite{chen2024lyapunov}]\label{le:Moreau}
The Lyapunov function $M(\cdot)$ satisfies the following properties:
\begin{enumerate}[(1)]
    \item The function $M(\cdot)$ is convex, differentiable, and $L$-smooth with respect to $\|\cdot\|_p$, i.e.,
    \begin{align}
        M(y) \leq M(x) + \langle \nabla M(x), y - x \rangle + \frac{L}{2} \|x - y\|_p^2, \quad \forall\, x, y \in \mathbb{R}^d, \label{eq:lyapunov-expansion}
    \end{align}
    where $L = (p - 1)/\theta$.
    
    \item There exists a norm $\|\cdot\|_m$ such that $M(Q) = \|Q\|_m^2 / 2$.

    \item It holds that $\ell_m \|Q\|_m \leq \|Q\|_\infty \leq u_m \|Q\|_m$ for all $Q \in \mathbb{R}^{|\mathcal{S}||\mathcal{A}|}$, where $\ell_m = (1 + \theta \ell_p^2)^{1/2}$ and $u_m = (1 + \theta u_p^2)^{1/2}$.
\end{enumerate}
\end{lemma}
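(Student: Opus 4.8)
The plan is to treat $M$ as a generalized Moreau envelope---the infimal convolution of $\tfrac12\|\cdot\|_\infty^2$ with $\tfrac{1}{2\theta}\|\cdot\|_p^2$---and to prove the three claims largely separately: Part~(2) from homogeneity and convexity, Part~(3) by an elementary scalar optimization, and Part~(1), the real work, via convex conjugates.

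\textbf{Part (2).} First I would record the structural properties of $M$: it is nonnegative (both summands are), positively homogeneous of degree $2$ (substitute $u\mapsto tu$ in the minimization to get $M(tQ)=t^2M(Q)$ for $t\ge0$), even ($u\mapsto-u$), and convex, being the partial minimization over $u$ of a function jointly convex in $(Q,u)$. Moreover $M(Q)>0$ for $Q\neq0$: the objective $\tfrac12\|u\|_\infty^2+\tfrac{1}{2\theta}\|Q-u\|_p^2$ is continuous and coercive in $u$, so the minimum is attained, and it is strictly positive since it can vanish only if $u=0$ and $u=Q$ simultaneously. Hence the sublevel set $\mathcal{B}:=\{Q:M(Q)\le\tfrac12\}$ is convex, symmetric, bounded, and has the origin in its interior, so it is the closed unit ball of some norm $\|\cdot\|_m$; since both $2M(\cdot)$ and $\|\cdot\|_m^2$ are $2$-homogeneous and agree on $\partial\mathcal{B}=\{\|\cdot\|_m=1\}$, we get $M(Q)=\tfrac12\|Q\|_m^2$ identically. (Alternatively, the conjugate computation below exhibits $M^*$ as one-half the square of a norm, and $M=M^{**}$ inherits the same structure.)

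\textbf{Part (3).} For the upper equivalence constant I would feed the feasible point $u=tQ$, $t\in[0,1]$, into the definition of $M$ and use $\|Q\|_p\le\|Q\|_\infty/\ell_p$, obtaining $M(Q)\le\tfrac12\|Q\|_\infty^2\big(t^2+(1-t)^2/(\theta\ell_p^2)\big)$; minimizing the bracket over $t$ (optimum at $t=1/(1+\theta\ell_p^2)$) yields $M(Q)\le\|Q\|_\infty^2/\big(2(1+\theta\ell_p^2)\big)$, i.e.\ $\ell_m\|Q\|_m\le\|Q\|_\infty$ with $\ell_m=(1+\theta\ell_p^2)^{1/2}$. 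For the reverse direction, for every $u$ I would bound $\|Q-u\|_p\ge\|Q-u\|_\infty/u_p$ and then $\|Q-u\|_\infty\ge\|Q\|_\infty-\|u\|_\infty$ by the triangle inequality, which reduces the problem to minimizing the scalar quadratic $\tfrac12a^2+\tfrac{1}{2\theta u_p^2}(\|Q\|_\infty-a)^2$ over $a=\|u\|_\infty\ge0$; its minimum value $\|Q\|_\infty^2/\big(2(1+\theta u_p^2)\big)$ lower-bounds $M(Q)$, giving $\|Q\|_\infty\le u_m\|Q\|_m$ with $u_m=(1+\theta u_p^2)^{1/2}$.

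\textbf{Part (1).} This is the step I expect to be the main obstacle. Since the $\tfrac{1}{2\theta}\|\cdot\|_p^2$ term is coercive, the infimum defining $M$ is attained and $M$ is finite, convex, and lower semicontinuous, hence $M=M^{**}$. Infimal convolution dualizes to addition of conjugates, and with $1/p+1/q=1$ one has $(\tfrac12\|\cdot\|_\infty^2)^*=\tfrac12\|\cdot\|_1^2$ and $(\tfrac{1}{2\theta}\|\cdot\|_p^2)^*=\tfrac{\theta}{2}\|\cdot\|_q^2$, so $M^*=\tfrac12\|\cdot\|_1^2+\tfrac{\theta}{2}\|\cdot\|_q^2$. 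I would then invoke the classical sharp fact that $\tfrac12\|\cdot\|_q^2$ is $(q-1)$-strongly convex with respect to $\|\cdot\|_q$ for $q\in(1,2]$ (equivalently, $\tfrac12\|\cdot\|_p^2$ is $(p-1)$-smooth w.r.t.\ $\|\cdot\|_p$ for $p\ge2$): for $p\ge2$ this makes $\tfrac{\theta}{2}\|\cdot\|_q^2$ $\tfrac{\theta}{p-1}$-strongly convex w.r.t.\ $\|\cdot\|_q$, and adding the convex term $\tfrac12\|\cdot\|_1^2$ preserves this modulus, so $M^*$ is $\tfrac{\theta}{p-1}$-strongly convex w.r.t.\ $\|\cdot\|_q$. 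Finally, the standard conjugate duality between strong convexity and smoothness under general norms---if $g^*$ is $\sigma$-strongly convex w.r.t.\ a norm then $g$ is differentiable and $(1/\sigma)$-smooth w.r.t.\ the dual norm---yields that $M=M^{**}$ is differentiable and $\tfrac{p-1}{\theta}$-smooth with respect to $\|\cdot\|_p$, which is exactly \eqref{eq:lyapunov-expansion} with $L=(p-1)/\theta$. The delicate points are pinning down the exact strong-convexity modulus of $\tfrac12\|\cdot\|_q^2$ and correctly applying the smoothness/strong-convexity duality in a non-Euclidean setting; for $1<p<2$ (where $q>2$ and $\tfrac12\|\cdot\|_q^2$ is no longer strongly convex in its own norm) one would instead use a $p$-independent modulus and obtain a weaker constant.
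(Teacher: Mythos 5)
Your proposal is correct. Note that the paper itself gives no proof of this lemma---it is imported verbatim as Proposition~1 of \cite{chen2024lyapunov}---so the relevant comparison is with that source, and your argument follows essentially the same route: Part~(2) from $2$-homogeneity and convexity of the infimal convolution, Part~(3) by plugging in $u=tQ$ (resp.\ reducing to a scalar quadratic in $\|u\|_\infty$) and optimizing, and Part~(1) via $M^*=\tfrac12\|\cdot\|_1^2+\tfrac{\theta}{2}\|\cdot\|_q^2$, the $(q-1)$-strong convexity of $\tfrac12\|\cdot\|_q^2$, and the strong-convexity/smoothness conjugate duality, which yields exactly $L=(p-1)/\theta$ since $q-1=1/(p-1)$. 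Your caveat about $1<p<2$ is immaterial here, as the paper ultimately sets $p=2\log(|\mathcal{S}||\mathcal{A}|)\geq 2$.
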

$M(\cdot)$ serves as a smooth approximation of $\|Q\|_\infty^2/2$. See \cite{chen2024lyapunov} for further discussion of the motivation behind the construction of $M(\cdot)$.

Now, we are ready to use the Lyapunov function $M(\cdot)$ to study the stochastic approximation algorithm (\ref{algo:sa_reformulation}).
For any $k\geq 0$, using Eq. \eqref{algo:sa_reformulation} and Lemma \ref{le:Moreau} (1), we have
\begin{align}
    \mathbb{E}[M(Q_{k+1}-Q^*)]
    \leq \,& \mathbb{E}[M(Q_k-Q^*)]+ \mathbb{E}[\langle \nabla M(Q_k-Q^*),Q_{k+1}-Q_k\rangle] + \frac{L}{2} \mathbb{E}[\|Q_{k+1}-Q_k\|_p^2] \nonumber \\
    =\,& \mathbb{E}[M(Q_k-Q^*)] + \alpha_k \mathbb{E}[\langle \nabla M(Q_k-Q^*),F(Q_{k},Y_{k}) + M_{k}(Q_{k}) - Q_{k}\rangle] \nonumber \\
    &+\frac{L\alpha_k^2}{2}\mathbb{E}[\|F(Q_{k},Y_{k}) + M_{k}(Q_{k}) - Q_{k}\|_p^2] \nonumber \\
    =\,& \mathbb{E}[M(Q_k-Q^*)] + \alpha_k \underbrace{\mathbb{E}[\langle \nabla M(Q_k-Q^*),\bar{F}(Q_k,\pi_k)-Q_k \rangle]}_{:= E_{1}} \nonumber \\
    &+\alpha_k \underbrace{\mathbb{E}[\langle \nabla M(Q_k-Q^*),F(Q_{k},Y_{k}) -\bar{F}(Q_k,\pi_k) \rangle]}_{:= E_{2}} \nonumber \\
    &+\alpha_k \underbrace{\mathbb{E}[\langle \nabla M(Q_k-Q^*),M_{k}(Q_{k}) \rangle]}_{:= E_{3}} \nonumber \\
    &+\frac{L\alpha_k^2}{2} \underbrace{\mathbb{E}[\|F(Q_{k},Y_{k}) + M_{k}(Q_{k}) - Q_{k}\|_p^2]}_{:= E_{4}} \label{eq:Lyapunov-decomposition}.
\end{align} 
Next, we bound each term on the right-hand side of the previous inequality. In particular, we bound the terms $E_1$, $E_3$, and $E_4$ in the following sequence of lemmas, and dedicate the next section to our techniques for bounding the term $E_2$, which arises due to the rapidly varying time-inhomogeneous noise $\{Y_k\}$ and is the most challenging to handle.

\begin{lemma}\label{le:negDrift}
The following inequality holds for all $k \ge 0$:
\begin{align*}
    E_1 \;\le\; -2\left(1 - \frac{u_m}{\ell_m}\gamma_k \right)\, \mathbb{E}\!\left[M(Q_k - Q^*)\right],
\end{align*}
where $\gamma_k := 1 - D_{\pi_k,\min}(1-\gamma)$ and $D_{\pi_k,\min} := \min_{s,a} \bar{\mu}_k(s,a)$. 
\end{lemma}
\begin{lemma}\label{le:E_3}
    It holds for all $k\geq 0$ that $E_3=0$.
\end{lemma}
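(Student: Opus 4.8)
The plan is to show that $M_k(Q_k)$ is a martingale-difference term with respect to an appropriate filtration, so that after conditioning it vanishes in expectation. Let $\mathcal{F}_k = \sigma(S_0, A_0, S_1, A_1, \ldots, S_k, A_k)$ be the natural filtration generated by the trajectory up to and including the selection of $A_k$. Observe that $Q_k$ is $\mathcal{F}_k$-measurable (it is built from $S_0,\ldots,S_k$ and $A_0,\ldots,A_{k-1}$, hence certainly from $\mathcal{F}_k$), the learning policy $\pi_k = f_k(Q_k)$ is $\mathcal{F}_k$-measurable, and the indicator $\mathds{1}_{\{(S_k,A_k)=(s,a)\}}$ is $\mathcal{F}_k$-measurable. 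The only randomness in $[M_k(Q_k)](s,a)$ not captured by $\mathcal{F}_k$ is the next state $S_{k+1} \sim p(\cdot \mid S_k, A_k)$.

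The key step is the conditional expectation computation. For each fixed $(s,a)$, I would write
\begin{align*}
    \mathbb{E}\big[[M_k(Q_k)](s,a) \,\big|\, \mathcal{F}_k\big]
    = \gamma\, \mathds{1}_{\{(S_k,A_k)=(s,a)\}} \left( \mathbb{E}\Big[\max_{a'} Q_k(S_{k+1},a') \,\Big|\, \mathcal{F}_k\Big] - \sum_{s'} p(s'\mid s,a)\max_{a'} Q_k(s',a') \right),
\end{align*}
and then note that on the event $\{(S_k,A_k)=(s,a)\}$ we have $S_{k+1} \sim p(\cdot \mid s,a)$, so that $\mathbb{E}[\max_{a'} Q_k(S_{k+1},a') \mid \mathcal{F}_k] = \sum_{s'} p(s'\mid s,a)\max_{a'} Q_k(s',a')$ on that event (using that $Q_k$ is $\mathcal{F}_k$-measurable to pull it out of the conditional expectation), while off the event the indicator kills the term. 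Either way the bracket multiplied by the indicator is zero, so $\mathbb{E}[M_k(Q_k) \mid \mathcal{F}_k] = 0$ componentwise, hence as a vector in $\mathbb{R}^{|\mathcal{S}||\mathcal{A}|}$.

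To conclude, since $\nabla M(Q_k - Q^*)$ is also $\mathcal{F}_k$-measurable (both $Q_k$ and $Q^*$ are, and $\nabla M$ is a deterministic function by Lemma \ref{le:Moreau}(1)), I would apply the tower property:
\begin{align*}
    E_3 = \mathbb{E}\big[\langle \nabla M(Q_k - Q^*), M_k(Q_k)\rangle\big]
    = \mathbb{E}\big[\langle \nabla M(Q_k - Q^*), \mathbb{E}[M_k(Q_k)\mid \mathcal{F}_k]\rangle\big] = 0.
\end{align*}
There is no real obstacle here; the only point requiring care is the bookkeeping of measurability — specifically confirming that $A_k$ (and therefore $\pi_k$, $Q_k$, and the indicator) is included in $\mathcal{F}_k$ while $S_{k+1}$ is not, so that the conditioning argument isolates exactly the transition noise that $M_k(Q_k)$ is designed to center.
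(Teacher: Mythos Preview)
Your proposal is correct and follows essentially the same approach as the paper: both define the filtration $\mathcal{F}_k=\sigma(Y_0,\ldots,Y_k)$ with $Y_i=(S_i,A_i)$, verify that $Q_k$ and $\nabla M(Q_k-Q^*)$ are $\mathcal{F}_k$-measurable, and then compute $\mathbb{E}[M_k(Q_k)(s,a)\mid\mathcal{F}_k]=0$ componentwise by integrating out $S_{k+1}\sim p(\cdot\mid S_k,A_k)$ before applying the tower property. The only cosmetic difference is that the paper explicitly names the Markov property when reducing $\mathbb{E}[\mathds{1}_{\{s'=S_{k+1}\}}\mid\mathcal{F}_k]$ to $\mathbb{E}[\mathds{1}_{\{s'=S_{k+1}\}}\mid S_k,A_k]$, whereas you argue directly on the event $\{(S_k,A_k)=(s,a)\}$.
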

\begin{lemma}\label{le:E_4}
    It holds for all $k\geq 0$ that $E_4\leq \frac{4 (|\mathcal{S}||\mathcal{A}|)^{2/p}}{(1-\gamma)^2}$.
\end{lemma}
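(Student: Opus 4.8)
The plan is to observe that the random increment $F(Q_k,Y_k) + M_k(Q_k) - Q_k$ is supported on the single coordinate $(S_k,A_k)$ and to bound its magnitude directly. Substituting the definitions of $F$ and $M_k$ with $y = Y_k = (S_k,A_k)$, the term $\gamma\sum_{s'} p(s'\mid S_k,A_k)\max_{a'} Q_k(s',a')$ appears with opposite signs in $F(Q_k,Y_k)$ and in $M_k(Q_k)$ and cancels, leaving
\begin{align*}
    [F(Q_k,Y_k) + M_k(Q_k) - Q_k](s,a) = \mathds{1}_{\{(S_k,A_k)=(s,a)\}}\Big(\mathcal{R}(S_k,A_k) + \gamma\max_{a'} Q_k(S_{k+1},a') - Q_k(S_k,A_k)\Big)
\end{align*}
for every $(s,a)$ --- which is exactly the Q-learning increment appearing in Line~5 of Algorithm~\ref{algo:Q-learning}. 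Since this vector has at most one nonzero entry, its $\ell_p$-norm equals $|\mathcal{R}(S_k,A_k) + \gamma\max_{a'} Q_k(S_{k+1},a') - Q_k(S_k,A_k)|$ for every $p \geq 1$.

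Next I would bound this scalar using $|\mathcal{R}(S_k,A_k)| \leq 1$ together with the a priori bound $\|Q_k\|_\infty \leq 1/(1-\gamma)$, which holds for all $k$ by induction: each coordinate of $Q_{k+1}$ is either left unchanged or equals the convex combination $(1-\alpha_k)Q_k(S_k,A_k) + \alpha_k(\mathcal{R}(S_k,A_k) + \gamma\max_{a'}Q_k(S_{k+1},a'))$ of two numbers each bounded in absolute value by $1/(1-\gamma)$ (this is already the standing boundedness fact used in Lemma~\ref{le:F-lipschitz}). The triangle inequality then gives
\begin{align*}
    \|F(Q_k,Y_k) + M_k(Q_k) - Q_k\|_p \;\leq\; 1 + \frac{\gamma}{1-\gamma} + \frac{1}{1-\gamma} \;=\; \frac{2}{1-\gamma}.
\end{align*}
Squaring, taking expectation, and using $(|\mathcal{S}||\mathcal{A}|)^{2/p} \geq 1$ yields $E_4 \leq 4/(1-\gamma)^2 \leq 4(|\mathcal{S}||\mathcal{A}|)^{2/p}/(1-\gamma)^2$, as claimed.

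There is no genuine obstacle here; the one point worth getting right is the cancellation between the $\max$-terms of $F$ and $M_k$, which makes the increment a one-hot vector and lets us avoid the lossier route of bounding $\|F(Q_k,Y_k) - Q_k\|_\infty$ and $\|M_k(Q_k)\|_\infty$ separately and adding them (that route would produce the weaker constant $16$ in place of $4$). If $\|Q_k\|_\infty \leq 1/(1-\gamma)$ is invoked as a standing invariant established earlier in the proof, the whole argument reduces to the two displays above.
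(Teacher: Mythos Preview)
Your argument is correct and in fact slightly sharper than the paper's. Both proofs begin with the same cancellation: combining the definitions of $F$ and $M_k$ yields
\[
[F(Q_k,Y_k)+M_k(Q_k)-Q_k](s,a)=\mathds{1}_{\{(S_k,A_k)=(s,a)\}}\big(\mathcal{R}(S_k,A_k)+\gamma\max_{a'}Q_k(S_{k+1},a')-Q_k(S_k,A_k)\big),
\]
and both bound the scalar in parentheses by $2/(1-\gamma)$ via $|\mathcal{R}|\le 1$ and $\|Q_k\|_\infty\le 1/(1-\gamma)$. The difference is in how the $\ell_p$-norm is handled. The paper stops at the $\ell_\infty$ bound and then invokes the generic norm equivalence $\|x\|_p\le \|x\|_\infty/\ell_p$ with $\ell_p=(|\mathcal{S}||\mathcal{A}|)^{-1/p}$, which is where the factor $(|\mathcal{S}||\mathcal{A}|)^{2/p}$ enters. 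You instead exploit the one-hot structure to see that $\|x\|_p=\|x\|_\infty$ exactly, obtaining the tighter estimate $E_4\le 4/(1-\gamma)^2$ and then padding with $(|\mathcal{S}||\mathcal{A}|)^{2/p}\ge 1$ only to match the stated form. Your route is more direct and shows the $(|\mathcal{S}||\mathcal{A}|)^{2/p}$ factor is unnecessary here; the paper's route is slightly lossier but uses only the $\ell_\infty$ bound \eqref{eq:need}, which it reuses elsewhere.
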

The proofs of Lemmas \ref{le:negDrift}, \ref{le:E_3}, and \ref{le:E_4} are presented in Appendices \ref{pf:le:negDrift}, \ref{pf:le:E_3}, and \ref{pf:le:E_4}, respectively.

\subsection{Handling the Time-Inhomogeneous Markovian noise: A Poisson Equation Approach}\label{subsec:Poisson}
The most challenging term to handle is
\begin{align*}
    E_2 = \mathbb{E}[\langle \nabla M(Q_k - Q^*),\, F(Q_k, Y_k) - \bar{F}(Q_k, \pi_k) \rangle],
\end{align*}
which arises from the time-inhomogeneous nature of the Markov chain $\{Y_k\}$. Specifically, the transition kernel of $\{Y_k\}$ varies over time because the learning policy $\pi_k$ is time-dependent. Moreover, since no lower-bound constraints are imposed on the parameters $\epsilon_k$ and $\tau_k$ that define $\pi_k$ (cf. Eq.~(\ref{eq:policy_timevarying})), the learning policies may vary rapidly over time.

\subsubsection{The Poisson Equation}\label{sec:PE}
To handle rapidly varying time-inhomogeneous Markovian noise under only Assumption \ref{as:MC}, inspired by \cite{chandak2022concentration,haque2024stochastic}, we adopt an approach based on the \emph{Poisson equation} associated with Markov chains, which allows us to decompose the Markovian noise into a martingale-difference sequence and residual terms. It is important to note, however, that \cite{chandak2022concentration,haque2024stochastic} study off-policy Q-learning and TD-learning for policy evaluation---settings that do not involve rapidly varying time-inhomogeneous Markovian noise. 

By Lemma~\ref{le:policy_exploration} and the subsequent discussion, for any $\pi \in \Pi$, the Markov chain $\{Y_n\}$ induced by $\pi$ is irreducible and admits a unique stationary distribution $\bar{\mu}_\pi$. Therefore, for any $Q \in \mathbb{R}^{|\mathcal{S}||\mathcal{A}|}$, $\pi \in \Pi$, and $(s,a)\in\mathcal{S}\times \mathcal{A}$, there exists a mapping $[h(Q,\pi,\cdot)](s,a)$ that solves the Poisson equation \cite{douc2018markov} associated with the mapping $[F(Q,\cdot)](s,a)$:
\begin{align}
    [F(Q,y)](s,a) - [\bar{F}(Q,\pi)](s,a)
    =
    [h(Q,\pi,y)](s,a)
    -
    \sum_{y'\in\mathcal{Y}}\bar{P}_\pi(y,y')\,[h(Q,\pi,y')](s,a).
    \label{eq:PE-SARSA-1}
\end{align}
We study the properties of the solution $h(Q,\pi,y)\in\mathbb{R}^{|\mathcal{S}||\mathcal{A}|}$ in the next subsection. 
With the Poisson equation \eqref{eq:PE-SARSA-1}, we decompose the term $E_2$ from Eq. (\ref{eq:Lyapunov-decomposition}) as follows:
\begin{align}
    E_2 =\,& \mathbb{E}[\langle \nabla M(Q_k - Q^*),\, F(Q_k, Y_k) - \bar{F}(Q_k, \pi_k) \rangle]\nonumber\\
    =\,&\mathbb{E}[\langle \nabla M(Q_k - Q^*), h(Q_k, \pi_k, Y_k) - \textstyle{\sum_{y'\in\mathcal{Y}}}\bar{P}_k(Y_k,y')h(Q_k, \pi_k, y')] \nonumber \\
        =\,& \underbrace{\mathbb{E}[\langle \nabla M(Q_k - Q^*), h(Q_k, \pi_k, Y_{k+1}) - \textstyle{\sum_{y'\in\mathcal{Y}}}\bar{P}_k(Y_k,y')h(Q_k, \pi_k, y')]}_{:=E_{2,1}} \nonumber\\
        &+ \underbrace{\mathbb{E}[\langle \nabla M(Q_k - Q^*), h(Q_k, \pi_k, Y_k) \rangle - \frac{\alpha_{k+1}}{\alpha_k}\mathbb{E}[\langle \nabla M(Q_{k+1} - Q^*), h(Q_{k+1}, \pi_{k+1}, Y_{k+1}) \rangle]}_{:= E_{2,2}} \nonumber\\
        &+ \underbrace{\frac{\alpha_{k+1}}{\alpha_k}\mathbb{E}[\langle \nabla M(Q_{k+1} - Q^*) - \nabla M(Q_k - Q^*), h(Q_{k+1}, \pi_{k+1}, Y_{k+1}) \rangle]}_{:= E_{2,3}} \nonumber\\
        &+ \underbrace{\frac{\alpha_{k+1}}{\alpha_k}\mathbb{E}[\langle \nabla M(Q_k - Q^*), h(Q_{k+1}, \pi_{k+1}, Y_{k+1}) - h(Q_k, \pi_k, Y_{k+1}) \rangle]}_{:= E_{2,4}}\nonumber\\
        &+\underbrace{\left(\frac{\alpha_{k+1}}{\alpha_k}-1\right)\mathbb{E}[\langle \nabla M(Q_k - Q^*), h(Q_k, \pi_k, Y_{k+1}) \rangle]}_{:= E_{2,5}},\label{eq:Poisson_decomposition}
\end{align}
where $\bar{P}_k$ denotes the shorthand notation for $\bar{P}_{\pi_k}$. The logic behind the decomposition of $E_2$ is to construct a martingale-difference term $E_{2,1}$ while the terms $E_{2,2}$ -- $E_{2,5}$ are treated as residuals. The term $E_{2,1}$ vanishes by the tower property of conditional expectations, since the random process
$m_k := h(Q_k,\pi_k,Y_{k+1}) - \textstyle{\sum_{y'\in\mathcal{Y}}}\bar{P}_k(Y_k,y')h(Q_k,\pi_k,y')$
is a martingale difference sequence. Since $E_2$ (and hence $E_{2,1}$--$E_{2,5}$) is multiplied by $\alpha_k$ in the original error decomposition inequality~\eqref{eq:Lyapunov-decomposition}, the ratio $\alpha_{k+1}/\alpha_k$ is introduced solely to create a clean telescoping structure in $E_{2,2}$. This choice will become clear after presenting the overall Lyapunov drift inequality (cf. Proposition \ref{le:final-drift-ineq}).

To bound the terms $E_{2,3}$, $E_{2,4}$, and $E_{2,5}$, we require (i) boundedness properties of the Poisson equation solution $h(Q,\pi,\cdot)$ and (ii) a sensitivity analysis of $h(Q,\pi,y)$ with respect to $(Q,\pi)$. 
Importantly, to fully characterize the convergence rate of Q-learning with on-policy sampling and capture the exploration--exploitation trade-off, all constants in these bounds must be made explicit. In particular, they must depend only on primitive algorithm-design parameters (e.g., $\epsilon_k$, $\tau_k$, and $\alpha_k$) or on algorithm-independent parameters (e.g., $\pi_{b,\min}$, $\mu_{\pi_b,\min}$, $r_b$, and $\delta_b$) that capture fundamental properties of the underlying MDP. 

To this end, we next present a general result regarding the sensitivity analysis of the solution to the Poisson equation.

\subsubsection{Sensitivity Analysis Based on the Lazy Chain}
Consider a Markov chain with transition probability matrix $P$ over a finite state space $\mathcal{X}$, and let $d = |\mathcal{X}|$. Assume that $P$ is irreducible, and let $\mu \in \Delta(\mathcal{X})$ denote its unique stationary distribution \cite{levin2017markov}. The Poisson equation associated with a right-hand-side vector $y \in \mathbb{R}^d$ is given by
\begin{align}
    (I - P)x = y, \label{eq:PE-irred}
\end{align}
where we assume, without loss of generality, that $\mu^\top y = 0$. Let $\mathcal{P} = (P + I)/2$ denote the transition matrix of the corresponding lazy chain, which is irreducible and aperiodic, and therefore satisfies $\max_{i \in \{1, 2, \ldots, d\}} \|P^k(i, \cdot) - \mu(\cdot)\|_{\mathrm{TV}} \leq C\rho^k$ for all $k\geq 0$,
where $(C, \rho)$ are the \emph{mixing parameters} of $\mathcal{P}$. The following proposition presents several key properties of a particular solution to Eq. \eqref{eq:PE-irred}. Specifically, Proposition~\ref{prop:MC} (1) presents the boundedness of $h$, which is a classical result \cite{asmussen1992stationarity,glynn2002hoeffding} and is included here for completeness, while Proposition~\ref{prop:MC} (2) presents the sensitivity analysis of the Poisson equation solution.

\begin{proposition}\label{prop:MC}
Let $P, P_1, P_2 \in \mathbb{R}^{d \times d}$ be three irreducible stochastic matrices, and let $\mu$, $\mu_1$, and $\mu_2$ denote their corresponding stationary distributions. Then, the following results hold:
\begin{enumerate}
    \item For any $y \in \mathbb{R}^d$, the vector $x := \sum_{k=0}^\infty \mathcal{P}^k y/2$ is a solution to the Poisson equation $(I - P)x = y$. Moreover, we have $\|x\|_\infty \leq \frac{C}{1 - \rho} \|y\|_\infty$,
    where $(C,\rho)$ are the mixing parameters associated with $\mathcal{P}$.
    
    \item Let $x_1 = \sum_{k=0}^\infty \mathcal{P}_1^k y_1/2$ and $x_2 = \sum_{k=0}^\infty \mathcal{P}_2^k y_2/2$
    be the solutions to the Poisson equations $(I - P_{1})x = y_1$ and $(I - P_{2})x = y_2$, respectively. Then, we have
    \begin{align*}
    \|x_1 - x_2\|_\infty \leq\,& \frac{1}{4}\left(\frac{\log(\|P_1 - P_2\|_\infty(1-\rho_{\max}))-\log(8C_{\max})}{\log(\rho_{\max})}\right)^2 \|P_1 - P_2\|_\infty (\|y_1\|_\infty+\|y_2\|_\infty)\\
    &+ \frac{1}{2}\left(\frac{\log(\|P_1 - P_2\|_\infty(1-\rho_{\max}))-\log(8C_{\max})}{\log(\rho_{\max})}\right) \|y_1 - y_2\|_\infty.
\end{align*}
    where $C_{\max}=\max(C_1,C_2)$ and $\rho_{\max}=\max(\rho_1,\rho_2)$ with $(C_1,\rho_1)$ and $(C_2,\rho_2)$ being the mixing parameters associated with $\mathcal{P}_1$ and $\mathcal{P}_2$, respectively.
\end{enumerate}
\end{proposition}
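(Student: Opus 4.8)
The plan is to prove Part 1 first, then bootstrap Part 2 from it. For Part 1, I would start from the Neumann-series-like candidate $x = \tfrac{1}{2}\sum_{k=0}^\infty \mathcal{P}^k y$ and verify convergence: since $\mu^\top y = 0$ and $\mathcal{P}$ is irreducible and aperiodic with mixing bound $\|\mathcal{P}^k(i,\cdot)-\mu\|_{\mathrm{TV}} \le C\rho^k$, we have $\|\mathcal{P}^k y\|_\infty \le 2C\rho^k \|y\|_\infty$ (the factor $2$ coming from the $\ell_1$-$\ell_\infty$ duality of total variation, using $\mu^\top y=0$ to subtract off the mean). This gives absolute convergence and the bound $\|x\|_\infty \le \tfrac{1}{2}\sum_k 2C\rho^k\|y\|_\infty = \tfrac{C}{1-\rho}\|y\|_\infty$. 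To check it solves $(I-P)x = y$, observe $I - P = 2(I - \mathcal{P})$, so $(I-P)x = (I-\mathcal{P})\sum_k \mathcal{P}^k y$, which telescopes to $y - \lim_{k\to\infty}\mathcal{P}^{k+1}y = y$ since $\mathcal{P}^{k+1} y \to 0$.

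For Part 2, the key idea is to truncate the infinite series at a carefully chosen index $N$ and compare the two truncated sums term by term. Write $x_1 - x_2 = \tfrac{1}{2}\sum_{k=0}^{N-1}(\mathcal{P}_1^k y_1 - \mathcal{P}_2^k y_2) + \tfrac{1}{2}\sum_{k=N}^\infty \mathcal{P}_1^k y_1 - \tfrac{1}{2}\sum_{k=N}^\infty \mathcal{P}_2^k y_2$. The two tail sums are each bounded by $\tfrac{C_{\max}\rho_{\max}^N}{1-\rho_{\max}}\|y_i\|_\infty$. For the head sum, split $\mathcal{P}_1^k y_1 - \mathcal{P}_2^k y_2 = \mathcal{P}_1^k(y_1 - y_2) + (\mathcal{P}_1^k - \mathcal{P}_2^k)y_2$; the first piece contributes at most $\sum_{k=0}^{N-1}\|\mathcal{P}_1^k(y_1-y_2)\|_\infty \le N\|y_1-y_2\|_\infty$ (using that stochastic matrices are $\|\cdot\|_\infty$-nonexpansive on vectors — note the naive bound here, not the mixing bound, since $y_1-y_2$ need not be mean-zero under $\mu_1$), and the second piece uses the standard telescoping identity $\mathcal{P}_1^k - \mathcal{P}_2^k = \sum_{j=0}^{k-1}\mathcal{P}_1^{j}(\mathcal{P}_1 - \mathcal{P}_2)\mathcal{P}_2^{k-1-j}$, giving $\|\mathcal{P}_1^k - \mathcal{P}_2^k\|_\infty \le k\|\mathcal{P}_1-\mathcal{P}_2\|_\infty = \tfrac{k}{2}\|P_1-P_2\|_\infty$ (using $\mathcal{P}_i = (P_i+I)/2$). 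Summing over $k < N$ produces $\lesssim N^2 \|P_1-P_2\|_\infty \|y_2\|_\infty$.

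Collecting terms, we get a bound of the form $c_1 N^2 \|P_1-P_2\|_\infty(\|y_1\|_\infty+\|y_2\|_\infty) + c_2 N\|y_1-y_2\|_\infty + c_3 \tfrac{\rho_{\max}^N}{1-\rho_{\max}}(\|y_1\|_\infty+\|y_2\|_\infty)$. Now optimize over $N$: the tail term and the $N^2$ head term balance when $\rho_{\max}^N \approx \|P_1-P_2\|_\infty(1-\rho_{\max})/(\text{const}\cdot C_{\max})$, i.e. $N \approx \tfrac{\log(\|P_1-P_2\|_\infty(1-\rho_{\max})) - \log(8C_{\max})}{\log\rho_{\max}}$, which is exactly the logarithmic quantity appearing in the statement. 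Substituting this $N$ into the $N^2\|P_1-P_2\|_\infty$ and $N\|y_1-y_2\|_\infty$ terms yields the claimed form, with the tail term now dominated by (a constant multiple of) the head term. I would need to track the numerical constants (the $8$, the $1/4$, the $1/2$) carefully — that bookkeeping, together with justifying that the chosen $N$ is a nonnegative integer (or rounding it and absorbing the rounding into constants), is the main fiddly obstacle, though conceptually routine.

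The main obstacle I anticipate is not any single deep step but rather the interplay of choosing $N$ correctly and keeping the constants clean enough to match the stated inequality exactly: the truncation argument trades off a polynomially-growing head error against a geometrically-decaying tail error, and getting the exponent and prefactors to land on $\tfrac14(\cdots)^2$ and $\tfrac12(\cdots)$ requires being slightly careful about where factors of $2$ (from $\mathcal{P} = (P+I)/2$ and from total-variation duality) enter. A secondary subtlety is that $y_1 - y_2$ is handled with the crude nonexpansiveness bound rather than the mixing bound (since it lacks the mean-zero property with respect to either stationary distribution), which is why that term scales linearly in $N$ rather than being geometrically summable — this asymmetry between the $\|y_1-y_2\|$ term and the $\|P_1-P_2\|$ term is exactly what the statement reflects.
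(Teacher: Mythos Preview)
Your proposal is correct and follows essentially the same approach as the paper: both parts use the lazy-chain Neumann series with mixing-rate tail control for Part~1, and the truncation-plus-telescoping argument with optimized cutoff $N$ for Part~2. The only cosmetic differences are that the paper symmetrizes the head term in $y_1,y_2$ by running the decomposition twice (once with $(\mathcal{P}_1^k-\mathcal{P}_2^k)y_2$, once with $(\mathcal{P}_1^k-\mathcal{P}_2^k)y_1$) and averaging, and it artificially inserts an $n^2$ factor into the tail bound so that after choosing $n$ the tail and head combine to yield exactly the $\tfrac14(\cdots)^2$ prefactor---both are bookkeeping devices you already anticipated.
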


The proof of Proposition~\ref{prop:MC} is given in Appendix~\ref{pf:prop:MC} and relies on the geometric mixing of the lazy transition matrix $\mathcal{P}$. Since $P$ and $\mathcal{P}$ share the same stationary distribution and their corresponding Poisson equation solutions differ only by a multiplicative constant, working with the lazy chain allows us to carry out the required sensitivity analysis while leveraging geometric mixing properties that are not available for the original transition matrix $P$.

\subsubsection{Controlling the Rapidly Varying Time-inhomogeneous Markovian noise}

With Proposition \ref{prop:MC} at hand, the next step is to bound the terms $E_{2,3}$--$E_{2,5}$ in Eq.~(\ref{eq:Poisson_decomposition}). Specifically, to bound the term $E_{2,3}$, we identify $P = \bar{P}_k$ and apply Proposition \ref{prop:MC} (1); to bound the term $E_{2,4}$, we identify $P_1 = \bar{P}_{k+1}$ and $P_2 = \bar{P}_k$ and apply Proposition \ref{prop:MC} (2); and to bound the term $E_{2,5}$, we identify $P = \bar{P}_{k+1}$ and apply Proposition \ref{prop:MC} (1). This enables us to bound the terms $E_{2,3}$--$E_{2,5}$ in terms of $Q_k$, $Q_{k+1}$, $\pi_k$, $\pi_{k+1}$, and the mixing parameters associated with the lazy transition matrices $\bar{\mathcal{P}}_{k+1}$ and $\bar{\mathcal{P}}_k$. 
The results are presented in Lemmas~\ref{le:E23}, \ref{le:E24}, and \ref{le:E25}, with proofs given in Appendices~\ref{pf:le:E23}, \ref{pf:le:E24}, and \ref{pf:le:E25}, respectively. For notational simplicity, let $(\bar{C}_k,\bar{\rho}_k)$ denote the mixing parameters associated with the lazy transition matrix $\bar{\mathcal{P}}_k=(\bar{P}_k+I)/2$.

\begin{lemma}\label{le:E23}
The following inequality holds for all $k\geq 0$:
    \begin{align*}
        E_{2,3} \leq \frac{4\bar{C}_{k+1} L (|\mathcal{S}||\mathcal{A}|)^{2/p} \alpha_{k+1}}{(1 - \bar{\rho}_{k+1})(1 - \gamma)^2},
    \end{align*}
    where $L$ is the smoothness parameter of the Lyapunov function $M(\cdot)$ introduced in Lemma \ref{le:Moreau}.
\end{lemma}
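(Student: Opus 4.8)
The plan is to bound $E_{2,3}$ by first using the smoothness of the Lyapunov function to control the gradient difference $\nabla M(Q_{k+1}-Q^*)-\nabla M(Q_k-Q^*)$, then using Part~(1) of Proposition~\ref{prop:MC} to control the Poisson solution $h(Q_{k+1},\pi_{k+1},Y_{k+1})$, and finally converting the implicit mixing parameters into the primitive quantities via Lemma~\ref{le:mixing-parameters}. First I would apply Cauchy--Schwarz (or Hölder between $\|\cdot\|_p$ and its dual) inside the inner product defining $E_{2,3}$ to get
$$
E_{2,3}\le \frac{\alpha_{k+1}}{\alpha_k}\,\mathbb{E}\big[\|\nabla M(Q_{k+1}-Q^*)-\nabla M(Q_k-Q^*)\|_q\,\|h(Q_{k+1},\pi_{k+1},Y_{k+1})\|_p\big],
$$
then use $L$-smoothness of $M(\cdot)$ with respect to $\|\cdot\|_p$ (Lemma~\ref{le:Moreau}(1)) to bound the gradient difference by $L\|Q_{k+1}-Q_k\|_p$, and the update rule $Q_{k+1}-Q_k=\alpha_k(F(Q_k,Y_k)+M_k(Q_k)-Q_k)$ together with Lemma~\ref{le:F-lipschitz} / Lemma~\ref{le:E_4}-type bounds to show $\|Q_{k+1}-Q_k\|_p=O(\alpha_k (|\mathcal{S}||\mathcal{A}|)^{1/p}/(1-\gamma))$. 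This produces a factor $\alpha_k$ that cancels the $1/\alpha_k$, leaving $\alpha_{k+1}$ as required.

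Next I would bound $\|h(Q_{k+1},\pi_{k+1},Y_{k+1})\|_\infty$ (and hence its $\|\cdot\|_p$ norm, up to the norm-equivalence factor $(|\mathcal{S}||\mathcal{A}|)^{1/p}$) using Proposition~\ref{prop:MC}(1) with $P=\bar{P}_{k+1}$ and right-hand side $F(Q_{k+1},\cdot)-\bar{F}(Q_{k+1},\pi_{k+1})$: this gives $\|h(Q_{k+1},\pi_{k+1},\cdot)\|_\infty\le \frac{\bar{C}_{k+1}}{1-\bar{\rho}_{k+1}}\,\|F(Q_{k+1},\cdot)-\bar{F}(Q_{k+1},\pi_{k+1})\|_\infty$, and the latter is at most $2/(1-\gamma)$ by Lemma~\ref{le:F-lipschitz} (using the inductive/standing bound $\|Q_{k+1}\|_\infty\le 1/(1-\gamma)$, which should be maintained separately). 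Collecting the constants $L$, the two norm-equivalence factors $(|\mathcal{S}||\mathcal{A}|)^{1/p}$ (one from the gradient-difference bound, one from passing from $\|h\|_\infty$ to $\|h\|_p$), the $4/(1-\gamma)^2$ coming from the product of the $2/(1-\gamma)$ and the $O(1/(1-\gamma))$ increment bound, and the mixing factor $\bar{C}_{k+1}/(1-\bar{\rho}_{k+1})$, yields exactly the claimed form $E_{2,3}\le \frac{4\bar{C}_{k+1}L(|\mathcal{S}||\mathcal{A}|)^{2/p}\alpha_{k+1}}{(1-\bar{\rho}_{k+1})(1-\gamma)^2}$.

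The main obstacle is keeping all the constants tight enough to land on the stated bound without extra multiplicative slack: the increment $\|Q_{k+1}-Q_k\|_p$ must be shown to be at most $\alpha_k\cdot 2(|\mathcal{S}||\mathcal{A}|)^{1/p}/(1-\gamma)$ (the sparsity of the indicator in $F$ and $M_k$ helps, since only one coordinate of the increment is nonzero, which actually lets us avoid the $(|\mathcal{S}||\mathcal{A}|)^{1/p}$ factor there — a point to be careful about), and the two applications of norm equivalence $\ell_p\|\cdot\|_p\le\|\cdot\|_\infty\le u_p\|\cdot\|_p$ must be used in the correct direction with $\ell_p=(|\mathcal{S}||\mathcal{A}|)^{-1/p}$, $u_p=1$. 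A secondary subtlety is that $\nabla M$ is measured in the dual norm $\|\cdot\|_q$, so one must invoke the $L$-smoothness bound $\|\nabla M(x)-\nabla M(y)\|_q\le L\|x-y\|_p$ that follows from Lemma~\ref{le:Moreau}(1); this is standard but should be cited explicitly. Everything else is routine bookkeeping.
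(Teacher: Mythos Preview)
Your proposal is correct and follows essentially the same route as the paper's proof: H\"older to split the inner product into $\|\nabla M(Q_{k+1}-Q^*)-\nabla M(Q_k-Q^*)\|_q$ and $\|h(Q_{k+1},\pi_{k+1},Y_{k+1})\|_p$, the $L$-smoothness bound $\|\nabla M(x)-\nabla M(y)\|_q\le L\|x-y\|_p$ together with the update rule and Eq.~(\ref{eq:need}) to control the increment, and Proposition~\ref{prop:MC}(1) with Lemma~\ref{le:F-lipschitz} to bound $\|h\|_\infty$, picking up the two norm-equivalence factors exactly as in the paper. Your parenthetical about sparsity of the increment is a valid observation that would in fact give a sharper bound (only one $(|\mathcal{S}||\mathcal{A}|)^{1/p}$), but the paper simply uses the crude conversion $\|\cdot\|_p\le \ell_p^{-1}\|\cdot\|_\infty$ and lands on the stated constant.
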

\begin{lemma}\label{le:E24}
    The following inequality holds for all $k\geq 0$:
    \begin{align*}
        E_{2,4} \leq \frac{\alpha_{k+1}}{2\alpha_k}\left(1-\frac{u_m}{\ell_m}\gamma_k\right)\mathbb{E}[M(Q_k - Q^*)] + \frac{\alpha_{k+1} N_k^2}{\alpha_k\ell_m^{2}\left(1-\frac{u_m}{\ell_m}\gamma_k\right)},
    \end{align*}
    where
    \begin{align*}
        N_k=\,&\frac{5}{1-\gamma}\left(\frac{\log(g_k(1-\bar{\rho}_{k+1}))-\log(8\bar{C}_{k+1})}{\log(\bar{\rho}_{k+1})}\right)^2 g_k,\\
        g_k=\,&2|\epsilon
    _k-\epsilon_{k+1}|+\frac{2\alpha_k}{\tau_k(1-\gamma)}+\frac{|\tau_k-\tau_{k+1}|}{\tau_k\tau_{k+1}(1-\gamma)}.
    \end{align*}
\end{lemma}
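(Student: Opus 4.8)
The plan is to control $E_{2,4}$ from Eq.~\eqref{eq:Poisson_decomposition} in three moves: a Cauchy--Schwarz step in the Lyapunov norm, a sensitivity bound for the Poisson solution via Proposition~\ref{prop:MC}, and a Young's inequality that peels off a small multiple of the Lyapunov function (to be re-absorbed into the negative drift of Lemma~\ref{le:negDrift} when the recursion is solved). By Lemma~\ref{le:Moreau}(2), $M(Q)=\tfrac12\|Q\|_m^2$, so $\langle\nabla M(Q_k-Q^*),v\rangle\le\|Q_k-Q^*\|_m\|v\|_m$; combining this with $\|v\|_m\le\ell_m^{-1}\|v\|_\infty$ (Lemma~\ref{le:Moreau}(3)) and $\|Q_k-Q^*\|_m=\sqrt{2M(Q_k-Q^*)}$ yields
\[
E_{2,4}\;\le\;\frac{\alpha_{k+1}}{\alpha_k\ell_m}\,\mathbb{E}\!\left[\sqrt{2M(Q_k-Q^*)}\;\bigl\|h(Q_{k+1},\pi_{k+1},Y_{k+1})-h(Q_k,\pi_k,Y_{k+1})\bigr\|_\infty\right].
\]

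The heart of the proof is the deterministic estimate $\|h(Q_{k+1},\pi_{k+1},y)-h(Q_k,\pi_k,y)\|_\infty\le N_k$ for every $y\in\mathcal{Y}$. Since $h(Q,\pi,\cdot)$ solves $(I-\bar{P}_\pi)h(Q,\pi,\cdot)=F(Q,\cdot)-\bar{F}(Q,\pi)$ componentwise in $\mathbb{R}^{|\mathcal{S}||\mathcal{A}|}$ (cf.\ Eq.~\eqref{eq:PE-SARSA-1}), I would invoke Proposition~\ref{prop:MC}(2) with $P_1=\bar{P}_{k+1}$, $P_2=\bar{P}_k$, $y_1=F(Q_{k+1},\cdot)-\bar{F}(Q_{k+1},\pi_{k+1})$, $y_2=F(Q_k,\cdot)-\bar{F}(Q_k,\pi_k)$, and four supporting estimates. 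First, $\|y_i\|_\infty\le 2/(1-\gamma)$ follows from Lemma~\ref{le:F-lipschitz} together with the invariant $\|Q_k\|_\infty\le 1/(1-\gamma)$ maintained by Algorithm~\ref{algo:Q-learning}. Second, valid mixing parameters for the lazy chains $\bar{\mathcal{P}}_k,\bar{\mathcal{P}}_{k+1}$ are supplied by Lemma~\ref{le:mixing-parameters}, and since the right-hand side of Proposition~\ref{prop:MC}(2) is monotone increasing in both $C_{\max}$ and $\rho_{\max}$, the final bound can be phrased using $(\bar{C}_{k+1},\bar{\rho}_{k+1})$ alone. Third, $\|\bar{P}_{k+1}-\bar{P}_k\|_\infty\le g_k$: because $\bar{P}_k((s,a),(s',a'))=p(s'\mid s,a)\pi_k(a'\mid s')$, this distance is at most $\max_{s'}\|\pi_{k+1}(\cdot\mid s')-\pi_k(\cdot\mid s')\|_1$, and expanding the form~\eqref{eq:policy} splits the policy increment into an $\epsilon$-part ($2|\epsilon_k-\epsilon_{k+1}|$), a softmax-in-$Q$ part ($\mathcal{O}(1/\tau_k)$-Lipschitzness of the softmax times the one-step increment $\|Q_{k+1}-Q_k\|_\infty=\mathcal{O}(\alpha_k/(1-\gamma))$, producing the $\tfrac{2\alpha_k}{\tau_k(1-\gamma)}$ term), and a softmax-in-temperature part ($\mathcal{O}(|\tau_k-\tau_{k+1}|/(\tau_k\tau_{k+1}))$ times $\|Q_k\|_\infty\le 1/(1-\gamma)$). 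Fourth, $\|y_1-y_2\|_\infty$ is a constant multiple of $g_k$, via $\|F(Q_{k+1},\cdot)-F(Q_k,\cdot)\|_\infty\le\|Q_{k+1}-Q_k\|_\infty$ (Lemma~\ref{le:F-lipschitz}) and $\|\bar{F}(Q_{k+1},\pi_{k+1})-\bar{F}(Q_k,\pi_k)\|_\infty\le 3\|Q_{k+1}-Q_k\|_\infty+\tfrac{2}{1-\gamma}\|\bar{\mu}_{k+1}-\bar{\mu}_k\|_\infty$ (Lemma~\ref{le:f-bar}(4)), after re-expressing $\|\bar{\mu}_{k+1}-\bar{\mu}_k\|_\infty$ through $\|\bar{P}_{k+1}-\bar{P}_k\|_\infty\le g_k$ by a stationary-distribution perturbation bound. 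Feeding these into Proposition~\ref{prop:MC}(2), and using that the log-ratio prefactor there exceeds $1$ (so it is dominated by its square), collapses the two terms into exactly the quantity $N_k$ stated in the lemma.

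With $\|h(Q_{k+1},\pi_{k+1},Y_{k+1})-h(Q_k,\pi_k,Y_{k+1})\|_\infty\le N_k$ in hand, I would apply Young's inequality $\sqrt{2M}\cdot\tfrac{N_k}{\ell_m}\le\beta M+\tfrac{N_k^2}{2\beta\ell_m^2}$ with $\beta=\tfrac12\bigl(1-\tfrac{u_m}{\ell_m}\gamma_k\bigr)>0$ (positivity is part of the Lyapunov setup, cf.\ Lemma~\ref{le:negDrift}), and take expectations, which reproduces the claimed inequality exactly. The main obstacle I anticipate is the third and fourth supporting estimates: one must trace how a perturbation of $(\epsilon_k,\tau_k,Q_k)$ propagates through the softmax policy map $f_k(\cdot)$ to $\bar{P}_k$ and further to its stationary distribution $\bar{\mu}_k$, keeping all constants explicit in terms of the primitive quantities $(\mu_{\pi_b,\min},\pi_{b,\min},\delta_b,r_b)$ and the design parameters, and then verify that these estimates together with the mixing-parameter substitution from Lemma~\ref{le:mixing-parameters} combine cleanly into the single constant $5/(1-\gamma)$ appearing in $N_k$.
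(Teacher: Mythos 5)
Your proposal is correct and follows essentially the same route as the paper's proof: the Hölder/Cauchy--Schwarz step in the $\|\cdot\|_m$ norm, the sensitivity bound from Proposition~\ref{prop:MC}(2) with $P_1=\bar{P}_{k+1}$, $P_2=\bar{P}_k$ and the forcing functions $F(Q_i,\cdot)-\bar{F}(Q_i,\pi_i)$, the decomposition of $\|\pi_{k+1}-\pi_k\|_\infty\le g_k$ into the $\epsilon$-, softmax-in-$Q$-, and temperature-parts, the stationary-distribution perturbation bound (the paper's Lemma~\ref{le:mu-pi-lipschitz}), and the final Young's inequality with $\beta=\tfrac12(1-\tfrac{u_m}{\ell_m}\gamma_k)$. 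The only cosmetic difference is that the paper carries $C_{\max}=\max(\bar{C}_k,\bar{C}_{k+1})$ and $\rho_{\max}=\max(\bar{\rho}_k,\bar{\rho}_{k+1})$ through the intermediate steps, whereas you replace them by $(\bar{C}_{k+1},\bar{\rho}_{k+1})$ via a monotonicity remark.
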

\begin{lemma}\label{le:E25}
    The following inequality holds for all $k\geq 0$:
    \begin{align*}
        E_{2,5} \leq \frac{1}{2}\left(1-\frac{u_m}{\ell_m}\gamma_k\right) \mathbb{E}[M(Q_k - Q^*)] + \frac{4(\alpha_{k+1}-\alpha_{k})^{2}\bar{C}_k^{2}}{\alpha_{k}^{2}\ell_{m}^{2}(1 - \bar{\rho}_k)^{2} (1-\gamma)^{2} \left(1-\frac{u_m}{\ell_m}\gamma_k\right)}.
    \end{align*}
\end{lemma}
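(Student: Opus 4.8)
The statement to prove is Lemma~\ref{le:E25}, which bounds the term $E_{2,5} = \left(\frac{\alpha_{k+1}}{\alpha_k}-1\right)\mathbb{E}[\langle \nabla M(Q_k - Q^*), h(Q_k, \pi_k, Y_{k+1})\rangle]$ arising from the variation of the stepsize across consecutive iterations.

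\textbf{Proof proposal.}

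The plan is to apply Cauchy--Schwarz (or more precisely the generalized Cauchy--Schwarz inequality for the dual norm pair associated with $\|\cdot\|_p$) to the inner product, and then use Young's inequality to split the resulting product into a ``negative drift'' term proportional to $\mathbb{E}[M(Q_k-Q^*)]$ and a residual term. First I would bound $|\langle \nabla M(Q_k-Q^*), h(Q_k,\pi_k,Y_{k+1})\rangle| \leq \|\nabla M(Q_k-Q^*)\|_q \|h(Q_k,\pi_k,Y_{k+1})\|_p$ where $q$ is the conjugate exponent. Using the properties of the Moreau-envelope-type Lyapunov function $M(\cdot)$ from Lemma~\ref{le:Moreau} (in particular that $\nabla M$ is related to the norm $\|\cdot\|_m$ via $M(Q)=\frac12\|Q\|_m^2$, so $\|\nabla M(Q)\|$ in the appropriate dual sense scales like $\|Q\|_m \lesssim \frac{1}{\ell_m}\|Q\|_\infty$), I would get a factor of roughly $\frac{1}{\ell_m}\|Q_k-Q^*\|_\infty$, which I then relate back to $\sqrt{M(Q_k-Q^*)}$ via $\|Q_k-Q^*\|_\infty \leq u_m\|Q_k-Q^*\|_m = u_m\sqrt{2M(Q_k-Q^*)}$.

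Next I would bound $\|h(Q_k,\pi_k,Y_{k+1})\|_\infty$ using Proposition~\ref{prop:MC}(1) with $P = \bar P_k = \bar P_{\pi_k}$ and forcing function $y = F(Q_k,\cdot) - \bar F(Q_k,\pi_k)$; this gives $\|h(Q_k,\pi_k,\cdot)\|_\infty \leq \frac{\bar C_k}{1-\bar\rho_k}\|F(Q_k,\cdot)-\bar F(Q_k,\pi_k)\|_\infty \leq \frac{\bar C_k}{1-\bar\rho_k}\cdot\frac{2}{1-\gamma}$, where the last step invokes the second bound in Lemma~\ref{le:F-lipschitz} (which applies since $\|Q_k\|_\infty \leq 1/(1-\gamma)$ is maintained inductively by the algorithm). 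Then (after passing from $\|\cdot\|_p$ to $\|\cdot\|_\infty$, possibly picking up a dimensional factor absorbed into constants, or working directly in $\|\cdot\|_\infty$ throughout), the full product is bounded by a constant times $\frac{|\alpha_{k+1}-\alpha_k|}{\alpha_k}\cdot\frac{\bar C_k}{(1-\bar\rho_k)(1-\gamma)}\cdot\frac{1}{\ell_m}\sqrt{M(Q_k-Q^*)}$. Applying Young's inequality $ab \leq \frac{\beta}{2}a^2 + \frac{1}{2\beta}b^2$ with $a = \sqrt{M(Q_k-Q^*)}$, $b$ the rest of the product, and the weight $\beta$ chosen to be $\left(1-\frac{u_m}{\ell_m}\gamma_k\right)$, produces exactly the first term $\frac12\left(1-\frac{u_m}{\ell_m}\gamma_k\right)\mathbb{E}[M(Q_k-Q^*)]$ and a residual term of the claimed form $\frac{4(\alpha_{k+1}-\alpha_k)^2\bar C_k^2}{\alpha_k^2\ell_m^2(1-\bar\rho_k)^2(1-\gamma)^2(1-u_m\gamma_k/\ell_m)}$.

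The main obstacle I anticipate is bookkeeping the constant factors precisely so they match the stated bound — in particular tracking the factor of $2$ from $\|F(Q_k,\cdot)-\bar F(Q_k,\pi_k)\|_\infty \leq \frac{2}{1-\gamma}$, the factor $\sqrt 2$ from $M = \frac12\|\cdot\|_m^2$, and the $u_m/\ell_m$ ratios, then verifying that the Young's-inequality weight $\beta = (1-u_m\gamma_k/\ell_m)$ indeed yields the coefficient $4$ (rather than some other small constant) in the residual. A secondary subtlety is that this bound does not require any sensitivity analysis of $h$ in $(Q,\pi)$ — only the boundedness part of Proposition~\ref{prop:MC} — because $h$ is evaluated at the \emph{same} $(Q_k,\pi_k)$ in both slots, so unlike $E_{2,4}$ this term is comparatively routine once the norm-conversion constants are pinned down. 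One should also note that when $\alpha_k$ is constant (as in Theorem~\ref{thm1}), $\alpha_{k+1}-\alpha_k = 0$ and this lemma gives $E_{2,5} \leq \frac12(1-u_m\gamma_k/\ell_m)\mathbb{E}[M(Q_k-Q^*)]$, which is the only part that survives and gets folded into the drift.
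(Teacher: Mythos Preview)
Your proposal is correct and follows essentially the same route as the paper: bound the inner product via duality, bound $\|h(Q_k,\pi_k,\cdot)\|_\infty$ using Proposition~\ref{prop:MC}(1) together with Lemma~\ref{le:F-lipschitz} to get $\frac{2\bar C_k}{(1-\bar\rho_k)(1-\gamma)}$, and then apply Young's inequality with weight $\beta = 1 - \frac{u_m}{\ell_m}\gamma_k$. The only cosmetic difference is that the paper works directly in the $\|\cdot\|_m$/$\|\cdot\|_m^*$ pair (writing $\nabla M(Q) = \|Q\|_m\,\nabla\|Q\|_m$ and using $\|\nabla\|Q\|_m\|_m^*\le 1$), which lets it keep $\|Q_k-Q^*\|_m = \sqrt{2M(Q_k-Q^*)}$ without your detour through $\|\cdot\|_\infty$ and back via $u_m$; this is exactly how the constant lands on $4$ rather than something involving $u_m^2$.
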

Now that we have successfully bounded all the terms on the right-hand side of Eq.~(\ref{eq:Poisson_decomposition}), we arrive at the following result for controlling the error induced by time-inhomogeneous Markovian noise.

\begin{lemma}\label{le:E_2}
    The following inequality holds for all $k\geq 0$:
    \begin{align*}
        E_2\leq\,&\left(1-\frac{u_m}{\ell_m}\gamma_k\right) \mathbb{E}[M(Q_k - Q^*)]+E_{2,2}+\frac{4\bar{C}_{k+1} L (|\mathcal{S}||\mathcal{A}|)^{2/p} \alpha_{k+1}}{(1 - \bar{\rho}_{k+1})(1 - \gamma)^2}\\
        &+ \frac{N_k^2}{\ell_m^{2}\left(1-\frac{u_m}{\ell_m}\gamma_k\right)}+\frac{4(\alpha_{k+1}-\alpha_{k})^{2}\bar{C}_k^{2}}{\alpha_{k}^{2}\ell_{m}^{2}(1 - \bar{\rho}_k)^{2} (1-\gamma)^{2} \left(1-\frac{u_m}{\ell_m}\gamma_k\right)}.
    \end{align*}
\end{lemma}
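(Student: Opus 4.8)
The plan is to assemble Lemma~\ref{le:E_2} by combining the decomposition in Eq.~(\ref{eq:Poisson_decomposition}) with the bounds already established for each of its five pieces. The skeleton is straightforward: write $E_2 = E_{2,1} + E_{2,2} + E_{2,3} + E_{2,4} + E_{2,5}$, substitute $E_{2,1} = 0$ (which follows from the martingale-difference property of $m_k$ and the tower property, as noted after Eq.~(\ref{eq:Poisson_decomposition})), carry $E_{2,2}$ through unchanged since it has the telescoping structure that will be resolved only after the recursion is solved, and then plug in the upper bounds from Lemmas~\ref{le:E23}, \ref{le:E24}, and \ref{le:E25}.

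The only real bookkeeping is collecting the $\mathbb{E}[M(Q_k - Q^*)]$ coefficients. Lemma~\ref{le:E24} contributes a factor $\tfrac{\alpha_{k+1}}{2\alpha_k}(1 - \tfrac{u_m}{\ell_m}\gamma_k)$ and Lemma~\ref{le:E25} contributes $\tfrac12(1 - \tfrac{u_m}{\ell_m}\gamma_k)$. To get the clean coefficient $(1 - \tfrac{u_m}{\ell_m}\gamma_k)$ stated in Lemma~\ref{le:E_2}, I would invoke the (implicit) assumption that the stepsize sequence is non-increasing, so $\alpha_{k+1} \le \alpha_k$ and hence $\tfrac{\alpha_{k+1}}{2\alpha_k} \le \tfrac12$; adding the two halves gives exactly $(1 - \tfrac{u_m}{\ell_m}\gamma_k)$. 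Similarly, in the residual term from Lemma~\ref{le:E24}, the prefactor $\tfrac{\alpha_{k+1}}{\alpha_k} \le 1$ is dropped to obtain $\tfrac{N_k^2}{\ell_m^2(1 - \tfrac{u_m}{\ell_m}\gamma_k)}$. The remaining two terms — the $E_{2,3}$ bound $\tfrac{4\bar{C}_{k+1} L (|\mathcal{S}||\mathcal{A}|)^{2/p}\alpha_{k+1}}{(1 - \bar{\rho}_{k+1})(1-\gamma)^2}$ and the $E_{2,5}$ residual $\tfrac{4(\alpha_{k+1}-\alpha_k)^2 \bar{C}_k^2}{\alpha_k^2 \ell_m^2(1 - \bar{\rho}_k)^2(1-\gamma)^2(1 - \tfrac{u_m}{\ell_m}\gamma_k)}$ — are carried verbatim. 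Summing everything yields precisely the claimed inequality.

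There is essentially no mathematical obstacle here; Lemma~\ref{le:E_2} is a pure aggregation step, and all the difficulty has already been absorbed into Proposition~\ref{prop:MC}, Lemma~\ref{le:mixing-parameters}, and the individual bounds on $E_{2,3}$--$E_{2,5}$. The one point requiring a little care is making sure the coefficient of $\mathbb{E}[M(Q_k - Q^*)]$ does not exceed $(1 - \tfrac{u_m}{\ell_m}\gamma_k)$, since this term will later be matched against the negative drift $-2(1 - \tfrac{u_m}{\ell_m}\gamma_k)\mathbb{E}[M(Q_k - Q^*)]$ from Lemma~\ref{le:negDrift} (via $E_1$) so that the net drift remains strictly negative after also accounting for the $O(\alpha_k^2)$ contribution of $E_3$ and $E_4$. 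As long as the stepsizes are non-increasing — which is the standing convention and is consistent with the constant-stepsize instantiation in Theorem~\ref{thm1} — the bound goes through exactly as stated.
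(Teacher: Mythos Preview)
Your proposal is correct and matches the paper's approach exactly: the paper states that Lemma~\ref{le:E_2} ``directly follows from Lemmas~\ref{le:E23}, \ref{le:E24}, and~\ref{le:E25}, and hence is omitted,'' and your aggregation reproduces precisely this. Your observation that the non-increasing stepsize condition $\alpha_{k+1}\le\alpha_k$ is implicitly needed to collapse the coefficient of $\mathbb{E}[M(Q_k-Q^*)]$ to $(1-\tfrac{u_m}{\ell_m}\gamma_k)$ is a correct and useful point that the paper leaves unstated.
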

The proof of Lemma~\ref{le:E_2} directly follows from Lemmas~\ref{le:E23}, \ref{le:E24}, and~\ref{le:E25}, and hence is omitted. 

\subsection{Eliminating Implicit Algorithm-Dependent Parameters}\label{subsec:implicit_to_explicit}
Having established upper bounds for all terms on the right-hand side of~\eqref{eq:Lyapunov-decomposition}, we next derive an overall Lyapunov drift inequality. Before doing so, we note that the parameter $\gamma_k=1-\min_{s,a}\bar{\mu}_k(s,a)(1-\gamma)$ in Lemma~\ref{le:negDrift}, as well as the mixing parameters $(\bar{C}_k,\bar{\rho}_k)$ in Lemma~\ref{le:E_2}, depend implicitly on the learning policy $\pi_k$ generated by Algorithm~\ref{algo:Q-learning}. To ensure that the final results do not involve such implicit, algorithm-dependent quantities, we further upper bound $\gamma_k$, $\bar{C}_k$, and $\bar{\rho}_k$ in terms of explicit algorithm design parameters (e.g., $\epsilon_k$ and $\tau_k$) and primitive problem parameters $(\mu_{\pi_b,\min}, \pi_{b,\min}, \delta_b, r_b)$ that characterize the fundamental exploration properties of the underlying MDP (see Assumption~\ref{as:MC} and the subsequent discussion). This step is essential for quantitatively characterizing the exploration--exploitation trade-off in on-policy Q-learning.

The following lemma provides a lower bound on the minimum component of the stationary distribution $\bar{\mu}_\pi$ and upper bounds on the mixing parameters $(\bar{C}_\pi, \bar{\rho}_\pi)$ for any policy $\pi \in \Pi$.
The proof of Lemma~\ref{le:implicit_to_explicit} is given in Appendix~\ref{pf:le:mixing-parameters}.

\begin{lemma}\label{le:implicit_to_explicit}
Suppose that Assumption~\ref{as:MC} holds. For any policy $\pi \in \Pi$, let $\pi_{\min} := \min_{s,a}\pi(a\mid s)$. Then, the following results hold:
\begin{enumerate}[(1)]
    \item $\min_{s,a}\bar{\mu}_\pi(s,a) \ge \pi_{\min}^{r_b+1}\,\delta_b\,\mu_{\pi_b,\min}$,
    \item $\bar{C}_\pi \le (1 - \frac{1}{2}\delta_b\,\pi_{\min}^{r_b+1}\,\mu_{\pi_b,\min}\,\pi_{b,\min})^{-1}$ and $\bar{\rho}_\pi \le (1 - \frac{1}{2}\delta_b\,\pi_{\min}^{r_b+1}\,\mu_{\pi_b,\min}\,\pi_{b,\min})^{1/(r_b+1)}$.
\end{enumerate}
\end{lemma}
\begin{remark}
    According to Lemma~\ref{le:implicit_to_explicit}, as the policy becomes close to deterministic, i.e., $\pi_{\min}=0$, the lower bound on $\min_{s,a}\bar{\mu}_\pi(s,a)$ can vanish. Appendix~\ref{ap:ass_necessity} provides an explicit example demonstrating this behavior. This observation highlights the necessity of enforcing active exploration in the algorithm, as ensured by our $\epsilon$-softmax policy structure \eqref{def:softmax_general}. This lemma also highlights our technical contribution of working under minimal assumptions. In contrast, existing studies on RL with time-varying policies typically impose strong conditions, such as uniform ergodicity along the entire trajectory, with uniformly bounded mixing rates and stationary distributions bounded away from zero \cite{liu2025linear,zou2019finite}, under which exploration challenges—and the need for Lemma~\ref{le:implicit_to_explicit}—do not arise. Further discussion is provided in Appendix~\ref{ap:ass_necessity}.
\end{remark}

In view of the structure of the learning policies in~\eqref{def:softmax_general}, we have
$\min_{s,a}\pi_k(a\mid s) \ge \lambda_k:=\epsilon_k/|\mathcal{A}|$ for all $k \ge 0$. Combining this observation with Lemma~\ref{le:implicit_to_explicit} yields
\begin{align*}
    \gamma_k &\le 1 - \lambda_k^{r_b+1}\,\delta_b\,\mu_{\pi_b,\min}(1-\gamma),\\
    \bar{C}_k &\le \left(1 - \frac{1}{2}\delta_b\,\lambda_k^{r_b+1}\,\mu_{\pi_b,\min}\,\pi_{b,\min}\right)^{-1},\quad 
    \bar{\rho}_k \le \left(1 - \frac{1}{2}\delta_b\,\lambda_k^{r_b+1}\,\mu_{\pi_b,\min}\,\pi_{b,\min}\right)^{1/(r_b+1)}.
\end{align*}

\subsection{Establishing the Lyapunov Drift Inequality}\label{subsec:final-drift-ineq}
Having obtained the bounds on the terms $E_{1},\ldots,E_{4}$ in Eq. (\ref{eq:Lyapunov-decomposition}), we are now ready to put them together to get the one-step drift inequality. 
\begin{proposition}\label{le:final-drift-ineq}
    The following inequality holds for all $k\geq 0$
    \begin{align*}
        \mathbb{E}[M(Q_{k+1}-Q^*)]
        \leq\, & \left[ 1 - \alpha_{k}\left(1-\frac{u_m}{\ell_m}\gamma_k\right) \right] \mathbb{E}[M(Q_k-Q^*)]+\alpha_kE_{2,2}+ \frac{\alpha_{k} N_k^2}{\ell_{m}^{2}\left(1-\frac{u_m}{\ell_m}\gamma_k\right)}\\
        &+ \frac{6\bar{C}_{k+1} L (|\mathcal{S}||\mathcal{A}|)^{2/p} \alpha_{k}^{2}}{(1 - \bar{\rho}_{k+1})(1 - \gamma)^2} 
        + \frac{4(\alpha_{k+1}-\alpha_{k})^{2}\bar{C}_k^{2}}{\alpha_{k}(1 - \bar{\rho}_k)^{2} (1-\gamma)^{2} \left(1-\frac{u_m}{\ell_m}\gamma_k\right)}.
    \end{align*}
\end{proposition}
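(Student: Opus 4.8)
The plan is to substitute the bounds on $E_1,\dots,E_4$ (Lemmas~\ref{le:negDrift}, \ref{le:E_3}, \ref{le:E_4}, together with the refined bound on $E_2$ from Lemma~\ref{le:E_2}) into the Lyapunov decomposition~(\ref{eq:Lyapunov-decomposition}), and then collect the $\mathbb{E}[M(Q_k-Q^*)]$ terms to extract the contraction factor. Concretely, starting from
\[
\mathbb{E}[M(Q_{k+1}-Q^*)] \leq \mathbb{E}[M(Q_k-Q^*)] + \alpha_k E_1 + \alpha_k E_2 + \alpha_k E_3 + \tfrac{L\alpha_k^2}{2}E_4,
\]
I would use $E_3 = 0$ (Lemma~\ref{le:E_3}) and $E_4 \leq \frac{4(|\mathcal{S}||\mathcal{A}|)^{2/p}}{(1-\gamma)^2}$ (Lemma~\ref{le:E_4}), which contributes the $\frac{2L\alpha_k^2(|\mathcal{S}||\mathcal{A}|)^{2/p}}{(1-\gamma)^2}$ term. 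The drift term gives $\alpha_k E_1 \leq -2\alpha_k\big(1-\tfrac{u_m}{\ell_m}\gamma_k\big)\mathbb{E}[M(Q_k-Q^*)]$, and Lemma~\ref{le:E_2} gives another $+\alpha_k\big(1-\tfrac{u_m}{\ell_m}\gamma_k\big)\mathbb{E}[M(Q_k-Q^*)]$ on the right-hand side, along with $\alpha_k E_{2,2}$, the $E_{2,3}$-type term $\frac{4\bar{C}_{k+1}L(|\mathcal{S}||\mathcal{A}|)^{2/p}\alpha_k\alpha_{k+1}}{(1-\bar{\rho}_{k+1})(1-\gamma)^2}$, the $E_{2,4}$ residual $\frac{\alpha_{k+1}N_k^2}{\ell_m^2(1-\tfrac{u_m}{\ell_m}\gamma_k)}$, and the $E_{2,5}$ residual involving $(\alpha_{k+1}-\alpha_k)^2$.

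Next I would combine the two $\mathbb{E}[M(Q_k-Q^*)]$ coefficients: $1 - 2\alpha_k(1-\tfrac{u_m}{\ell_m}\gamma_k) + \alpha_k(1-\tfrac{u_m}{\ell_m}\gamma_k) = 1 - \alpha_k(1-\tfrac{u_m}{\ell_m}\gamma_k)$, which is exactly the contraction factor in the statement. For the remaining noise and variance terms I would simply bound $\alpha_{k+1} \leq \alpha_k$ (or, if one keeps $\alpha_k$ general, absorb the ratio $\alpha_{k+1}/\alpha_k$, which the earlier lemmas already carry) to merge the $E_{2,3}$ term with the $E_4$ term into the single coefficient $\frac{6\bar{C}_{k+1}L(|\mathcal{S}||\mathcal{A}|)^{2/p}\alpha_k^2}{(1-\bar{\rho}_{k+1})(1-\gamma)^2}$ — the factor $6 = 2 + 4$ arising from $2L\alpha_k^2(\cdots)$ contributed by $E_4$ plus $4\bar{C}_{k+1}L\alpha_k\alpha_{k+1}(\cdots) \leq 4\bar{C}_{k+1}L\alpha_k^2(\cdots)$ contributed by $E_{2,3}$ (using $\bar{C}_{k+1}\geq 1$ to absorb the lone $E_4$ constant under the same prefactor). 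The $E_{2,4}$ residual carries over verbatim as $\frac{\alpha_k N_k^2}{\ell_m^2(1-\tfrac{u_m}{\ell_m}\gamma_k)}$, and the $E_{2,5}$ residual becomes $\frac{4(\alpha_{k+1}-\alpha_k)^2\bar{C}_k^2}{\alpha_k(1-\bar{\rho}_k)^2(1-\gamma)^2(1-\tfrac{u_m}{\ell_m}\gamma_k)}$ after pulling the $\alpha_k$ through.

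This is essentially a bookkeeping argument, so the main obstacle is not conceptual but organizational: one must track the factors $u_m/\ell_m$, $\ell_m$, the stepsize ratios $\alpha_{k+1}/\alpha_k$, and the mixing constants $(\bar{C}_k,\bar{\rho}_k)$ consistently across all five sub-lemmas, and verify that the $\mathbb{E}[M(Q_k-Q^*)]$ contributions from $E_{2,4}$ and $E_{2,5}$ (each of magnitude $\tfrac12(1-\tfrac{u_m}{\ell_m}\gamma_k)\mathbb{E}[M(\cdot)]$, scaled by $\alpha_{k+1}/\alpha_k$ for $E_{2,4}$) together do not exceed the $\alpha_k(1-\tfrac{u_m}{\ell_m}\gamma_k)\mathbb{E}[M(\cdot)]$ budget available from combining $E_1$ and the leading $E_2$ term — i.e., one needs $\tfrac12 + \tfrac12 \cdot \tfrac{\alpha_{k+1}}{\alpha_k} \le 1$, which holds when $\alpha_{k+1}\le\alpha_k$, or more care when stepsizes are non-monotone. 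Collecting everything yields the stated inequality.
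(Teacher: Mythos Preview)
Your proposal is correct and mirrors exactly what the paper does: the paper states that the proposition ``trivially follows from combining Eq.~(\ref{eq:Lyapunov-decomposition}) with Lemmas~\ref{le:negDrift}, \ref{le:E_2}, \ref{le:E_3}, and~\ref{le:E_4}'' and omits the details, whereas you have carried out precisely that bookkeeping, including the correct identification of the contraction factor $1-\alpha_k(1-\tfrac{u_m}{\ell_m}\gamma_k)$, the merge $4+2=6$ via $\bar{C}_{k+1}\geq 1$ and $1/(1-\bar\rho_{k+1})\geq 1$, and the implicit use of $\alpha_{k+1}\leq\alpha_k$ to absorb the $\alpha_{k+1}/\alpha_k$ ratios from $E_{2,3}$ and $E_{2,4}$. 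Your observation that the budget inequality $\tfrac12+\tfrac12\cdot\tfrac{\alpha_{k+1}}{\alpha_k}\leq 1$ requires non-increasing stepsizes is a valid caveat that the paper leaves implicit in the passage from Lemmas~\ref{le:E24}--\ref{le:E25} to Lemma~\ref{le:E_2}.
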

The proof of Proposition~\ref{le:final-drift-ineq} follows directly by combining Eq.~\eqref{eq:Lyapunov-decomposition} with Lemmas~\ref{le:negDrift}, \ref{le:E_2}, \ref{le:E_3}, and~\ref{le:E_4}, and is therefore omitted. From the right-hand side of the bound in Proposition~\ref{le:final-drift-ineq}, the first term is contractive, the second term $\alpha_k E_{2,2}$ admits a telescoping structure (see \eqref{eq:Poisson_decomposition} for the expression of $E_{2,2}$), and the remaining terms are dominated, in order, by the negative drift.

Proposition~\ref{le:final-drift-ineq} establishes the foundation for deriving the convergence rate of Algorithm~\ref{algo:Q-learning}. In particular, for \emph{arbitrary} choices of stepsizes $\{\alpha_k\}$ and learning policy parameters $\{\epsilon_k\}$ and $\{\tau_k\}$—including both constant and diminishing sequences—the convergence rates of Algorithm~\ref{algo:Q-learning} can be obtained by repeatedly invoking Proposition~\ref{le:final-drift-ineq}. For the purpose of proving Theorem~\ref{thm1}, we henceforth focus on the constant-parameter case by setting $\alpha_k \equiv \alpha$, $\epsilon_k \equiv \epsilon$, and $\tau_k \equiv \tau$. The final steps in proving Theorem~\ref{thm1} are as follows:

\begin{itemize}
    \item Repeatedly applying the one-step drift inequality in Proposition~\ref{le:final-drift-ineq} to obtain an overall bound on $\mathbb{E}[M(Q_k - Q^*)]$, and using Lemma~\ref{le:Moreau} to translate this bound into one on $\mathbb{E}[\|Q_k - Q^*\|_\infty^2]$.
    \item Using Lemma \ref{le:implicit_to_explicit} to make all parameters explicit in terms of either the primitive algorithm design parameters (e.g., $\epsilon$ and $\tau$) or the algorithm-independent parameters $(\mu_{\pi_b,\min}, \pi_{b,\min}, \delta_b, r_b)$ that capture the fundamental properties of the underlying MDP.
    \item Fixing the tunable parameters $p$ and $\theta$ used in defining the Lyapunov function (cf. Eq.~(\ref{def:Moreau})).
\end{itemize}
The details are presented in Appendix~\ref{pf:le:Qk-final-bound-constant}. The proof of Theorem~\ref{thm1} is thus completed after these final steps.

\section{Proof of Theorem \ref{thm2}}\label{sec:proof2}
To prove Theorem~\ref{thm2}, we need to translate the $Q$-function error $\|Q_k - Q^*\|_\infty$ into the policy error $\|Q^{\pi_k} - Q^*\|_\infty$. As in the proof of Theorem~\ref{thm1}, we work in a general setting that allows the algorithmic parameters $\alpha_k$, $\epsilon_k$, and $\tau_k$ to vary with $k$.

Recall that $\mathcal{H}(\cdot)$ denotes the Bellman operator (see Eq.~\eqref{def:Bellman_H}).  
Given a policy $\pi$, let $\mathcal{H}^\pi:\mathbb{R}^{|\mathcal{S}||\mathcal{A}|}\to\mathbb{R}^{|\mathcal{S}||\mathcal{A}|}$ denote the Bellman operator associated with $\pi$, defined as
\begin{align*}
    [\mathcal{H}^\pi(Q)](s,a)
    = \mathcal{R}(s,a)
    + \gamma \sum_{s',a'} p(s' \mid s,a)\pi(a' \mid s')Q(s',a'),
    \quad \forall\,(s,a).
\end{align*}
Similar to $\mathcal{H}(\cdot)$, the operator $\mathcal{H}^\pi(\cdot)$ is also a $\gamma$-contraction with respect to $\|\cdot\|_\infty$, and $Q^\pi$ is its unique fixed point \cite{bertsekas1996neuro}.

For any $k\ge 0$, using the Bellman equations $Q^*=\mathcal{H}(Q^*)$ and $Q^{\pi_k}=\mathcal{H}^{\pi_k}(Q^{\pi_k})$, we have
\begin{align*}
    \|Q^{\pi_{k}} - Q^*\|_{\infty}
    =\;& \|\mathcal{H}^{\pi_{k}}(Q^{\pi_{k}}) - \mathcal{H}(Q^*)\|_{\infty} \\
    =\;& \|\mathcal{H}^{\pi_{k}}(Q^{\pi_{k}}) - \mathcal{H}^{\pi_{k}}(Q_{k})
    + \mathcal{H}^{\pi_{k}}(Q_{k}) - \mathcal{H}(Q_{k})
    + \mathcal{H}(Q_{k}) - \mathcal{H}(Q^*)\|_{\infty} \\
    \le\;& \|\mathcal{H}^{\pi_{k}}(Q^{\pi_{k}}) - \mathcal{H}^{\pi_{k}}(Q_{k})\|_{\infty}
    + \|\mathcal{H}^{\pi_{k}}(Q_{k}) - \mathcal{H}(Q_{k})\|_{\infty}
    + \|\mathcal{H}(Q_{k}) - \mathcal{H}(Q^*)\|_{\infty} \\
    \le\;& \gamma \|Q^{\pi_{k}} - Q_{k}\|_{\infty}
    + \|\mathcal{H}^{\pi_{k}}(Q_{k}) - \mathcal{H}(Q_{k})\|_{\infty}
    + \gamma\|Q_{k} - Q^*\|_{\infty} \\
    =\;& \gamma \|Q^{\pi_{k}} - Q^* + Q^* - Q_{k}\|_{\infty}
    + \|\mathcal{H}^{\pi_{k}}(Q_{k}) - \mathcal{H}(Q_{k})\|_{\infty}
    + \gamma\|Q_{k} - Q^*\|_{\infty} \\
    \le\;& \gamma \|Q^{\pi_{k}} - Q^*\|_{\infty}
    + 2\gamma \|Q_{k} - Q^*\|_{\infty}
    + \|\mathcal{H}^{\pi_{k}}(Q_{k}) - \mathcal{H}(Q_{k})\|_{\infty},
\end{align*}
which implies
\begin{align}\label{eq1:pf:thm2}
    \|Q^{\pi_{k}} - Q^*\|_{\infty}
    \le\;
    \frac{2 \gamma}{1-\gamma} \|Q_{k} - Q^*\|_{\infty}
    + \frac{1}{1-\gamma}
    \|\mathcal{H}^{\pi_{k}}(Q_{k}) - \mathcal{H}(Q_{k})\|_{\infty}.
\end{align}

It remains to bound $\|\mathcal{H}^{\pi_{k}}(Q_{k}) - \mathcal{H}(Q_{k})\|_{\infty}$.  
For any $k\ge 0$ and $(s,a)$, using the definition of $\pi_k$ (cf.\ Eq.~\eqref{eq:policy_timevarying}), we have 
\begin{align}
    & \left| [\mathcal{H}(Q_{k})](s,a) - [\mathcal{H}^{\pi_{k}}(Q_{k})](s,a) \right| \nonumber\\
    =\,&  \gamma \sum_{s'\in\mathcal{S}} p(s'\rvert s, a)
    \left\{
    \max_{\mu\in\Delta(\mathcal{A})} \mu^\top Q_{k}(s')
    - [f_k(Q_k)](s')^\top Q_{k}(s')
    \right\} \nonumber\\
    \le\,&  \gamma \max_{s'\in\mathcal{S}}
    \left\{
    \max_{\mu\in\Delta(\mathcal{A})} \mu^\top Q_{k}(s')
    - [f_k(Q_k)](s')^\top Q_{k}(s')
    \right\} \nonumber\\
    =\,& \gamma \max_{s'\in\mathcal{S}}
    \left\{
    \max_{\mu\in\Delta(\mathcal{A})} \mu^\top Q_{k}(s')
    - \frac{\epsilon_k}{|\mathcal{A}|}\mathbf{1}^\top Q_k(s')
    - (1-\epsilon_k)\sigma\!\left(\frac{Q_k(s')}{\tau_k}\right)^\top Q_k(s')
    \right\} \nonumber\\
    \le\,& 2 \epsilon_{k} \gamma\|Q_{k}\|_\infty
    + \gamma (1-\epsilon_{k}) \max_{s'\in\mathcal{S}}
    \left\{
    \max_{\mu\in\Delta(\mathcal{A})} \mu^\top Q_{k}(s')
    - \sigma\!\left(\frac{Q_k(s')}{\tau_k}\right)^\top Q_k(s')
    \right\}.
    \label{eq:H_gap}
\end{align}

To proceed, since $\sigma(x)=\argmax_{\mu\in\Delta(\mathcal{A})}\{\mu^\top x+\nu(\mu)\}$, for any $x\in\mathbb{R}^{|\mathcal{A}|}$ and $\tau>0$,
\begin{align*}
    \sigma\!\left(\frac{x}{\tau}\right)^\top x
    = \max_{\mu\in\Delta(\mathcal{A})}\{\mu^\top x+\tau\nu(\mu)\}
    - \tau\nu\!\left(\sigma\!\left(\tfrac{x}{\tau}\right)\right)
    \ge \max_{\mu\in\Delta(\mathcal{A})}\mu^\top x - \tau\nu_{\max},
\end{align*}
where the inequality follows from the nonnegativity of $\nu(\cdot)$. Combining this bound with \eqref{eq:H_gap} yields
\begin{align*}
    \left| [\mathcal{H}(Q_{k})](s,a) - [\mathcal{H}^{\pi_{k}}(Q_{k})](s,a) \right|
    \le 2 \epsilon_{k} \gamma\|Q_{k}\|_\infty
    + \gamma (1-\epsilon_{k}) \tau_k \nu_{\max} 
    \le \frac{2\epsilon_k}{1-\gamma} + \tau_k\nu_{\max},
\end{align*}
where the last inequality uses $\|Q_k\|_\infty\le 1/(1-\gamma)$ \cite{gosavi2006boundedness}, $\gamma<1$, and $\epsilon_k>0$.
Since this bound holds for all $(s,a)$, we obtain
\begin{align*}
    \|\mathcal{H}^{\pi_{k}}(Q_{k}) - \mathcal{H}(Q_{k})\|_{\infty}
    \le \frac{2\epsilon_{k}}{1-\gamma} + \tau_{k} \nu_{\max}.
\end{align*}
Substituting the bound for $\|\mathcal{H}^{\pi_{k}}(Q_{k}) - \mathcal{H}(Q_{k})\|_{\infty}$ into \eqref{eq1:pf:thm2} yields
\begin{align*}
    \|Q^{\pi_{k}} - Q^*\|_{\infty}
    \le\;
    \frac{2 \gamma}{1-\gamma} \|Q_{k} - Q^*\|_{\infty}
    + \frac{2 \epsilon_{k}}{(1-\gamma)^{2}}
    + \frac{\tau_{k} \nu_{\max}}{1-\gamma}.
\end{align*}
Since $(a+b+c)^2\le 3(a^2+b^2+c^2)$ for any $a,b,c\in\mathbb{R}$, the above inequality implies
\begin{align*}
    \|Q^{\pi_{k}} - Q^*\|_{\infty}^{2}
    \le \frac{12 \gamma^2}{(1-\gamma)^2} \|Q_{k} - Q^*\|_{\infty}^{2}
    + \frac{12 \epsilon_k^{2}}{(1-\gamma)^{4}}
    + \frac{3 \tau_k^{2} \nu_{\max}^2}{(1-\gamma)^{2}}.
\end{align*}
Theorem~\ref{thm2} then follows by (i) taking expectations on both sides and (ii) setting $\epsilon_k\equiv\epsilon$ and $\tau_k\equiv\tau$.

\section{Numerical Simulations}\label{sec:numerical}
In this section, we present numerical simulations. Importantly, the goal is not to demonstrate the empirical success of Q-learning, which has already been extensively validated in the literature, but rather to verify Theorems~\ref{thm1} and~\ref{thm2}. Specifically, we demonstrate that Q-learning with on-policy sampling converges more slowly compared to off-policy sampling. On the other hand, the learning policies in Q-learning with on-policy sampling also converge to an optimal one, which serves as an advantage compared to off-policy Q-learning.

\subsection{MDP Setup}\label{subsec:simulations_setup}
We begin by describing the construction of the MDP. The goal is to design an instance in which exploration requires effort rather than being free. Consider an infinite-horizon discounted MDP with $\mathcal{S} = \{s_1, s_2, \ldots, s_n\}$ and $\mathcal{A} = \{a_1, a_2, \ldots, a_m\}$, where we set $n = 20$ and $m = 10$. 
The transition probabilities are defined as follows: for all $s \in \mathcal{S}$ and $a \neq a_m$, we have $p(s \mid s, a) = 1$, and for $a = a_m$, we have $p(s_{(i+1)\bmod n} \mid s_i, a_m) = 1$. 
In other words, taking any action other than $a_m$ keeps the system in the same state, whereas taking action $a_m$ moves the system deterministically to the next state in a cyclic manner (i.e., from $s_i$ to $s_{(i+1)\bmod n}$). 
We refer to the actions $a_1, \ldots, a_{m-1}$ collectively as \emph{stay} and to $a_m$ as \emph{move}. 
The reward function $R$ is defined by $R(s, \mathrm{\emph{stay}}) = 0$ and $R(s, \mathrm{\emph{move}}) = 1$ for every $s \in \mathcal{S}$, and the discount factor is set to $\gamma = 0.99$. 
This construction is illustrated in Figure~\ref{fig:cyclic-mdp}.

\begin{figure}[h]
    \centering
    \begin{minipage}{0.35\textwidth}
        \centering
        \includegraphics[width=\linewidth]{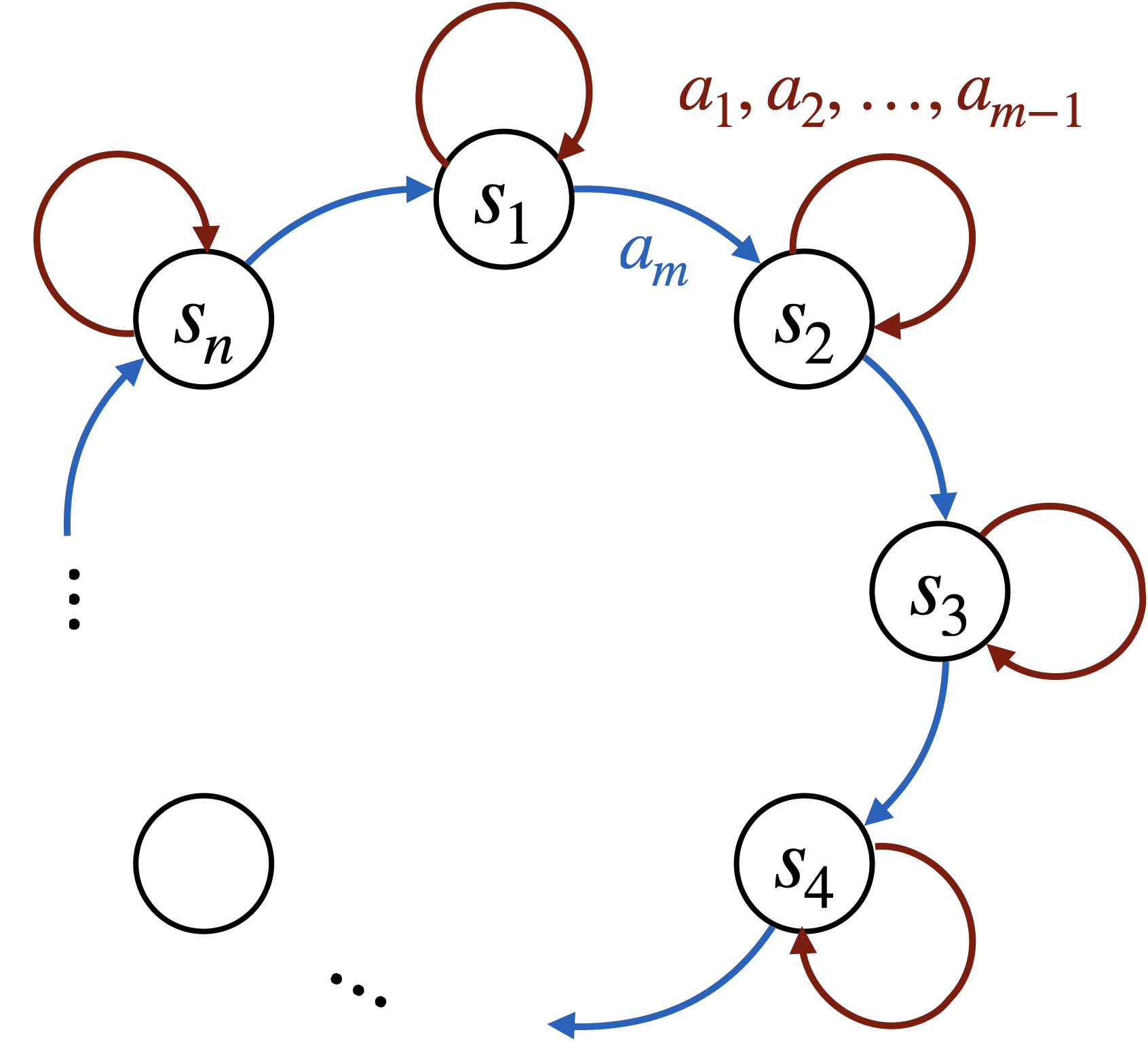}
        \caption{The MDP structure.}
        \label{fig:cyclic-mdp}
    \end{minipage}\hspace{2 mm}
    \begin{minipage}{0.42\textwidth}
        \centering
        \includegraphics[width=\linewidth]{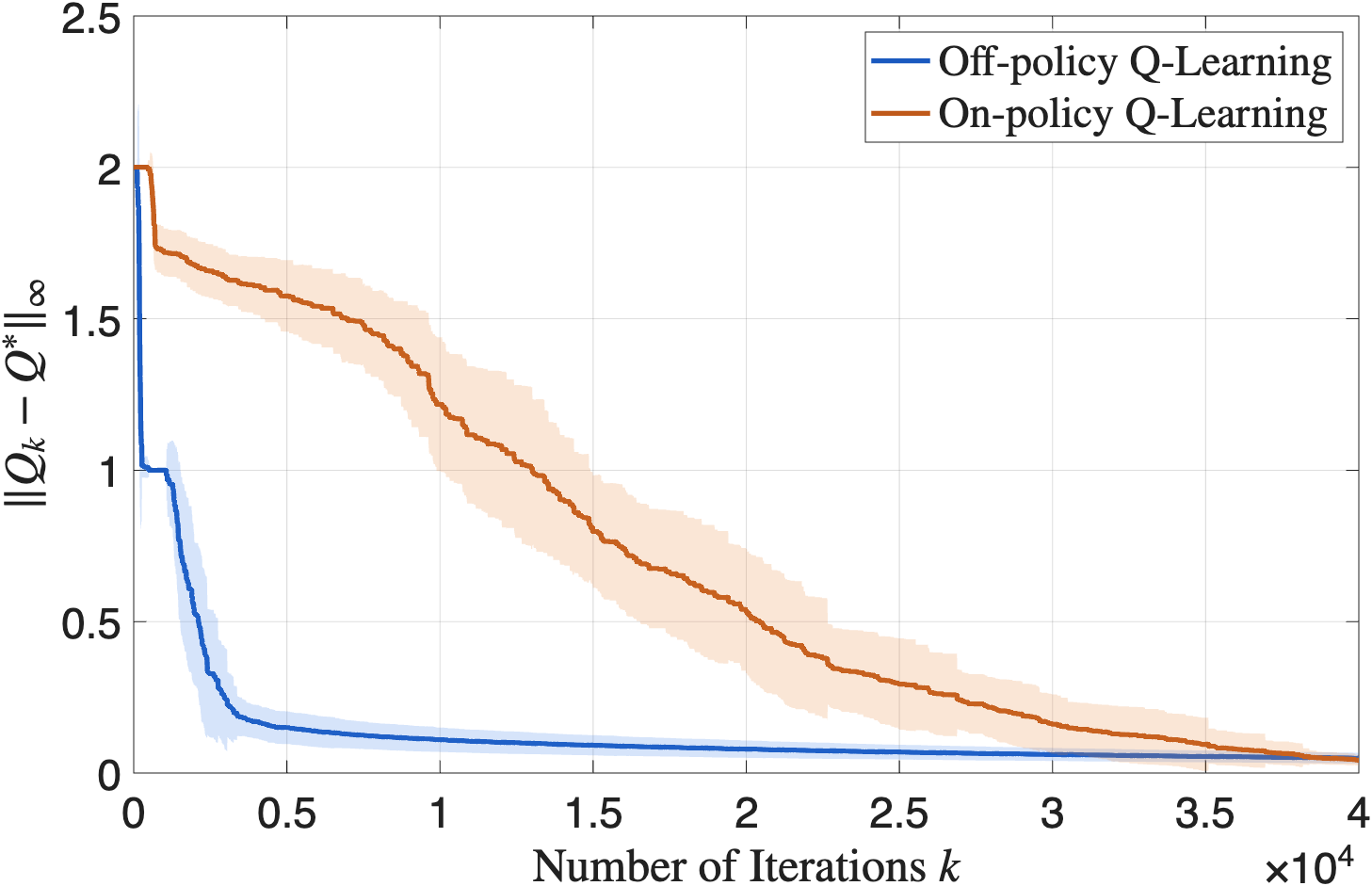}
        \caption{Convergence rates of $Q_k$.}
        \label{fig:QkQstar}
    \end{minipage}
    \vspace{-3 mm}
\end{figure}

This design yields a simple yet structured environment in which only the transition matrix corresponding to $a_m$ enables the agent to explore the entire state space. 
Note that the policy $\pi_b$ that deterministically selects $a_m$ for all states induces an irreducible Markov chain $\{S_k\}$ over $\mathcal{S}$, thereby satisfying Assumption~\ref{as:MC}. In this example, it can be easily verified that the optimal Q-function $Q^*$ satisfies $Q^*(s,\mathrm{\emph{stay}}) = 99$ and $Q^*(s,\mathrm{\emph{move}}) = 100$ for all $s \in \mathcal{S}$. 

\subsection{Convergence Rates: On-Policy Q-Learning vs. Off-Policy Q-Learning}
As indicated by the sample complexity result in Corollary~\ref{coro1}, due to exploration limitations, Q-learning with on-policy sampling is expected to converge more slowly than its off-policy counterpart. We next verify this observation numerically. In Algorithm~\ref{algo:Q-learning}, we choose the learning policy to be a convex combination of the uniform policy and the exponential softmax policy induced by $Q_k$, which corresponds to $\nu(\cdot)$ being the entropy function.

By running on-policy Q-learning (cf. Algorithm~\ref{algo:Q-learning}) with $\epsilon =0.1$ and $\tau = 0.1$ and initialization $Q_{0}(s, \mathrm{\emph{stay}}) = 100$ and $Q_{0}(s, \mathrm{\emph{move}}) = 90$, along with off-policy Q-learning using the same initialization and a uniform learning policy, we plot $\|Q_k - Q^*\|_\infty$ as a function of $k$ in Figure~\ref{fig:QkQstar}. 
It is evident that although both algorithms converge, on-policy Q-learning converges more slowly due to its inherent exploration challenges, whereas off-policy Q-learning does not suffer from such limitations. 
Moreover, because on-policy Q-learning employs rapidly time-varying stochastic policies, it exhibits a larger standard deviation. 
This phenomenon is consistent with and corroborates our theoretical results.

\subsection{Convergence Rates of the Learning Policies}

While Q-learning with on-policy sampling converges more slowly in terms of $\|Q_k - Q^*\|_\infty$, its advantage is that the learning policies gradually converge to an optimal one. Using the same MDP setup and algorithmic parameters, we plot $\|Q^{\pi_k} - Q^*\|_\infty$ as a function of $k$ for four choices of $(\epsilon,\tau)$ in Figure~\ref{fig:EETF}: (i) $\epsilon=0.01$, $\tau=0.1$, (ii) $\epsilon=0.05$, $\tau=0.1$, (iii) $\epsilon=0.08$, $\tau=0.1$, and (iv) $\epsilon=0.1$, $\tau=0.1$. We fix $\tau=0.1$ and vary $\epsilon$, since the softmax component is introduced to ensure Lipschitz continuity of the policy, whereas $\epsilon$ controls the level of exploration. As $\epsilon$ decreases, the convergence rate slows down while the asymptotic accuracy improves. This behavior is consistent with Theorem~\ref{thm2} and clearly illustrates the exploration--exploitation trade-off.

\begin{figure}[h]
    \centering
    \begin{minipage}{0.42\textwidth}
        \centering
        \includegraphics[width=\linewidth]{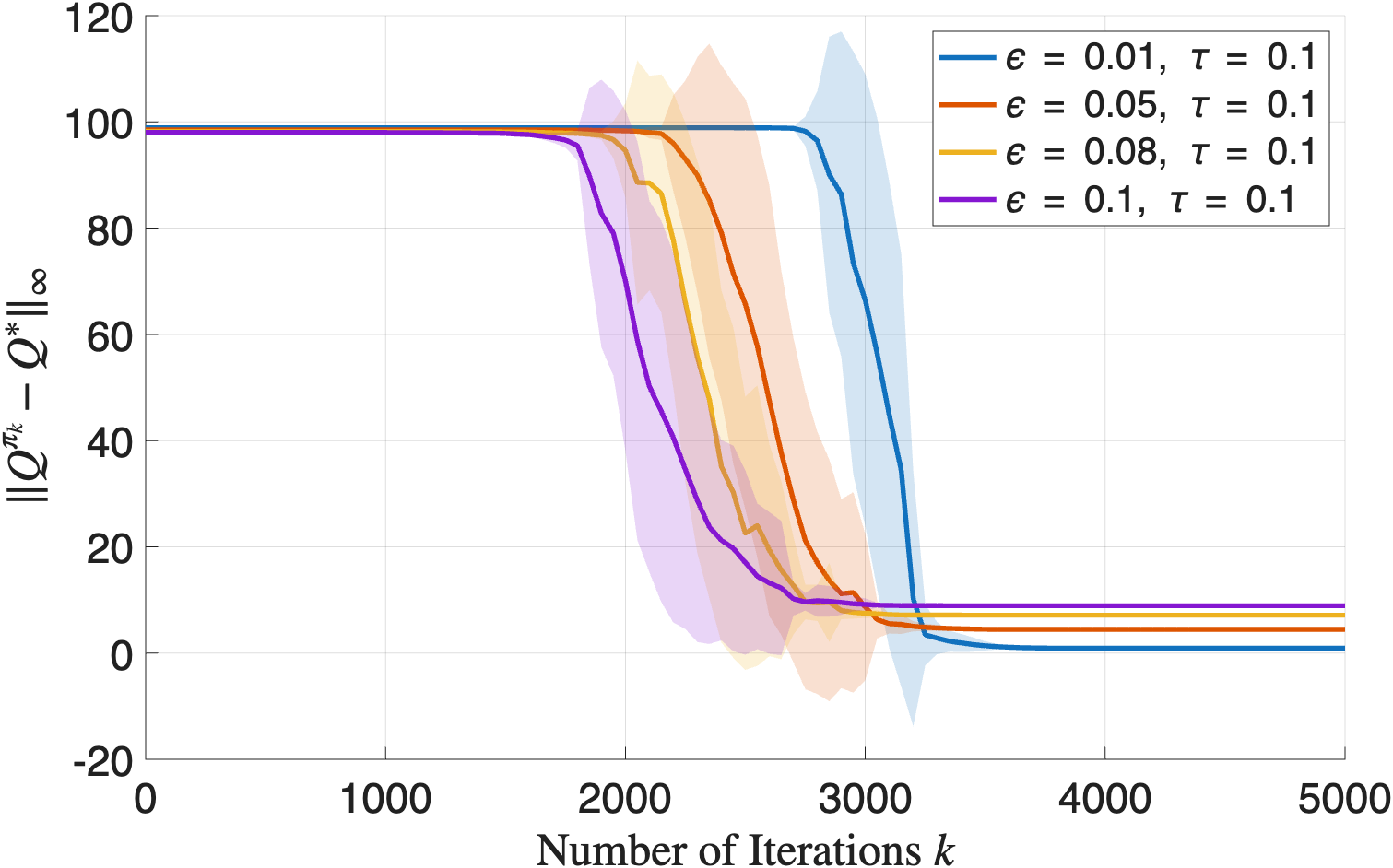}
        \caption{Convergence rates of $Q^{\pi_k}$.}
        \label{fig:Qpik}
    \end{minipage}\hspace{2 mm}
    \begin{minipage}{0.42\textwidth}
        \centering
        \includegraphics[width=\linewidth]{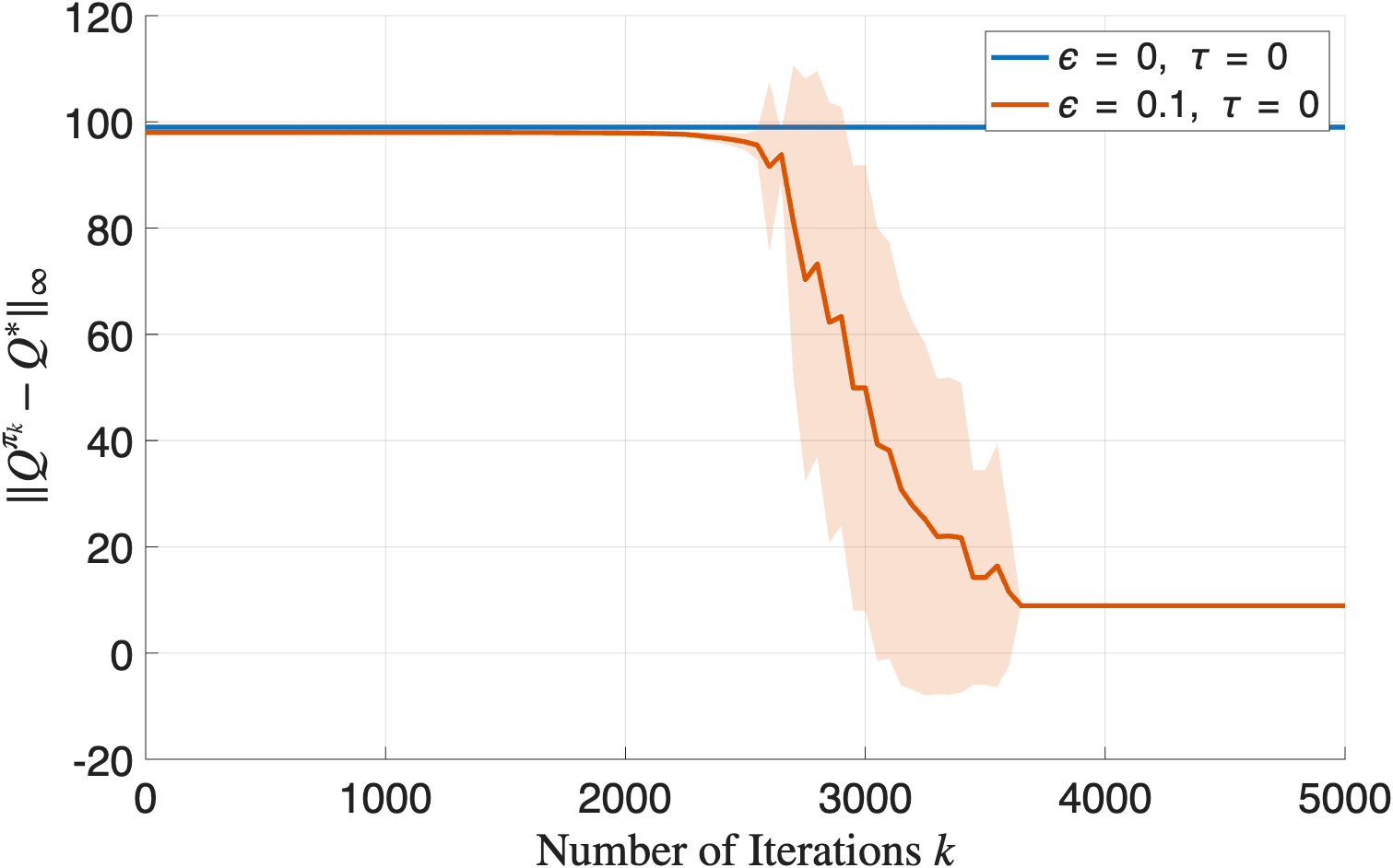}
        \caption{The exploration–exploitation trade-off.}
        \label{fig:EETF}
    \end{minipage}
    \vspace{-3 mm}
\end{figure}

Finally, we test the extreme case by setting $\tau=0$ in Figure~\ref{fig:EETF}. This case is not covered by our theory, since the resulting policy is no longer Lipschitz with respect to the Q-function. We observe that the algorithm still converges when $\epsilon=0.1$, which corresponds to the $\epsilon$-greedy policy. As discussed in Section~\ref{subsec:sarsa}, a theoretical analysis of Q-learning with an $\epsilon$-greedy learning policy is left as future work. In contrast, when $\tau=\epsilon=0$, the algorithm lacks any exploration component and does not converge.

\section{Conclusion}\label{sec:conclusion}
We present a finite-time analysis of Q-learning with rapidly time-varying learning policies under minimal assumptions. 
Our results show that although the algorithm achieves an $\mathcal{O}(1/\xi^2)$ sample complexity, its dependence on problem-specific constants is worse than that of off-policy Q-learning due to limited exploration. 
In contrast, Q-learning with on-policy sampling guarantees the convergence of the learning policy. 
From a technical standpoint, to address the challenge of time-inhomogeneous Markovian noise induced by time-varying learning policies and minimal structural assumptions, we develop an analytical framework based on the Poisson equation for Markov chain decomposition and characterize the properties of Poisson equation solutions through the analysis of the lazy chain. 
This framework for analyzing on-policy Q-learning can potentially be extended to a broader class of RL algorithms with time-varying learning policies.

As for future directions, an immediate one is to extend our analysis to $\epsilon$-greedy learning policies. More broadly, existing statistical lower bounds \cite{gheshlaghi2013minimax} are derived under the generative model, where i.i.d.\ samples from any state–action pair are available, and matching upper bounds for Q-learning are known in the off-policy setting under stationary, uniformly ergodic learning policies \cite{li2020sample}. However, a gap remains for practical RL algorithms with rapidly time-varying learning policies. Although this paper provides the first principled characterization in such a setting, it remains unclear what the corresponding lower bounds are, and whether both $\|Q_k - Q^*\|_\infty$ (favoring exploration) and $\|Q^{\pi_k} - Q^*\|_\infty$ (favoring exploitation) can achieve rates matching the statistical lower bound. Addressing this question constitutes a main direction for future work.

\bibliographystyle{apalike}
\bibliography{references}

\newpage
\begin{center}
    {\LARGE\bfseries Appendices}
\end{center}

\appendix

\section{Discussion of Assumption \ref{as:MC}}\label{ap:ass_necessity}

This section provides further discussion of Assumption~\ref{as:MC}. In particular, we compare it with the typical assumptions imposed in the literature on Q-learning.

There are generally two types of assumptions. The first type concerns the MDP structure, rather than any particular policy. Such assumptions (e.g., multichain, weakly communicating, communicating, unichain, or recurrent) are commonly imposed for \textit{average-reward} MDPs, where the structure of the solution to the Bellman equation critically depends on these conditions \cite[Figure~8.3.1]{puterman2014markov}. In contrast, for discounted MDPs with finite state–action spaces (and hence bounded rewards), as considered in this work, the Bellman operator is always a contraction mapping, and no additional structural assumptions are required for the Bellman equation to admit a unique solution. Therefore, our assumption does not fall into this category.

The second type of assumptions concerns the exploration capabilities of the learning policy (whether stationary or time-varying), to which our assumption belongs. Next, we discuss this type of assumption in the context of non-asymptoitic analysis of Q-learning.

\subsection{Off-Policy Q-Learning}

In the existing literature on off-policy Q-learning, where the learning policy $\pi$ is stationary, the following assumption is typically imposed.

\begin{assumption}\label{as:MC_off_Q}
The policy $\pi$ satisfies $\pi(a \mid s) > 0$ for all $(s,a)$, and the Markov chain $\{S_k\}$ induced by $\pi$ is irreducible and aperiodic.
\end{assumption}

\begin{remark}
Very recently, the aperiodicity requirement on $\{S_k\}$ was relaxed in \cite{haque2024stochastic,chandak2022concentration}.
\end{remark}

Since the learning policy is stationary, the sample trajectory $\{(S_k,A_k)\}$ generated by off-policy Q-learning forms a time-homogeneous Markov chain. To handle the resulting stochasticity, existing analyses typically rely on conditioning arguments that exploit the geometric mixing properties of $\{(S_k,A_k)\}$, which hold under Assumption~\ref{as:MC_off_Q} \cite{levin2017markov}; see, for example, \cite{srikant2019finite,chen2024lyapunov}.

While Assumption~\ref{as:MC} is comparable to Assumption~\ref{as:MC_off_Q}, the underlying sampling mechanisms are fundamentally different (on-policy versus off-policy). As a result, the technical challenges and analysis required in our setting are substantially different.

\subsection{On-Policy Q-Learning}
In the existing literature studying on-policy Q-learning, or more generally RL algorithms with \textit{time-varying} learning policies, the following assumption is commonly imposed \cite{khodadadian2021finite,chenziyi2022sample,chenzy2021sample,xu2021sample,wu2020finite,qiu2019finite,zhang2023convergence,liu2025linear,zou2019finite}. 

\begin{assumption}\label{as:existing}
For any policy $\pi$, the Markov chain $\{S_k\}$ induced by $\pi$ is irreducible and aperiodic, and hence admits a unique stationary distribution $\mu_\pi \in \Delta(\mathcal{S})$. Moreover, $\mu_{\min}:=\inf_\pi \min_{s,a} \mu_\pi(s) > 0$, and there exist constants $C>0$ and $\rho \in (0,1)$ such that
\begin{align*}
    \sup_{\pi}\max_{s \in \mathcal{S}}
    \left\| P_\pi^k(s,\cdot) - \mu_\pi(\cdot) \right\|_{\mathrm{TV}}
    \le C \rho^k .
\end{align*}
\end{assumption}
\begin{remark}
Sometimes Assumption \ref{as:existing} is relaxed by requiring the stated conditions to hold only for the sequence of policies generated along the algorithm trajectory, rather than for all policies. While this relaxation is often adopted in the literature, it introduces a form of circularity: verifying the required conditions typically depends on properties of the policy sequence itself, which in turn is generated by the algorithm whose convergence analysis relies on these conditions. As a result, such assumptions are often difficult to verify independently of the algorithm’s behavior.
\end{remark}

Under Assumption \ref{as:existing}, the challenge of exploration is effectively assumed away, since all states are visited sufficiently often regardless of the learning policies generated by on-policy Q-learning.

Compared with Assumption~\ref{as:existing}, our Assumption~\ref{as:MC} is significantly weaker. First, we assume only irreducibility and do not require aperiodicity; consequently, the Markov chain $\{S_k\}$ need not be mixing. Second, Assumption~\ref{as:MC} implies that any policy $\pi \in \Pi := \{\pi \mid \min_{s,a}\pi(a \mid s) > 0\}$ induces an irreducible Markov chain $\{S_k\}$ (cf.\ Lemma~\ref{le:policy_exploration}). As a result, the stationary distribution $\mu_\pi$ exists, is unique, and satisfies $\mu_\pi(s) > 0$ for all $s$ and all $k$. However, the quantity $\inf_{\pi \in \Pi} \min_{s \in \mathcal{S}} \mu_\pi(s)$ is, in general, not necessarily positive. A concrete example illustrating this phenomenon is presented below.

\begin{example}
\textit{Consider an MDP with state space $\mathcal{S}=\{s_0,s_1\}$ and action space $\mathcal{A}=\{a_0,a_1\}$. The transition dynamics are defined as follows: from state $s_0$, action $a_0$ keeps the chain at $s_0$ with probability $1$, while action $a_1$ moves the chain to $s_1$ with probability $1$; from state $s_1$, both actions $a_0$ and $a_1$ move the chain to $s_0$ with probability $1$. The reward function is defined as $\mathcal{R}(s_0,a_0)=1$ and $\mathcal{R}(s_0,a_1)=\mathcal{R}(s_1,a_0)=\mathcal{R}(s_1,a_1)=0$. Therefore, the set of optimal policies is $\Pi^*=\{\pi^*\mid \pi^*(a_0|s_0)=1\}$.}

\begin{center}
\begin{tikzpicture}[>=Stealth, thick]
\node[circle, draw, minimum size=0.8 cm, line width=1pt] (s0) at (0,0) {\Large $s_0$};
\node[circle, draw, minimum size=0.8 cm, line width=1pt] (s1) at (4,0) {\Large $s_1$};

\draw[->, line width=1pt] (s0) edge[loop left, looseness=12] node[left, xshift=-2pt] {$a_0,\ R=1$} (s0);

\draw[->, line width=1pt] (s0) to[bend left=12] node[above] {$a_1,\ R=0$} (s1);

\draw[->, line width=1pt] (s1) to[bend left=12] node[below] {$a_0,a_1,\ R=0$} (s0);

\end{tikzpicture}

\end{center}
\textit{ Let $\pi_b$ be defined as $\pi_b(a_1\mid s_0)=1/2$ and $\pi_b(a_0\mid s_0)=1/2$ (with arbitrary action selection at $s_1$). Then the induced Markov chain on states has transition matrix 
 \begin{align*}
     P_{\pi_b}=\begin{pmatrix}1/2 & 1/2\\ 1 & 0\end{pmatrix},
 \end{align*} 
 which is irreducible and hence admits a unique stationary distribution $\mu_{\pi_b}$. Therefore, Assumption \ref{as:MC} is satisfied.}

\textit{Now consider the family of stationary randomized policies $\{\pi^{(p)}\}_{p\in(0,1]}$ defined by $\pi^{(p)}(a_1\mid s_0)=p$ and $\pi^{(p)}(a_0\mid s_0)=1-p$ (again with arbitrary action selection at $s_1$). For any $p>0$, the induced chain is irreducible, since the transition $s_0\to s_1$ occurs with probability $p$ and the transition $s_1\to s_0$ occurs with probability $1$. Solving $\mu_{\pi^{(p)}}^\top =\mu_{\pi^{(p)}} P_{\pi^{(p)}}$ yields $\mu_{\pi^{(p)}}(s_0)=1/(1+p)$ and $\mu_{\pi^{(p)}}(s_1)=p/(1+p)$. Therefore, $\min_{s\in\mathcal{S}}\mu_{\pi^{(p)}}(s)=p/(1+p)\to 0$ as $p\to 0$, which implies that $\inf_{\pi\in \Pi}\min_{s\in\mathcal{S}}\mu_\pi(s)=0$.}
\end{example}

In general, under only our relaxed Assumption \ref{as:MC}, we can have
$\inf_{\pi\in \Pi}\min_{s\in\mathcal{S}}\mu_\pi(s)=0$, as illustrated in the previous example. Therefore, on-policy Q-learning must incorporate active exploration to ensure that all states are visited sufficiently often. In particular, we provide an explicit lower bound on $\min_{s\in\mathcal{S}} \mu_k(s)$ in terms of both algorithm-independent constants (e.g., $\mu_{b,\min}$, $\pi_{b,\min}$, $r_b$, $\delta_b$) and algorithm-dependent parameters (e.g., $\epsilon_k$ and $\tau_k$) in Lemma~\ref{le:implicit_to_explicit}. The dependence of $\min_{s\in\mathcal{S}} \mu_k(s)$ on $\epsilon_k$ and $\tau_k$ is the main source of the exploration--exploitation trade-off in on-policy Q-learning. Capturing this dependence within a minimal-assumption framework, therefore, constitutes a notable technical contribution of this work.

\section{Further Discussion on the Exploration-Exploitation Trade-Off}\label{ap:Q_to_Policy}

In Q-learning, there are typically three quantities of interest: (1) the iterate $Q_k$; (2) the learning policy $\pi_k$ and its associated Q-function $Q^{\pi_k}$; and (3) the policy $\bar{\pi}_k$ greedily induced by $Q_k$, that is,
$\{a\in\mathcal{A}\mid \bar{\pi}_k(a\mid s)>0\}\subseteq \argmax_{a\in\mathcal{A}} Q_k(s,a)$ for all $s\in\mathcal{S}$,
along with its associated Q-function $Q^{\bar{\pi}_k}$. We next discuss these quantities in the context of the exploration–exploitation trade-off in on-policy and off-policy Q-learning. The following lemma is needed for the discussion.

\begin{lemma}\label{le:Q-function-gap}
    For any $Q\in\mathbb{R}^{|\mathcal{S}||\mathcal{A}|}$, let $\bar{\pi}_Q$ be the policy greedily induced by $Q$, that is, $\{a\in\mathcal{A}\mid \bar{\pi}_Q(a|s)>0\}\subseteq \argmax_{a\in\mathcal{A}}Q(s,a)$ for all $s\in\mathcal{S}$. Then, we have
    \begin{align*}
        \|Q^{\bar{\pi}_Q}-Q^*\|_\infty\leq \frac{2\gamma}{1-\gamma}\|Q-Q^*\|_\infty.
    \end{align*}
\end{lemma}
\begin{proof}[Proof of Lemma \ref{le:Q-function-gap}]
Using the monotonicity, translation invariance, and contraction properties of the Bellman operators, we have
\begin{align*}
    Q^* - Q^{\bar{\pi}_Q}
    =\,& \mathcal{H}(Q^*) - \mathcal{H}^{\bar{\pi}_Q}(Q^{\bar{\pi}_Q}) \\
    =\,& \mathcal{H}(Q^*) - \mathcal{H}(Q)
        + \mathcal{H}(Q) - \mathcal{H}^{\bar{\pi}_Q}(Q^{\bar{\pi}_Q}) \\
    =\,& \mathcal{H}(Q^*) - \mathcal{H}(Q)
        + \mathcal{H}^{\bar{\pi}_Q}(Q) - \mathcal{H}^{\bar{\pi}_Q}(Q^{\bar{\pi}_Q}) \\
    \le\,& \mathcal{H}(Q^*) - \mathcal{H}(Q)
        + \mathcal{H}^{\bar{\pi}_Q}\!\left(Q^* + \|Q - Q^*\|_\infty \mathbf{1}\right)
        - \mathcal{H}^{\bar{\pi}_Q}(Q^{\bar{\pi}_Q}) \\
    =\,& \mathcal{H}(Q^*) - \mathcal{H}(Q)
        + \mathcal{H}^{\bar{\pi}_Q}(Q^*)
        - \mathcal{H}^{\bar{\pi}_Q}(Q^{\bar{\pi}_Q})
        + \gamma \|Q - Q^*\|_\infty \mathbf{1} \\
    \le\,& 2\gamma \|Q - Q^*\|_\infty \mathbf{1}
        + \gamma \|Q^{\bar{\pi}_Q} - Q^*\|_\infty \mathbf{1},
\end{align*}
which implies
\begin{align*}
    \|Q^* - Q^{\bar{\pi}_Q}\|_\infty
    \le \frac{2\gamma}{1-\gamma}\,\|Q - Q^*\|_\infty.
\end{align*}

\end{proof}

\subsection{Off-Policy Q-Learning}
In off-policy Q-learning, the learning policy is stationary, that is, $\pi_k \equiv \pi$ for some fixed policy $\pi$ and all $k \ge 0$. Consequently, $\|Q^{\pi_k}-Q^*\|_\infty$ remains constant over time. This setting has been studied extensively in the literature. In particular, the sample complexity to achieve $\|Q_k-Q^*\|_\infty \le \epsilon$ (either in expectation or with high probability) is $\tilde{\mathcal{O}}(\epsilon^{-2}\bar{\mu}_{\pi,\min}^{-1})$ \cite{li2020sample,chen2024lyapunov}, where $\bar{\mu}_{\pi,\min}$ denotes the minimum entry of the stationary distribution of the Markov chain $\{(S_k,A_k)\}$ induced by $\pi$. We ignore the dependence on the effective horizon $1/(1-\gamma)$ here, as it is not central to the exploration--exploitation discussion.

In light of Lemma~\ref{le:Q-function-gap}, the sample complexity to achieve $\|Q^{\bar{\pi}_k}-Q^*\|_\infty \le \epsilon$ is also $\tilde{\mathcal{O}}(\epsilon^{-2}\bar{\mu}_{\pi,\min}^{-1})$. Therefore, to ensure that $\|Q_k-Q^*\|_\infty$ (or equivalently, $\|Q^{\bar{\pi}_k}-Q^*\|_\infty$) is small, the learning policy $\pi$ should be chosen to maximize $\bar{\mu}_{\pi,\min}$, ideally achieving a uniform distribution when possible, in which case $\bar{\mu}_{\pi,\min} = (|\mathcal{S}||\mathcal{A}|)^{-1}$. This corresponds to a pure exploration regime, as off-policy Q-learning does not account for the quality of the learning policy (relative to the optimal one) during the learning process.

\subsection{On-Policy Q-Learning}
In on-policy Q-learning, the learning policy $\pi_k$ is time-varying. To ensure that both $Q_k \to Q^*$ and $Q^{\pi_k} \to Q^*$, there is a clear exploration--exploitation trade-off. In particular, the convergence rate of $\|Q_k - Q^*\|_\infty$ (and also $\|Q^{\bar{\pi}_k} - Q^*\|_\infty$ by Lemma~\ref{le:Q-function-gap}) requires all state--action pairs to be visited sufficiently often, and therefore favors exploration, similar to off-policy Q-learning. In contrast, to ensure $Q^{\pi_k} \to Q^*$, the learning policy $\pi_k$ must be close to the greedy policy with respect to $Q_k$, which favors exploitation. Together, these requirements characterize the exploration--exploitation trade-off in on-policy Q-learning, as quantified in Theorem~\ref{thm2}.

Intuitively, drawing an analogy to the bandit setting, off-policy Q-learning resembles a best-arm identification procedure: each arm is sampled uniformly to estimate empirical rewards (i.e., $Q_k$), and in the end, a policy is obtained by selecting the action with the largest empirical reward (i.e., $\bar{\pi}_k$). In contrast, on-policy Q-learning is closer in spirit to online learning algorithms such as UCB and Thompson sampling, where a careful balance between exploration and exploitation is required, although the performance metric here is last-iterate convergence rather than regret.

Based on the above discussion, in the simulations comparing Q-learning with on-policy and off-policy sampling in Section~\ref{sec:numerical}, we ensure fairness by comparing $\|Q_k - Q^*\|_\infty$ for both algorithms (cf.~Figure~\ref{fig:QkQstar}), which is equivalent to comparing $\|Q^{\bar{\pi}_k} - Q^*\|_\infty$ by Lemma~\ref{le:Q-function-gap}, and by comparing $\|Q^{\pi_k} - Q^*\|_\infty$ for both algorithms (cf.~Figure~\ref{fig:Qpik}), where we omit the constant curve for off-policy Q-learning. In contrast, directly comparing $Q^{\bar{\pi}_k}$ from off-policy Q-learning with $Q^{\pi_k}$ from on-policy Q-learning would not be fair, since the former requires only exploration, whereas the latter requires balancing the exploration--exploitation trade-off.

\section{Proofs of All Technical Results in Section \ref{sec:main results}}
\subsection{Assuming $\pi_b(a\mid s)>0$ for all $(s,a)$ is without loss of generality}\label{ap:behavior_policy}

\begin{lemma}\label{le:pib_pib'}
    The following two statements are equivalent:
\begin{enumerate}[(1)]
    \item There exists a policy $\pi_b$ such that the Markov chain $\{S_k\}$ induced by $\pi_b$ is irreducible.
    \item There exists a policy $\pi_b'$ satisfying $\pi_b'(a \mid s) > 0$ for all $(s,a)$ such that the Markov chain $\{S_k\}$ induced by $\pi_b'$ is irreducible.
\end{enumerate}
\end{lemma}
\begin{proof}
    The implication $(2) \Rightarrow (1)$ is immediate. We now prove $(1) \Rightarrow (2)$. For any $(s,s')$, we have
\begin{align*}
    P_{\pi_b'}(s,s')
    =\,& \sum_{a\in\mathcal{A}} p(s' \mid s,a)\,\pi_b'(a \mid s) \\
    \ge\,& \sum_{a\in\mathcal{A}:\,\pi_b(a \mid s)>0} p(s' \mid s,a)\,\pi_b'(a \mid s) \\
    =\,& \sum_{a\in\mathcal{A}:\,\pi_b(a \mid s)>0} p(s' \mid s,a)\,\pi_b(a \mid s)\,
    \frac{\pi_b'(a \mid s)}{\pi_b(a \mid s)} \\
    \ge\,& \sum_{a\in\mathcal{A}:\,\pi_b(a \mid s)>0} p(s' \mid s,a)\,\pi_b(a \mid s)
    \cdot \left( \min_{s,a:\,\pi_b(a \mid s)>0} \frac{\pi_b'(a \mid s)}{\pi_b(a \mid s)} \right) \\
    =\,& \sum_{a\in\mathcal{A}} p(s' \mid s,a)\,\pi_b(a \mid s)
    \cdot \left( \min_{s,a:\,\pi_b(a \mid s)>0} \frac{\pi_b'(a \mid s)}{\pi_b(a \mid s)} \right) \\
    =\,& \delta' \, P_{\pi_b}(s,s'),
\end{align*}
where $\delta' := \min_{s,a:\,\pi_b(a \mid s)>0} \pi_b'(a \mid s)/\pi_b(a \mid s)$.
The inequality above implies that $P_{\pi_b'} \ge \delta' P_{\pi_b}$. Since $P_{\pi_b}$ is irreducible, for any $(s,s')$ there exists $k>0$ such that $P_{\pi_b}^k(s,s')>0$. For the same $k$, we have
\begin{align*}
    P_{\pi_b'}^k(s,s') \ge {\delta'}^{k} P_{\pi_b}^k(s,s') > 0,
\end{align*}
which implies that the Markov chain $\{S_k\}$ induced by $\pi_b'$ is also irreducible.
\end{proof}

\subsection{Proof of Corollary \ref{coro1}}\label{pf:coro1}
For a given $\xi > 0$, to ensure $\mathbb{E}[\|Q_k - Q^*\|_\infty] \leq \xi$, by Jensen's inequality, it suffices to guarantee that $\mathbb{E}[\|Q_k - Q^*\|_\infty^2] \leq \xi^2$.
Using Theorem \ref{thm1}, it is enough to have
\begin{align*}
    3\|Q_0 - Q^*\|_\infty^2 \left(1 - \alpha c_{1} \right)^{k}  + c_2 \alpha + c_3\alpha^{2} \log^{4}(c_4/\alpha)  \leq \xi^{2}.
\end{align*}
Ignoring the logarithmic factor and using the numerical inequality $1 + x \leq e^{x}$ for all $x \in \mathbb{R}$, it is then sufficient to have 
\begin{align*}
   3\|Q_0 - Q^*\|_\infty^2 e^{-\alpha c_1 k}  + c_2 \alpha + c_3\alpha^{2}  \leq \xi^{2}.
\end{align*}
To achieve the above, we make each term on the left-hand side less than $\xi^{2}/3$. 
Since the second and third terms are independent of $k$, we first control those. 
Precisely, we choose $\alpha$ such that 
\begin{align*}
     c_{2} \alpha \leq \frac{\xi^{2}}{3} \quad & \text{and} \quad c_{3} \alpha^{2} \leq \frac{\xi^{2}}{3}\;\Rightarrow\; \alpha\leq \min\left(\frac{\xi^{2}}{3 c_{2}},\frac{\xi}{\sqrt{3 c_{3}}}\right)\;\Rightarrow\;\frac{1}{\alpha}\geq \max\left(\frac{3c_2}{\xi^2},\frac{\sqrt{3 c_{3}}}{\xi}\right).
\end{align*}
With this choice of $\alpha$, we need to choose $k$ such that $3\|Q_0 - Q^*\|_\infty^2 e^{-k c_{1} \alpha} \leq \xi^2/3$:
\begin{align*}
    k \geq \frac{2\log\left(3\|Q_0 - Q^*\|_\infty/\xi\right)}{c_1\alpha}\geq \frac{2\log\left(3\|Q_0 - Q^*\|_\infty/\xi\right)}{c_1}\max\left(\frac{3c_2}{\xi^2},\frac{\sqrt{3 c_{3}}}{\xi}\right).
\end{align*}
Finally, recall that 
\begin{align*}
        c_{1} =\,& \frac{1}{2}\lambda^{r_b}\mu_{\pi_b,\min}\delta_b(1-\gamma),\quad c_{2} = \frac{c_2'(r_b+1)\log(|\mathcal{S}||\mathcal{A}|) }{ \lambda^{3r_{b}+1}\pi_{b,\min}\mu_{\pi_b,\min}^3\delta_b^3(1-\gamma)^4},\\
        c_{3} =\,& \frac{c_3'(r_b+1)^4}{\tau^2\lambda^{6r_b+4}\mu_{\pi_b,\min}^6\pi_{b,\min}^4\delta_b^6(1-\gamma)^6}.
    \end{align*}
    Altogether, the sample complexity to achieve $\mathbb{E}[\|Q_k - Q^*\|_\infty] \leq \xi$ is
    \begin{align*}        \mathcal{O}\left(\frac{(r_b+1)\log\left(3\|Q_0 - Q^*\|_\infty/\xi\right)}{\lambda^{4r_b+2}\mu_{\pi_b,\min}^4\pi_{b,\min}\delta_b^4(1-\gamma)^4}\max\left(\frac{\log(|\mathcal{S}||\mathcal{A}|) }{ (1-\gamma)\xi^2},\frac{r_b+1}{\tau\lambda\pi_{b,\min}\xi}\right)\right).
    \end{align*}

\subsection{Proof of Proposition \ref{prop:ass_necessity}}\label{pf:prop:ass_necessity}
    Since $\{S_k\}$ is a finite Markov chain and is not irreducible, there exists a proper subset $\mathcal{C}$ of $\mathcal{S}$ such that $P(s,s')=0$ for any $s\in\mathcal{C}$ and $s'\in \mathcal{S}\setminus \mathcal{C}$ \cite[Section 1.2]{norris1998markov}. Therefore, when initializing $S_0\in\mathcal{C}$, we have $\mathds{1}_{\{S_k=s\}}=0$ for any $s\in\mathcal{S}\setminus \mathcal{C}$. In view of the update rule (\ref{eq:Q_example}), we have, with probability one,
\begin{align*}
    \|Q_k-Q^*\|_\infty
    \geq
    \max_{s\in\mathcal{S}\setminus \mathcal{C}}|Q_k(s)-Q^*(s)|
    =
    \max_{s\in\mathcal{S}\setminus \mathcal{C}}|Q_0(s)-Q^*(s)|.
\end{align*}
Therefore, as long as there exists $s' \in \mathcal{S}\setminus \mathcal{C}$ such that $Q_0(s') \neq Q^*(s')$, we have $\|Q_k - Q^*\|_\infty \ge c := |Q_0(s') - Q^*(s')|$ with probability one for all $k \ge 0$.

\section{Proofs of All Technical Results in Section \ref{sec:proof1}}
\subsection{Proof of Lemma \ref{le:policy_exploration}}\label{pf:le:policy_exploration}
Lemma~\ref{pf:le:policy_exploration} follows directly as a corollary of Lemma~\ref{le:pib_pib'}.

\subsection{Proof of Lemma \ref{le:f-bar}}\label{pf:le:f-bar}
\begin{enumerate}[(1)]
    \item By definition of $\Bar{F}(\cdot)$, for any $(s,a)$, we have 
    \begin{align*}
        [\Bar{F}(Q,\pi)](s,a) & =
        \mathbb{E}_{Y \sim \Bar{\mu}_\pi}[F(Q,Y)(s,a)] \\
        & = \mu_{\pi}(s)\pi(a\rvert s) \left(\mathcal{R}(s,a) 
        + \gamma \sum_{s'\in\mathcal{S}} p(s'\rvert s, a) \max_{a'\in\mathcal{A}} Q(s',a') 
        -Q(s,a)\right) + Q(s,a) \\
        & = \mu_{\pi}(s)\pi(a\rvert s) \left([\mathcal{H}(Q)](s,a)
        -Q(s,a)\right) + Q(s,a) \\
        & = (1-D_\pi(s,a))Q(s,a)+D_\pi(s,a)[\mathcal{H}(Q)](s,a).
    \end{align*}
    It follows that
    \begin{align*}
    \Bar{F}(Q,\pi)=[(I-D_\pi)+D_\pi\mathcal{H}](Q),\quad \forall\,Q\in\mathbb{R}^{|\mathcal{S}||\mathcal{A}|}.
\end{align*}
\item Since the Bellman operator $\mathcal{H}(\cdot)$ is a $\gamma$-contraction with respect to $\|\cdot\|_\infty$, it follows—by the same reasoning as in the proof of \cite[Proposition 5 (3)(b)]{chen2024lyapunov}—that the operator $\Bar{F}(\cdot, \pi)$ is a $\gamma_\pi$-contraction with respect to $\|\cdot\|_\infty$. 
As a result, we have
\begin{align*}
    \|\bar{F}(Q_1,\pi)\|_\infty=\|\bar{F}(Q_1,\pi)-\bar{F}(0,\pi)\|_\infty+\|\bar{F}(0,\pi)\|_\infty
    \leq \|Q_1\|_\infty+1,
\end{align*}
where the last inequality follows from $\|\bar{F}(0,\pi)\|_\infty\leq  \max_{s,a}|\mathcal{R}(s,a)|\leq 1$.
\item Since $\mathcal{H}(Q^*) = Q^*$, we have
\begin{align*}
    \Bar{F}(Q^*,\pi) = \left[(I - D_\pi) + D_\pi \mathcal{H}\right](Q^*) = (I - D_\pi) Q^* + D_\pi Q^* = Q^*.
\end{align*}
The uniqueness follows from $\Bar{F}(\cdot,\pi)$ being a contraction mapping \cite{banach1922operations}.

\item Using the definition of $\Bar{F}(\cdot)$, we have
\begin{align*}
    &\|\Bar{F}(Q_{1},\pi_{1}) - \Bar{F}(Q_{2},\pi_{2})\|_{\infty} \\
    =\,&\left\|Q_1 + D_{\pi_1}(\mathcal{H}(Q_1) - Q_1) - Q_2 - D_{\pi_2}(\mathcal{H}(Q_2) - Q_2)\right\|_\infty \\
    \leq\,&\|Q_1 - Q_2\|_\infty + \left\|D_{\pi_1}(\mathcal{H}(Q_1) - Q_1) - D_{\pi_2}(\mathcal{H}(Q_2) - Q_2)\right\|_\infty \\
    \leq\,&\|Q_1 - Q_2\|_\infty + \left\|(D_{\pi_1} - D_{\pi_2})(\mathcal{H}(Q_1) - Q_1)\right\|_\infty \\
    &+ \left\|D_{\pi_2} \left(\mathcal{H}(Q_1) - \mathcal{H}(Q_2) - Q_1 + Q_2\right)\right\|_\infty \\
    \leq\,&\|Q_1 - Q_2\|_\infty + \|D_{\pi_1} - D_{\pi_2}\|_\infty \|\mathcal{H}(Q_1) - Q_1\|_\infty \\
    &+ \|D_{\pi_2}\|_\infty \|\mathcal{H}(Q_1) - \mathcal{H}(Q_2)\|_\infty + \|D_{\pi_2}\|_\infty \|Q_1 - Q_2\|_\infty,
\end{align*}
where the last inequality follows from the definition of induced matrix norms and the triangle inequality. To proceed, we have the following observations:
    \begin{align*}        \|D_{\pi_2}\|_\infty=\,&\max_{s,a}\mu_{\pi_2}(s)\pi_2(a\mid s)\leq 1,\\
        \|D_{\pi_1}-D_{\pi_2}\|_\infty
        =\,&\|\bar{\mu}_{\pi_1}-\bar{\mu}_{\pi_2}\|_\infty,\\
        \|\mathcal{H}(Q_1)-Q_1\|_\infty\leq\,&\|\mathcal{H}(Q_1)\|_\infty+\|Q_1\|_\infty
        \leq \frac{2}{1-\gamma},\\
        \|\mathcal{H}(Q_1)-\mathcal{H}(Q_2)\|_\infty\leq \,&\gamma\|Q_1-Q_2\|_\infty\leq \|Q_1-Q_2\|_\infty.
    \end{align*}
    It follows that
    \begin{align*}
        \|\Bar{F}(Q_{1},\pi_{1}) - \Bar{F}(Q_{2},\pi_{2})\|_{\infty}
        \leq \,&(1+\|D_{\pi_2}\|_\infty)\|Q_1-Q_2\|_\infty+\|D_{\pi_1}-D_{\pi_2}\|_\infty\|\mathcal{H}(Q_1)-Q_1\|_\infty\\
        &+\|D_{\pi_2}\|_\infty\|\mathcal{H}(Q_1)-\mathcal{H}(Q_2)\|_\infty \\
        \leq \,& 3 \|Q_1-Q_2\|_\infty + \frac{2}{1-\gamma}\|\bar{\mu}_{\pi_1}-\bar{\mu}_{\pi_2}\|_\infty.
    \end{align*}
\end{enumerate}
\subsection{Proof of Lemma \ref{le:F-lipschitz}}\label{pf:le:F-lipschitz}
\begin{enumerate}[(1)]
    \item For any $(s,a)$, by the definition of $F(\cdot)$, we have
    \begin{align*}
        & \lvert [F(Q_{1},y)](s,a) - [F(Q_{2},y)](s,a) \rvert \\
         \leq \,& \gamma \mathds{1}_{\{(s_0,a_0)=(s,a)\}}    \left|\sum_{s'\in\mathcal{S}} p(s'\rvert s, a) \max_{a'\in\mathcal{A}} Q_{1}(s',a')  
         - \sum_{s'\in\mathcal{S}} p(s'\rvert s, a) \max_{a'\in\mathcal{A}} Q_{2}(s',a') \right|\\
         &+(1-\mathds{1}_{\{(s_0,a_0)=(s,a)\}})|Q_1(s,a)-Q_2(s,a)|\\
         \leq \,& \gamma \mathds{1}_{\{(s_0,a_0)=(s,a)\}} \sum_{s'\in\mathcal{S}} p(s'|s,a) \left| \max_{a'\in\mathcal{A}} Q_{1}(s',a') - \max_{a'\in\mathcal{A}} Q_{2}(s',a') \right| \\
         &+(1-\mathds{1}_{\{(s_0,a_0)=(s,a)\}})\|Q_1-Q_2\|_\infty\\
         \leq \; & \gamma \mathds{1}_{\{(s_0,a_0)=(s,a)\}} \|Q_{1} - Q_{2}\|_{\infty} + (1-\mathds{1}_{\{(s_0,a_0)=(s,a)\}})\|Q_1-Q_2\|_\infty \\
        \leq \,& \|Q_{1} - Q_{2}\|_{\infty}.
    \end{align*}
Since the right-hand side of the previous inequality does not depend on $(s,a)$, we have
\begin{align*}
\|F(Q_{1},y) - F(Q_{2},y)\|_{\infty} \leq \|Q_{1} - Q_{2}\|_{\infty}.    
\end{align*}
\item For any  $(s,a)$, we have
\begin{align*}
    & \left| [F(Q_1,y)](s,a) - [\bar{F}(Q_1,\pi)](s,a) \right| \\
    & \; = \left| \mathds{1}_{\{(s,a)=(s_0,a_0)\}}-D_\pi(s,a) \right| \left| \mathcal{R}(s,a) 
        + \gamma \sum_{s'\in\mathcal{S}} p(s'\rvert s, a) \max_{a'\in\mathcal{A}} Q_{1}(s',a') 
        -Q_{1}(s,a) \right| \\
    & \; \leq \left| \mathcal{R}(s,a) 
        + \gamma \sum_{s'\in\mathcal{S}} p(s'\rvert s, a) \max_{a'\in\mathcal{A}} Q_{1}(s',a') 
        -Q_{1}(s,a) \right| \\
    & \; \leq 1 + \gamma \|Q_{1}\|_{\infty} + \|Q_{1}\|_{\infty} \\
    & \; \leq 1 + \frac{\gamma}{1-\gamma} + \frac{1}{1-\gamma} \\
    & \; = \frac{2}{1-\gamma}.
\end{align*}
Since the above inequality holds for any $(s,a)$, we have
\begin{align*}
    \|F(Q_1,y) - \bar{F}(Q_1,\pi)\|_{\infty} \leq \frac{2}{1-\gamma}.
\end{align*}
\end{enumerate}

\subsection{Proof of Lemma \ref{le:negDrift}}\label{pf:le:negDrift}
    Since $Q^\ast$ is the unique fixed point of $\Bar{F}(\cdot,\pi_k)$ for any $k$ (cf. Lemma \ref{le:f-bar} (3)), we have 
    \begin{align}
        &\langle \nabla M(Q_k-Q^*),\Bar{F}(Q_k,\pi_k)-Q_k\rangle \nonumber\\
         = \,&\langle \nabla M(Q_k-Q^*),\Bar{F}(Q_k,\pi_k) - \Bar{F}(Q^\ast,\pi_k) + Q^\ast - Q_k \rangle \nonumber\\
         =\,& \langle \nabla M(Q_k-Q^*),\Bar{F}(Q_k,\pi_k) - \Bar{F}(Q^\ast,\pi_k) \rangle 
        - \langle \nabla M(Q_k-Q^*), Q_{k} - Q^\ast \rangle.\label{eq1:pf:le:negDrift}
    \end{align}
    By Lemma \ref{le:Moreau}, we have 
    \begin{align*}
        &\langle \nabla M(Q_k-Q^*),\Bar{F}(Q_k,\pi_k) - \Bar{F}(Q^\ast,\pi_k) \rangle \\
         =\,&\|Q_k-Q^\ast\|_{m}\langle \nabla \|Q_k-Q^\ast\|_{m},\Bar{F}(Q_k,\pi_k) - \Bar{F}(Q^\ast,\pi_k) \rangle \\
         \leq\,& \|Q_k-Q^\ast\|_{m} \left\|\nabla \|Q_k-Q^\ast\|_{m}\right\|_{m}^* \|\Bar{F}(Q_k,\pi_k) - \Bar{F}(Q^\ast,\pi_k)\|_{m}\tag{$\|\cdot\|_{m}^*$ is the dual norm of $\|\cdot\|_{m}$}\\
          \leq\,& \frac{1}{\ell_m} \|Q_k-Q^\ast\|_{m} \left\|\nabla \|Q_k-Q^\ast\|_{m}\right\|_{m}^*\|\Bar{F}(Q_k,\pi_k) - \Bar{F}(Q^\ast,\pi_k)\|_{\infty}\\
         \leq\,& \frac{\gamma_k}{\ell_m} \|Q_k-Q^\ast\|_{m} \left\|\nabla \|Q_k-Q^\ast\|_{m}\right\|_{m}^*\|Q_k - Q^\ast\|_{\infty} 
        \tag{Lemma \ref{le:f-bar} (2)}\\
        \leq\,&\gamma_{k}\frac{u_{m}}{\ell_m}\|Q_k-Q^\ast\|_{m}^2 \left\|\nabla \|Q_k-Q^\ast\|_{m}\right\|_{m}^*\\
        =\,& 2 \gamma_k\frac{u_{m}}{\ell_m}M(Q_k-Q^\ast) \left\|\nabla \|Q_k-Q^\ast\|_{m}\right\|_{m}^*.
    \end{align*}
    To bound $\left\|\nabla \|Q_k-Q^*\|_m\right\|_{m}^*$, we use the following result from \cite{shalev2012online}. 
    \begin{lemma}\label{lem:shalev}
        Let $f:\mathcal{X}\to\mathbb{R}$ be a convex differentiable function. Then, $f$ is $L$-Lipschitz over $\mathcal{X}$ with respect to some norm 
        $\|\cdot\|$, if and only if  
        $\sup_{x\in\mathcal{X}}\|\nabla f(x)\|_{\ast} \leq L$, where $\|\cdot\|_{\ast}$ is the dual norm of $\|\cdot\|$. 
    \end{lemma}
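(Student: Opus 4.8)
The plan is to establish the two implications of the biconditional separately, relying on the first-order characterization of convexity in one direction and the definition of the one-sided directional derivative in the other. Throughout, I would use only the definition of the dual norm, $\|g\|_{\ast} = \sup_{\|v\|\le 1}\langle g, v\rangle$, which in particular yields the duality (Hölder-type) inequality $\langle g, v\rangle \le \|g\|_{\ast}\|v\|$ for all $g, v$.

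For the direction $\sup_{x\in\mathcal{X}}\|\nabla f(x)\|_{\ast}\le L \;\Rightarrow\; f$ is $L$-Lipschitz, I would fix arbitrary $x, y\in\mathcal{X}$ and invoke the first-order condition for convex differentiable functions, $f(x) - f(y) \le \langle \nabla f(x), x - y\rangle$. Applying the duality inequality and the hypothesis gives $\langle \nabla f(x), x - y\rangle \le \|\nabla f(x)\|_{\ast}\|x-y\| \le L\|x-y\|$, hence $f(x) - f(y) \le L\|x-y\|$. Swapping the roles of $x$ and $y$ yields $f(y) - f(x) \le L\|x-y\|$, and combining the two bounds gives $|f(x)-f(y)| \le L\|x-y\|$. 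Notably, this direction uses only convexity of $f$; it needs neither convexity of $\mathcal{X}$ nor any integration-along-segments argument.

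For the converse, $f$ is $L$-Lipschitz $\;\Rightarrow\; \sup_{x\in\mathcal{X}}\|\nabla f(x)\|_{\ast}\le L$, I would fix $x$ in the interior of $\mathcal{X}$ so that $x + tv\in\mathcal{X}$ for all sufficiently small $t>0$ and all directions $v$ with $\|v\|\le 1$. Then $\langle \nabla f(x), v\rangle = \lim_{t\downarrow 0} t^{-1}\big(f(x+tv) - f(x)\big)$, and the Lipschitz property gives $|f(x+tv) - f(x)| \le L\|tv\| \le tL$, so $\langle \nabla f(x), v\rangle \le L$. Taking the supremum over $\{v : \|v\|\le 1\}$ gives $\|\nabla f(x)\|_{\ast}\le L$, and since $x$ was arbitrary the claim follows. (This direction, too, uses no convexity of $f$.) Together these two implications prove the lemma; in the application here it is used with $f = \|\cdot\|_m$, which is trivially $1$-Lipschitz with respect to $\|\cdot\|_m$ by the triangle inequality, so that $\big\|\nabla\|Q_k-Q^*\|_m\big\|_m^{*}\le 1$.

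The argument is essentially routine, and the only point requiring a little care is the treatment of boundary points of $\mathcal{X}$ in the converse direction: there the one-sided directional derivative must be taken along directions pointing into the domain, or one restricts to $\mathrm{int}(\mathcal{X})$ and uses continuity of $\nabla f$ (equivalently of $f$) to extend the bound to all of $\mathcal{X}$. In the present setting the relevant norm lives on all of $\mathbb{R}^d$ and $\nabla M$ is globally defined, so $\mathcal{X} = \mathbb{R}^d$ is open and this subtlety disappears entirely.
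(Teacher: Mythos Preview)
Your proof is correct and follows the standard argument for this classical result. Note, however, that the paper does not actually prove this lemma: it is quoted verbatim as a known result from \cite{shalev2012online} and invoked without proof, so there is no ``paper's own proof'' to compare against. Your write-up supplies exactly the expected argument---the first-order convexity inequality for one direction and the directional-derivative limit for the other---and your remark that the boundary subtlety disappears because the relevant domain is all of $\mathbb{R}^d$ is on point for the application at hand.
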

    Since for any $Q_1,Q_2$, we have by the triangle inequality that
    \begin{align*}
        |\|Q_1\|_{m} - \|Q_2\|_{m}| \leq \|Q_1 - Q_1\|_{m}, 
    \end{align*}
    the function $\|Q\|_{m}$ is $1$-Lipschitz with respect to $\|\cdot\|_{m}$. 
    Therefore, by Lemma \ref{lem:shalev}, we have $\|\nabla \|Q_k-Q^*\|_m\|_m^* \leq 1$, 
    and consequently, 
    \begin{align}\label{eq2:pf:le:negDrift}
        \langle \nabla M(Q_k-Q^*),\Bar{F}(Q_k,\pi_k) - \Bar{F}(Q^\ast,\pi_k) \rangle  
        \leq  2 \gamma_k\frac{u_{m}}{\ell_m} M(Q_k - Q^\ast). 
    \end{align}
    
    Next, we bound the term $\langle \nabla M(Q_k-Q^*), Q_{k} - Q^\ast \rangle$ (on the right-hand side of Eq. \eqref{eq1:pf:le:negDrift}) from below. Using Lemma \ref{le:Moreau}, we have
    \begin{align*}
        \langle \nabla M(Q_k-Q^*), Q_{k} - Q^\ast \rangle 
        = \|Q_k-Q^\ast\|_{m} \left\langle \nabla \|Q_k-Q^\ast\|_{m}, Q_{k} - Q^\ast \right\rangle.
    \end{align*}
    Since $\|Q\|_{m}$ is a convex function, we have
    \begin{align*}
         \|0\|_{m} \geq \,&\|Q_{k} - Q^{\ast}\|_{m} + \left\langle \nabla \|Q_{k} - Q^{\ast}\|_{m}, Q^*-Q_{k} \right\rangle \\
        \Longrightarrow \quad \|Q_{k} - Q^{\ast}\|_{m}  \leq\,& \left\langle \nabla \|Q_{k} - Q^{\ast}\|_{m}, Q_{k}-Q^* \right\rangle.
    \end{align*}
    As a result, we have 
    \begin{align*}
        \langle \nabla M(Q_k-Q^*), Q_{k} - Q^\ast \rangle 
        \geq \|Q_{k} - Q^{\ast}\|_{m}^2=2M(Q_k-Q^*).
    \end{align*}
    Using the previous inequality and Eq. (\ref{eq2:pf:le:negDrift}) in Eq. (\ref{eq1:pf:le:negDrift}), we have
    \begin{align*}
        \langle \nabla M(Q_k-Q^*),\Bar{F}(Q_k,\pi_k)-Q_k\rangle \leq  -2\left(1 - \gamma_k\frac{u_{m}}{\ell_m}\right)M(Q_k - Q^\ast). 
    \end{align*}
    Taking expectations on both sides of the previous inequality gives
\begin{align*}
    E_1 \;\leq\; -2\left(1 - \frac{u_m}{\ell_m} \gamma_k \right)\, \mathbb{E}\!\left[M(Q_k - Q^*)\right].
\end{align*}

\subsection{Proof of Lemma \ref{le:E_3}}\label{pf:le:E_3}
Recall that $\mathcal{F}_k$ is the $\sigma$-algebra generated by $\{Y_0,Y_1,\cdots,Y_k\}$. Since both  $Q_k$ and $\pi_k$ are measurable with respect to $\mathcal{F}_k$, we have
by the tower property of conditional expectations that
\begin{align*}
    E_3=\mathbb{E}[\langle \nabla M(Q_k-Q^*), \mathbb{E}[M_{k}(Q_{k},\pi_{k})\mid \mathcal{F}_k]\rangle].
\end{align*}
It remains to show that $\mathbb{E}[M_{k}(Q_{k},\pi_{k})\mid \mathcal{F}_k]=0$, i.e., $M_{k}(Q_{k},\pi_{k})$ is a martingale difference sequence with respect to $\mathcal{F}_k$.
For any $(s,a)$, we have
    \begin{align*}
       &\mathbb{E}\left[M_{k}(Q_{k},\pi_{k})(s,a)\mid  \mathcal{F}_{k}\right] \\
       =\;& \mathbb{E} \left[ \gamma \mathds{1}_{\{(S_k,A_k)=(s,a)\}}\left(\max_{a'\in\mathcal{A}} Q_{k}(S_{k+1},a') - \sum_{s'\in\mathcal{S}}p(s'|s,a)\max_{a'\in\mathcal{A}} Q_{k}(s',a')\right) \,\middle|\, \mathcal{F}_{k} \right] \\
       =\;& \gamma \mathds{1}_{\{(S_k,A_k)=(s,a)\}}\left(\mathbb{E} \left[\max_{a'\in\mathcal{A}} Q_{k}(S_{k+1},a')\,\middle|\, \mathcal{F}_{k}\right] - \sum_{s'\in\mathcal{S}}p(s'|s,a)\max_{a'\in\mathcal{A}} Q_{k}(s',a')\right).
    \end{align*}
    Since
    \begin{align*}
        \mathbb{E} \left[\max_{a'\in\mathcal{A}} Q_{k}(S_{k+1},a')\,\middle|\, \mathcal{F}_{k}\right] 
        & = \sum_{s'\in\mathcal{S}} \mathbb{E}\left[\mathds{1}_{\{s' = S_{k+1}\}} \max_{a'\in\mathcal{A}} Q_{k}(s',a') \middle| \mathcal{F}_{k}\right] \\
        & = \sum_{s' \in \mathcal{S}} \max_{a'\in\mathcal{A}} Q_{k}(s',a')\; \mathbb{E}\left[ \mathds{1}_{\{s' = S_{k+1}\}} \mid \mathcal{F}_{k} \right] \tag{$Q_{k} \in \mathcal{F}_{k}$} \\
        & = \sum_{s' \in \mathcal{S}} \max_{a'\in\mathcal{A}} Q_{k}(s',a')\; \mathbb{E}\left[ \mathds{1}_{\{s' = S_{k+1}\}} \mid S_{k}, A_{k} \right] \tag{The Markov property} \\
        & = \sum_{s' \in \mathcal{S}} \max_{a'\in\mathcal{A}} Q_{k}(s',a') p(s'|s,a), 
    \end{align*}
    we have $\mathbb{E}\left[M_{k}(Q_{k},\pi_{k})(s,a)\rvert \mathcal{F}_{k}\right] = 0$.
\subsection{Proof of Lemma \ref{le:E_4}}\label{pf:le:E_4}
Using the definitions of $F(Q_k,\pi_k,Y_k)$ and $M_k(Q_k,\pi_k)$, we have for any $(s,a)$ that
\begin{align*}
    & \left\lvert [F(Q_{k},\pi_{k},Y_{k})](s,a) + [M_{k}(Q_{k},\pi_{k})](s,a) - Q_{k}(s,a) \right\rvert \\
    =\,& \Big\lvert \mathds{1}_{\{(S_{k},A_{k})=(s,a)\}}\Big[\mathcal{R}(s,a) + \gamma \max_{a'\in\mathcal{A}} Q_{k}(S_{k+1},a') - Q_{k}(s,a)\Big] \Big\rvert \\
    \leq\,& \|\mathcal{R}\|_{\infty} + \gamma \|Q_{k}\|_{\infty} + \|Q_{k}\|_{\infty} \\
    \leq\,&1+\frac{\gamma}{1-\gamma}+\frac{1}{1-\gamma}\tag{$\max_{s,a}|\mathcal{R}(s,a)|\leq 1$ and $\|Q_k\|_\infty\leq 1/(1-\gamma)$ \cite{gosavi2006boundedness}}\\
    =\,&\frac{2}{1 -\gamma}.
\end{align*}
Since the previous inequality holds for all $(s,a)$, we have
\begin{align}\label{eq:need}
    \|F(Q_{k},\pi_{k},Y_{k}) + M_{k}(Q_{k},\pi_{k}) - Q_{k}\|_\infty^2\leq\frac{4}{(1-\gamma)^2}, 
\end{align}
which further implies 
\begin{align*}
    E_{4} = \; & \mathbb{E}[\|F(Q_{k},Y_{k}) + M_{k}(Q_{k}) - Q_{k}\|_p^2] \\
    \leq\;& \frac{1}{\ell_{p}^{2}}\mathbb{E}[\|F(Q_{k},Y_{k}) + M_{k}(Q_{k}) - Q_{k}\|_{\infty}^2] \\
    \leq\;& \frac{4}{\ell_{p}^{2}(1-\gamma)^2} \\
    =\; & \frac{4 (|\mathcal{S}||\mathcal{A}|)^{2/p}}{(1-\gamma)^2}. \tag{$\ell_p = (|\mathcal{S}||\mathcal{A}|)^{-1/p}$}
\end{align*}

\subsection{Proof of Proposition \ref{prop:MC}}\label{pf:prop:MC}
Throughout the proof, we assume without loss of generality that $\mu^\top y = \mu_1^\top y_1 = \mu_2^\top y_2 = 0$.
\begin{enumerate}
    \item The fact that $x = \sum_{k=0}^\infty \mathcal{P}^k y / 2$ is a solution to the Poisson equation is a classical result \cite{asmussen1992stationarity,glynn2002hoeffding}. To bound $\|x\|_\infty$, note that we have
    \begin{align*}
        \|x\|_\infty 
        &\leq \frac{1}{2}\sum_{k=0}^\infty \|\mathcal{P}^k y\|_\infty \\
        &= \frac{1}{2}\sum_{k=0}^\infty \max_i \left| \sum_j (\mathcal{P}^k(i,j) - \mu(j)) y_j \right| \tag{$\mu^\top y = 0$} \\
        &\leq \frac{1}{2}\|y\|_\infty \sum_{k=0}^\infty \max_i \sum_j |\mathcal{P}^k(i,j) - \mu(j)| \\
        &\leq \frac{1}{2}\|y\|_\infty \sum_{k=0}^\infty 2 C \rho^k \\
        &= \frac{C\|y\|_\infty}{1 - \rho}.
    \end{align*}

    \item For any $n \geq 0$, we have
\begin{align*}
	\|x_1 - x_2\|_\infty 
	&= \frac{1}{2}\left\| \sum_{k=0}^\infty \mathcal{P}_1^k y_1 - \sum_{k=0}^\infty \mathcal{P}_2^k y_2 \right\|_\infty \\
	&\leq \frac{1}{2}\left\| \sum_{k=0}^{n-1} \mathcal{P}_1^k y_1 - \sum_{k=0}^{n-1} \mathcal{P}_2^k y_2 \right\|_\infty + \frac{1}{2}\left\| \sum_{k=n}^\infty \mathcal{P}_1^k y_1 - \sum_{k=n}^\infty \mathcal{P}_2^k y_2 \right\|_\infty \\
	&\leq \frac{1}{2}\sum_{k=0}^{n-1} \|\mathcal{P}_1^k\|_\infty \|y_1 - y_2\|_\infty + \frac{1}{2}\sum_{k=0}^{n-1} \|\mathcal{P}_1^k - \mathcal{P}_2^k\|_\infty \|y_2\|_\infty \\
	&\quad + \frac{1}{2}\left\| \sum_{k=n}^\infty \mathcal{P}_1^k y_1 \right\|_\infty + \frac{1}{2}\left\| \sum_{k=n}^\infty \mathcal{P}_2^k y_2 \right\|_\infty.
\end{align*}
We now bound each term on the right-hand side. Since each $\mathcal{P}_1^k$ is a stochastic matrix,
\begin{align*}
	\sum_{k=0}^{n-1} \|\mathcal{P}_1^k\|_\infty \|y_1 - y_2\|_\infty 
	= \sum_{k=0}^{n-1} \|y_1 - y_2\|_\infty 
	= n \|y_1 - y_2\|_\infty.
\end{align*}

Next, we bound the difference $\|P_1^k - P_2^k\|_\infty$ recursively:
\begin{align*}
	\|\mathcal{P}_1^k - \mathcal{P}_2^k\|_\infty 
	&\leq \|\mathcal{P}_1(\mathcal{P}_1^{k-1} - \mathcal{P}_2^{k-1})\|_\infty + \|(\mathcal{P}_1 - \mathcal{P}_2)\mathcal{P}_2^{k-1}\|_\infty \\
	&\leq \|\mathcal{P}_1\| \cdot \|\mathcal{P}_1^{k-1} - \mathcal{P}_2^{k-1}\|_\infty + \|\mathcal{P}_1 - \mathcal{P}_2\|_\infty \cdot \|\mathcal{P}_2^{k-1}\|_\infty \\
	&\leq \|\mathcal{P}_1^{k-1} - \mathcal{P}_2^{k-1}\|_\infty + \|\mathcal{P}_1 - \mathcal{P}_2\|_\infty \\
	&\leq \cdots \\
	&\leq k \|\mathcal{P}_1 - \mathcal{P}_2\|_\infty.
\end{align*}
Therefore,
\begin{align*}
	\sum_{k=0}^{n-1} \|\mathcal{P}_1^k - \mathcal{P}_2^k\|_\infty \|y_2\|_\infty 
	\leq \|\mathcal{P}_1 - \mathcal{P}_2\|_\infty \|y_2\|_\infty \sum_{k=0}^{n-1} k 
	= \frac{n(n - 1)}{2} \|\mathcal{P}_1 - \mathcal{P}_2\|_\infty \|y_2\|_\infty.
\end{align*}

Using the same technique as in Part (1), we obtain the following tail bounds:
\begin{align*}
	\left\| \sum_{k=n}^\infty \mathcal{P}_1^k y_1 \right\|_\infty 
	\leq \frac{2C_1 \rho_1^n}{1 - \rho_1} \|y_1\|_\infty, \quad
	\left\| \sum_{k=n}^\infty \mathcal{P}_2^k y_2 \right\|_\infty 
	\leq \frac{2C_2 \rho_2^n}{1 - \rho_2} \|y_2\|_\infty,
\end{align*}
where $(C_1,\rho_1)$ and $(C_2,\rho_2)$ are mixing parameters associated with $\mathcal{P}_1$ and $\mathcal{P}_2$, respectively.

Putting everything together, we have
\begin{align*}
	\|x_1 - x_2\|_\infty 
	\leq \frac{C_1 \rho_1^n \|y_1\|_\infty}{1 - \rho_1}
	+ \frac{C_2 \rho_2^n \|y_2\|_\infty}{1 - \rho_2}
	+ \frac{n}{2} \|y_1 - y_2\|_\infty
	+ \frac{n(n - 1)}{4} \|P_1 - P_2\|_\infty \|y_2\|_\infty.
\end{align*}
Using an entirely similar argument, we also have
\begin{align*}
    \|x_1 - x_2\|_\infty 
	\leq \frac{C_1 \rho_1^n \|y_1\|_\infty}{1 - \rho_1}
	+ \frac{C_2 \rho_2^n \|y_2\|_\infty}{1 - \rho_2}
	+ \frac{n}{2} \|y_1 - y_2\|_\infty
	+ \frac{n(n - 1)}{4} \|P_1 - P_2\|_\infty \|y_1\|_\infty.
\end{align*}
Adding up the previous two inequalities, we obtain
\begin{align*}
    \|x_1 - x_2\|_\infty 
	\leq\,& \frac{C_1 \rho_1^n \|y_1\|_\infty}{1 - \rho_1}+\frac{n(n - 1)}{8} \|P_1 - P_2\|_\infty \|y_1\|_\infty.\\
    &+ \frac{C_2 \rho_2^n \|y_2\|_\infty}{1 - \rho_2}+\frac{n(n - 1)}{8} \|P_1 - P_2\|_\infty \|y_2\|_\infty+ \frac{n}{2} \|y_1 - y_2\|_\infty\\
    \leq \,&\frac{C_{\max} n^2\rho_{\max}^n (\|y_1\|_\infty+\|y_2\|_\infty)}{1 - \rho_{\max}}+\frac{n^2}{8} \|P_1 - P_2\|_\infty (\|y_1\|_\infty+\|y_2\|_\infty)+ \frac{n}{2} \|y_1 - y_2\|_\infty,
\end{align*}
where $C_{\max}=\max(C_1,C_2)$ and $\rho_{\max}=\max(\rho_1,\rho_2)$. 

Finally, since the previous inequality holds for any $n$, by choosing 
\begin{align*}
    n=\frac{\log(\frac{\|P_1 - P_2\|_\infty(1-\rho_{\max})}{8C_{\max}})}{\log(\rho_{\max})},
\end{align*}
we obtain
\begin{align*}
    \|x_1 - x_2\|_\infty \leq\,& \frac{1}{4}\left(\frac{\log(\|P_1 - P_2\|_\infty(1-\rho_{\max}))-\log(8C_{\max})}{\log(\rho_{\max})}\right)^2 \|P_1 - P_2\|_\infty (\|y_1\|_\infty+\|y_2\|_\infty)\\
    &+ \frac{1}{2}\left(\frac{\log(\|P_1 - P_2\|_\infty(1-\rho_{\max}))-\log(8C_{\max})}{\log(\rho_{\max})}\right) \|y_1 - y_2\|_\infty.
\end{align*}
\end{enumerate}

\subsection{Proof of Lemma \ref{le:implicit_to_explicit}}\label{pf:le:mixing-parameters}
\begin{enumerate}[(1)] 
\item For any $s, s' \in \mathcal{S}$, we have
    \begin{align}
        P_{\pi}(s,s') & = \sum_{a\in\mathcal{A}} p(s'\rvert s,a) \pi(a \rvert s) \nonumber \\
        & = \sum_{a\in\mathcal{A}} p(s'\rvert s, a) \pi_{b}(a\rvert s) \frac{\pi(a|s)}{\pi_{b}(a\rvert s)}\nonumber \tag{$\pi_{b}(a\rvert s)\in (0,1)$}\\
        & \geq \min_{s,a}\pi(a|s) \sum_{a\in\mathcal{A}} p(s'\rvert s, a) \pi_{b}(a\rvert s) \nonumber \\
        & = \pi_{\min} P_{\pi_{b}}(s,s'). \label{eq2:pf:le:mixing-parameters}
    \end{align}
    Now, considering the corresponding lazy chain $\mathcal{P}_\pi=(I+P_\pi)/2$, for any $s, s' \in \mathcal{S}$ 
    \begin{align*}
        \mathcal{P}_{\pi}(s,s') & = \frac{1}{2}\left[ \mathds{1}_{\{s = s'\}} + P_{\pi}(s,s') \right] \\
        & \geq \frac{\pi_{\min}}{2}\left[  \mathds{1}_{\{s = s'\}} +  P_{\pi_{b}}(s,s') \right] \tag{Eq. \eqref{eq2:pf:le:mixing-parameters}} \\
        & = \pi_{\min} \mathcal{P}_{\pi_{b}}(s,s')
    \end{align*}
    Thus, we have the entry-wise inequality $\mathcal{P}_{\pi} \geq \pi_{\min} \mathcal{P}_{\pi_{b}}$, a repeated application of which gives $\mathcal{P}_{\pi}^{k} \geq \pi_{\min}^{k} \mathcal{P}_{\pi_{b}}^{k}$ for all $k \geq 0$. 
    Since $\mu_{\pi}$ is the stationary distribution of both $P_{\pi}$ and $\mathcal{P}_{\pi}$, we have for any $s\in\mathcal{S}$ that
    \begin{align*}
        \mu_{\pi}(s) & = \sum_{s'\in\mathcal{S}} \mu_{\pi}(s') \mathcal{P}_{\pi}^{r_{b}}(s',s) \tag{$\mu_\pi^\top =\mu_\pi^\top P_\pi^k$ for any $k\geq 0$}\\
        & \geq \pi_{\min}^{r_{b}} \sum_{s'\in\mathcal{S}} \mu_{\pi}(s') \mathcal{P}_{\pi_{b}}^{r_{b}}(s',s) \\
        & \geq \pi_{\min}^{r_{b}} \sum_{s'\in\mathcal{S}} \mu_{\pi}(s') \delta_{b} \mu_{\pi_b}(s)\tag{Definition of $\delta_b$}\\
        & \geq \pi_{\min}^{r_{b}} \delta_{b} \mu_{\pi_b,\min} \sum_{s'\in\mathcal{S}} \mu_{\pi}(s') \\
        & = \pi_{\min}^{r_{b}} \delta_{b} \mu_{\pi_b,\min}. 
    \end{align*}
    It follows that
    \begin{align*}
        \min_{s,a}\bar{\mu}_\pi(s,a)=\min_{s,a}\mu_\pi(s)\pi(a|s)\geq \pi_{\min}^{r_{b}+1} \delta_{b} \mu_{\pi_b,\min}.
    \end{align*}
	\item We first show that $\Bar{\mathcal{P}}_{\pi}^{k} \geq \pi_{\min}^{k} \Bar{\mathcal{P}}_{\pi_{b}}^{k}$ for all $k\geq 0$. For any $(s,a), (s',a') \in \mathcal{S}\times\mathcal{A}$, we have    
	\begin{align*}
		\bar{\mathcal{P}}_{\pi}((s,a),(s',a')) 
		& = \frac{1}{2}\left[\mathds{1}_{\{(s,a) = (s',a')\}} + p(s'\rvert s,a) \pi(a'\rvert s')\right] \\
		& = \frac{1}{2}\left[\mathds{1}_{\{(s,a) = (s',a')\}} + p(s'\rvert s,a) \pi_{b}(a'\rvert s') \frac{\pi(a'\rvert s')}{\pi_{b}(a'\rvert s')}\right] \\
		& \geq \frac{\pi_{\min}}{2}\left[\mathds{1}_{\{(s,a) = (s',a')\}} +  p(s'\rvert s,a) \pi_{b}(a'\rvert s') \right] \tag{$\pi_{b}(a'\rvert s')\in (0,1) ,\pi_{\min} \in (0,1)$} \\
		& = \frac{\pi_{\min}}{2} \left[ \mathds{1}_{\{(s,a) = (s',a')\}} + \bar{P}_{\pi_{b}}((s,a),(s',a')) \right] \\
		& = \pi_{\min}\bar{\mathcal{P}}_{\pi_{b}}((s,a),(s',a')).
	\end{align*}
	Therefore, we have the entry-wise inequality $\Bar{\mathcal{P}}_{\pi} \geq \pi_{\min} \Bar{\mathcal{P}}_{\pi_{b}}$, and hence, $\Bar{\mathcal{P}}_{\pi}^{k} \geq \pi_{\min}^{k} \Bar{\mathcal{P}}_{\pi_{b}}^{k}$ for all $k\geq 0$.
	By the definition of $\Bar{\mathcal{P}}_{\pi_{b}}$, for any $k \geq 0$, we have
	\begin{align*}
		\Bar{\mathcal{P}}_{\pi_{b}}^{k} = \frac{1}{2^{k}}\left[ I + \bar{P}_{\pi_{b}}\right]^{k} = \frac{1}{2^{k}}\sum_{j = 0}^{k} \binom{k}{j}\bar{P}_{\pi_{b}}^{j}. 
	\end{align*}
	Therefore, for any $(s,a),(s',a') \in \mathcal{Y}$, we have
	\begin{align*}
		\Bar{\mathcal{P}}_{\pi_{b}}^{r_{b}+1}((s,a),(s',a')) & = \frac{1}{2^{r_{b}+1}}\sum_{j = 0}^{r_{b}+1} \binom{r_{b}+1}{j}\bar{P}_{\pi_{b}}^{j}((s,a),(s',a')) \\
		& \geq \frac{1}{2^{r_{b}+1}}\sum_{j = 1}^{r_{b}+1} \binom{r_{b}+1}{j}\bar{P}_{\pi_{b}}^{j}((s,a),(s',a')) \\
		& = \frac{1}{2^{r_{b}+1}}\sum_{j = 1}^{r_{b}+1} \binom{r_{b}+1}{j} \sum_{s''\in\mathcal{S}} p(s''\rvert s, a) P_{\pi_{b}}^{j-1}(s'',s') \pi_{b}(a'\rvert s') \\
		& = \frac{1}{2^{r_{b}+1}} \sum_{s''\in\mathcal{S}} p(s''\rvert s, a) \left[ \sum_{j = 1}^{r_{b}+1} \binom{r_{b}+1}{j} P_{\pi_{b}}^{j-1}(s'',s') \right] \pi_{b}(a'\rvert s')\\
		& = \frac{1}{2^{r_{b}+1}} \sum_{s''\in\mathcal{S}} p(s''\rvert s, a) \left[ \sum_{i = 0}^{r_{b}} \binom{r_{b}+1}{i+1} P_{\pi_{b}}^i(s'',s') \right] \pi_{b}(a'\rvert s')\tag{Change of variable: $i=j-1$}\\
		& = \frac{1}{2^{r_{b}+1}} \sum_{s''\in\mathcal{S}} p(s''\rvert s, a) \left[ \sum_{i = 0}^{r_{b}} \binom{r_{b}}{i}\frac{r_b+1}{i+1} P_{\pi_{b}}^i(s'',s') \right] \pi_{b}(a'\rvert s')\\
		&\geq \frac{1}{2^{r_{b}+1}} \sum_{s''\in\mathcal{S}} p(s''\rvert s, a) \left[ \sum_{i = 0}^{r_{b}} \binom{r_{b}}{i} P_{\pi_{b}}^i(s'',s') \right] \pi_{b}(a'\rvert s') \tag{$r_{b} \geq i$} \\
		&= \frac{1}{2} \sum_{s''\in\mathcal{S}} p(s''\rvert s, a) \mathcal{P}_{\pi_b}^{r_b}(s'',s') \pi_{b}(a'\rvert s')\\
		&\geq \frac{\delta_b}{2} \sum_{s''\in\mathcal{S}} p(s''\rvert s, a) \mu_{\pi_b}(s') \pi_{b}(a'\rvert s')\\
		&= \frac{\delta_b}{2} \bar{\mu}_{\pi_b}(s',a').
	\end{align*}
	Since $\Bar{\mathcal{P}}_{\pi}^{k} \geq \pi_{\min}^{k} \Bar{\mathcal{P}}_{\pi_{b}}^{k}$ for all $k\geq 0$, we have 
	\begin{align*}
		\Bar{\mathcal{P}}_{\pi}^{r_{b}+1}((s,a),(s',a')) & \geq \pi_{\min}^{r_b+1}\Bar{\mathcal{P}}_{\pi_b}^{r_{b}+1}((s,a),(s',a'))\\
		&\geq \frac{1}{2}\delta_{b}\pi_{\min}^{r_{b}+1} \Bar{\mu}_{\pi_{b}}(s',a') \\
		& = \frac{1}{2}\delta_{b}\pi_{\min}^{r_{b}+1} \frac{\bar{\mu}_{\pi_{b}}(s',a')}{\bar{\mu}_{\pi}(s',a')}\bar{\mu}_{\pi}(s',a')\tag{$\bar{\mu}_\pi(s,a)>0$ for all $(s,a)$} \\
		& \geq \frac{1}{2}\delta_{b}\pi_{\min}^{r_{b}+1} \mu_{\pi_b}(s') \pi_{b}(a'\rvert s') \bar{\mu}_{\pi}(s',a') \tag{$\bar{\mu}_{\pi}(s',a') <1$} \\
		& \geq \frac{1}{2}\delta_{b} \pi_{\min}^{r_{b}+1} \mu_{\pi_b,\min}\pi_{b,\min} \bar{\mu}_{\pi}(s',a').
	\end{align*}
	With the previous inequality at hand, we follow the proof of \cite[Theorem 4.9 from Eq. (4.15) to Eq. (4.21)]{levin2017markov} to conclude that
	\begin{align*}
		\max_{(s,a)} \|\Bar{\mathcal{P}}_{\pi}^{k}((s,a),(\cdot,\cdot))) - \bar{\mu}_{\pi}(\cdot,\cdot)\|_{\text{TV}} \leq \bar{C}_\pi \bar{\rho}_\pi^{k},\quad \forall\,k\geq 0,
	\end{align*}
	where
	\begin{align*}
		\bar{C}_\pi = \left(1 - \frac{1}{2}\delta_{b} \pi_{\min}^{r_{b}+1} \mu_{\pi_b,\min} \pi_{b,\min}\right)^{-1}, \quad \text{and}\quad 
		\bar{\rho}_\pi = \left(1 - \frac{1}{2}\delta_{b} \pi_{\min}^{r_{b}+1} \mu_{\pi_b,\min} \pi_{b,\min}\right)^{1/(r_{b}+1)}.
	\end{align*}
\end{enumerate}

\subsection{Proof of Lemma \ref{le:E23}}\label{pf:le:E23}
By H\"{o}lder's inequality, we have
\begin{align}
    & \mathbb{E}[\langle \nabla M(Q_{k+1} - Q^*) - \nabla M(Q_k - Q^*), h(Q_{k+1}, \pi_{k+1}, Y_{k+1}) \rangle] \nonumber \\
    \leq\,& \mathbb{E}[\|\nabla M(Q_{k+1} - Q^*) - \nabla M(Q_k - Q^*)\|_q \cdot \|h(Q_{k+1}, \pi_{k+1}, Y_{k+1})\|_p] \nonumber \\
    \leq\,& (|\mathcal{S}||\mathcal{A}|)^{1/p} \mathbb{E}[\|\nabla M(Q_{k+1} - Q^*) - \nabla M(Q_k - Q^*)\|_q \cdot \|h(Q_{k+1}, \pi_{k+1}, Y_{k+1})\|_\infty], \label{eq1:pf:le:PE_first_term}
\end{align}
where $1/p + 1/q = 1$.

Since the Lyapunov function $M(\cdot)$ is $L$-smooth with respect to $\|\cdot\|_p$, we have
\begin{align}
    \|\nabla M(Q_{k+1} - Q^*) - \nabla M(Q_k - Q^*)\|_q 
    \leq\,& L \|Q_{k+1} - Q_k\|_p \nonumber \\
    \leq\,& L(|\mathcal{S}||\mathcal{A}|)^{1/p} \|Q_{k+1} - Q_k\|_\infty \nonumber \\
    =\,& \alpha_k L(|\mathcal{S}||\mathcal{A}|)^{1/p} \|F(Q_k, Y_k) + M_k(Q_k) - Q_k\|_\infty \nonumber \\
    \leq\,& \frac{2L(|\mathcal{S}||\mathcal{A}|)^{1/p} \alpha_k}{1 - \gamma}, \label{eq2:pf:le:PE_first_term}
\end{align}
where the last inequality follows from Eq. (\ref{eq:need}).
It remains to bound $\|h(Q_{k+1}, \pi_{k+1}, Y_{k+1})\|_\infty$. Note that, fixing $(s,a)$, $[h(Q_{k+1}, \pi_{k+1}, Y_{k+1})](s,a)$ solves the Poisson equation
\begin{align*}
     &[h(Q_{k+1},\pi_{k+1},Y_{k+1})](s,a) - \sum_{y'\in\mathcal{Y}}\bar{P}_{k+1}(Y_{k+1},y')[h(Q_{k+1},\pi_{k+1},y')](s,a)\\
    =\,& [F(Q_{k+1}, \pi_{k+1}, Y_{k+1})](s,a) - [\bar{F}(Q_{k+1}, \pi_{k+1})](s,a).
\end{align*}
Therefore, denoting $(\bar{C}_{k+1},\bar{\rho}_{k+1})$ as the mixing parameters associated with the lazy transition matrix $\bar{\mathcal{P}}_{k+1}$, we have by Proposition \ref{prop:MC} (1) that
\begin{align*}
    |[h(Q_{k+1}, \pi_{k+1}, Y_{k+1})](s,a)|
    \leq\,& \frac{\bar{C}_{k+1}}{1 - \bar{\rho}_{k+1}} \max_{y\in\mathcal{Y}}|[F(Q_{k+1}, y)](s,a) - [\bar{F}(Q_{k+1}, \pi_{k+1})](s,a)| \\
    \leq \,& \frac{\bar{C}_{k+1}}{1 - \bar{\rho}_{k+1}} \max_{y\in\mathcal{Y}}\|F(Q_{k+1}, y) - \bar{F}(Q_{k+1}, \pi_{k+1})\|_\infty \\
    \leq \,&\frac{2\bar{C}_{k+1}}{(1 - \bar{\rho}_{k+1})(1-\gamma)}, 
\end{align*}
where the last inequality follows from $\|Q_k\|_\infty\leq 1/(1-\gamma)$ \cite{gosavi2006boundedness} and Lemma \ref{le:F-lipschitz}. 
The previous inequality implies
\begin{align}\label{eq:PES_boundedness}
    \|h(Q_{k+1}, \pi_{k+1}, Y_{k+1})\|_\infty\leq \frac{2\bar{C}_{k+1}}{(1 - \bar{\rho}_{k+1})(1-\gamma)}.
\end{align}
Using the previous inequality and Eq. \eqref{eq2:pf:le:PE_first_term} in Eq. \eqref{eq1:pf:le:PE_first_term}, we obtain
\begin{align*}
    \mathbb{E}[\langle \nabla M(Q_{k+1} - Q^*) - \nabla M(Q_k - Q^*), h(Q_{k+1}, \pi_{k+1}, Y_{k+1}) \rangle] 
    \leq \frac{4\bar{C}_{k+1} L (|\mathcal{S}||\mathcal{A}|)^{2/p} \alpha_k}{(1 - \bar{\rho}_{k+1})(1 - \gamma)^2}, 
\end{align*}
which, upon multiplying both sides by $\alpha_{k+1}/\alpha_{k}$, yields the desired inequality. 

\subsection{Proof of Lemma \ref{le:E24}}\label{pf:le:E24}

For any $k\geq 0$, using Lemma \ref{le:Moreau}, we have
\begin{align}
    & \langle \nabla M(Q_k - Q^*), h(Q_{k+1}, \pi_{k+1}, Y_{k+1}) - h(Q_k, \pi_k, Y_{k+1}) \rangle \nonumber\\
    =\,& \|Q_k - Q^*\|_m \langle \nabla \|Q_k - Q^*\|_m, h(Q_{k+1}, \pi_{k+1}, Y_{k+1}) - h(Q_k, \pi_k, Y_{k+1}) \rangle \nonumber\\
    \leq\,& \|Q_k - Q^*\|_m \left\| \nabla \|Q_k - Q^*\|_m \right\|_m^* \cdot \|h(Q_{k+1}, \pi_{k+1}, Y_{k+1}) - h(Q_k, \pi_k, Y_{k+1})\|_m \nonumber\\
    \leq\,& \|Q_k - Q^*\|_m \cdot \|h(Q_{k+1}, \pi_{k+1}, Y_{k+1}) - h(Q_k, \pi_k, Y_{k+1})\|_m\nonumber\\
    \leq \,&\frac{1}{\ell_m}\sqrt{2M(Q_k - Q^*)} \cdot \|h(Q_{k+1}, \pi_{k+1}, Y_{k+1}) - h(Q_k, \pi_k, Y_{k+1})\|_\infty\nonumber\\
    \leq\,& \frac{1}{2}\left(1-\frac{u_m}{\ell_m}\gamma_k\right)M(Q_k - Q^*) +\frac{1}{\ell_m^{2}\left(1-\frac{u_m}{\ell_m}\gamma_k\right)} \|h(Q_{k+1}, \pi_{k+1}, Y_{k+1}) - h(Q_k, \pi_k, Y_{k+1})\|_\infty^2,\label{eq1:pf:le:PE-second-term}
\end{align}
where the last line follows from $a^2+b^2\geq 2ab$ for any $a,b\in\mathbb{R}$.
To proceed, applying Proposition \ref{prop:MC} (2), we have 
\begin{align*}
    &\|h(Q_{k+1}, \pi_{k+1}, Y_{k+1}) - h(Q_k, \pi_k, Y_{k+1})\|_\infty\\ 
    \leq\,& \frac{1}{4}\left(\frac{\log(\|\bar{P}_{k+1} - \bar{P}_k\|_\infty(1-\rho_{\max}))-\log(8C_{\max})}{\log(\rho_{\max})}\right)^2 \|\bar{P}_{k+1} - \bar{P}_k\|_\infty\\
    &\times (\|F(Q_{k+1},Y_{k+1})-\bar{F}(Q_{k+1},\pi_{k+1})\|_\infty+\|F(Q_k,Y_k)-\bar{F}(Q_k,\pi_k)\|_\infty)\\
    &+ \frac{1}{2}\left(\frac{\log(\|\bar{P}_{k+1} - \bar{P}_k\|_\infty(1-\rho_{\max}))-\log(8C_{\max})}{\log(\rho_{\max})}\right)\\
    &\times \|F(Q_{k+1},Y_{k+1})-\bar{F}(Q_{k+1},\pi_{k+1})-F(Q_{k},Y_{k+1})+\bar{F}(Q_{k},\pi_{k})\|_\infty\\
    \leq \,&\frac{1}{1-\gamma}\left(\frac{\log(\|\bar{P}_{k+1} - \bar{P}_k\|_\infty(1-\rho_{\max}))-\log(8C_{\max})}{\log(\rho_{\max})}\right)^2 \|\bar{P}_{k+1} - \bar{P}_k\|_\infty\\
    &+ \frac{1}{2}\left(\frac{\log(\|\bar{P}_{k+1} - \bar{P}_k\|_\infty(1-\rho_{\max}))-\log(8C_{\max})}{\log(\rho_{\max})}\right)\\
    &\times \left(4\|Q_{k+1}-Q_k\|_\infty+\frac{2}{1-\gamma}\|\bar{\mu}_{k+1}-\bar{\mu}_k\|_\infty\right)
\end{align*}
where $C_{\max}=\max(\bar{C}_k,\bar{C}_{k+1})$, $\rho_{\max}=\max(\bar{\rho}_k,\bar{\rho}_{k+1})$, and the last inequality follows from Lemmas \ref{le:f-bar} and \ref{le:F-lipschitz}. 

To further bound the right-hand side of the previous inequality, observe that
\begin{align*}
    \|Q_{k+1}-Q_k\|_\infty=\,&\alpha_k\|F(Q_k,Y_k)+M_k(Q_k,\pi_k)-Q_k\|_\infty\leq \frac{2\alpha_k}{1-\gamma},\tag{Eq. (\ref{eq:need})}\\
    \|\bar{\mu}_{\pi_k}-\bar{\mu}_{\pi_{k+1}}\|_\infty\leq\,& 2\frac{\log(\|\pi_{k+1}-\pi_k\|_\infty)-\log(4\bar{C}_k)}{\log(\bar{\rho}_k)}\cdot\|\pi_k-\pi_{k+1}\|_\infty\tag{Lemma \ref{le:mu-pi-lipschitz}}\\
\text{and}\quad 
    \|\bar{P}_{\pi_k}-\bar{P}_{\pi_{k+1}}\|_\infty=\,&\max_{s,a}\sum_{s',a'}\left|\bar{P}_{\pi_k}((s,a),(s',a'))-\bar{P}_{\pi_{k+1}}((s,a),(s',a'))\right|\\
    =\,&\max_{s,a}\sum_{s',a'}p(s'|s,a)\left|\pi_k(a'|s')-\pi_{k+1}(a'|s')\right|\\
    =\,&\max_{s'}\sum_{a'}\left|\pi_k(a'|s')-\pi_{k+1}(a'|s')\right|\\
    =\,&\|\pi_k-\pi_{k+1}\|_\infty.
\end{align*}
Therefore, we have
\begin{align}
    &\|h(Q_{k+1}, \pi_{k+1}, Y_{k+1}) - h(Q_k, \pi_k, Y_{k+1})\|_\infty\nonumber\\ 
    \leq \,&\frac{1}{1-\gamma}\left(\frac{\log(\|\pi_k-\pi_{k+1}\|_\infty(1-\rho_{\max}))-\log(8C_{\max})}{\log(\rho_{\max})}\right)^2 \|\pi_k-\pi_{k+1}\|_\infty\nonumber\\
    &+ \frac{2}{1-\gamma}\left(\frac{\log(\|\pi_k-\pi_{k+1}\|_\infty(1-\rho_{\max}))-\log(8C_{\max})}{\log(\rho_{\max})}\right)\nonumber\\
    &\times \left(2\alpha_k+\frac{\log(\|\pi_{k+1}-\pi_k\|_\infty)-\log(4\bar{C}_k)}{\log(\bar{\rho}_k)}\cdot\|\pi_k-\pi_{k+1}\|_\infty\right).\label{eq2:pf:le:PE-second-term}
\end{align}
It remains to bound $\|\pi_k-\pi_{k+1}\|_\infty$. Since $\nu(\cdot)$ is $1$-strongly concave with respect to $\|\cdot\|_1$, by the conjugate correspondence theorem \cite[Theorem~5.26]{beck2017first}, $\sigma(\cdot)$ satisfies $\|\sigma(x_1)-\sigma(x_2)\|_1\leq \|x_1-x_2\|_\infty$. Therefore, for any $s\in\mathcal{S}$, we have
\begin{align*}
    &\|\pi_{k+1}(s)-\pi_k(s)\|_1\\
    =\,&\left\|\frac{\epsilon_k \mathbf{1}}{|\mathcal{A}|}+(1-\epsilon_k)\sigma\left(\frac{Q_k(s)}{\tau_k}\right)-\frac{\epsilon_{k+1}\mathbf{1}}{|\mathcal{A}|}-(1-\epsilon_{k+1})\sigma\left(\frac{Q_{k+1}(s)}{\tau_{k+1}}\right)\right\|_1\\
    \leq \,&\frac{|\epsilon_k-\epsilon_{k+1}|}{|\mathcal{A}|}\|\mathbf{1}\|_1+\left\|\sigma\left(\frac{Q_k(s)}{\tau_k}\right)-\sigma\left(\frac{Q_{k+1}(s)}{\tau_{k+1}}\right)\right\|_1+|\epsilon_k-\epsilon_{k+1}|\left\|\sigma\left(\frac{Q_k(s)}{\tau_k}\right)\right\|_1\\
    = \,&2|\epsilon
    _k-\epsilon_{k+1}|+\left\|\sigma\left(\frac{Q_k(s)}{\tau_k}\right)-\sigma\left(\frac{Q_{k+1}(s)}{\tau_{k+1}}\right)\right\|_1\\
    \leq \,&2|\epsilon
    _k-\epsilon_{k+1}|+\left\|\frac{Q_k(s)}{\tau_k}-\frac{Q_{k+1}(s)}{\tau_{k+1}}\right\|_\infty\tag*{\cite[Theorem~5.26]{beck2017first}}\\
    \leq \,&2|\epsilon
    _k-\epsilon_{k+1}|+\frac{1}{\tau_k}\left\|Q_k-Q_{k+1}\right\|_\infty+\frac{|\tau_k-\tau_{k+1}|}{\tau_k\tau_{k+1}}\left\|Q_{k+1}\right\|_\infty\nonumber\\
    \leq \,&2|\epsilon
    _k-\epsilon_{k+1}|+\frac{2\alpha_k}{\tau_k(1-\gamma)}+\frac{|\tau_k-\tau_{k+1}|}{\tau_k\tau_{k+1}(1-\gamma)}\\
    :=\,&g_k.
\end{align*}
As a result, by the definition of matrix-induced norms, we have
\begin{align*}
    \|\pi_{k+1}-\pi_k\|_\infty
    =\max_{s\in\mathcal{S}}\|\pi_{k+1}(s)-\pi_k(s)\|_1\leq g_k.
\end{align*}

Using the previous inequality in Eq. (\ref{eq2:pf:le:PE-second-term}), we have
\begin{align*}
    \|h(Q_{k+1}, \pi_{k+1}, Y_{k+1}) - h(Q_k, \pi_k, Y_{k+1})\|_\infty
    \leq \,&\frac{1}{1-\gamma}\left(\frac{\log(g_k(1-\rho_{\max}))-\log(8C_{\max})}{\log(\rho_{\max})}\right)^2 g_k\\
    &+ \frac{2}{1-\gamma}\left(\frac{\log(g_k(1-\rho_{\max}))-\log(8C_{\max})}{\log(\rho_{\max})}\right)\\
    &\times \left(2\alpha_k+\frac{\log(g_k)-\log(4\bar{C}_k)}{\log(\bar{\rho}_k)}\cdot g_k\right)\\
    \leq \,&\frac{5}{1-\gamma}\left(\frac{\log(g_k(1-\rho_{\max}))-\log(8C_{\max})}{\log(\rho_{\max})}\right)^2 g_k\\
    :=\,&N_k.
\end{align*}

Finally, using the previous inequality in Eq. (\ref{eq1:pf:le:PE-second-term}), we obtain
\begin{align*}
    &\langle \nabla M(Q_k - Q^*), h(Q_{k+1}, \pi_{k+1}, Y_{k+1}) - h(Q_k, \pi_k, Y_{k+1}) \rangle \\
    \leq\,&  \frac{1}{2}\left(1-\frac{u_m}{\ell_m}\gamma_k\right)M(Q_k - Q^*) +\frac{N_k^2}{\ell_m^{2}\left(1-\frac{u_m}{\ell_m}\gamma_k\right)},  
\end{align*}
and thus
\begin{align*}
    E_{3,4} & = \frac{\alpha_{k+1}}{\alpha_k}\mathbb{E}[\langle \nabla M(Q_k - Q^*), h(Q_{k+1}, \pi_{k+1}, Y_{k+1}) - h(Q_k, \pi_k, Y_{k+1}) \rangle] \\
    & \leq \frac{\alpha_{k+1}}{2\alpha_k}\left(1-\frac{u_m}{\ell_m}\gamma_k\right)\mathbb{E}[M(Q_k - Q^*)] + \frac{\alpha_{k+1} N_k^2}{\alpha_k\ell_m^{2}\left(1-\frac{u_m}{\ell_m}\gamma_k\right)}.
\end{align*}

\subsection{Proof of Lemma \ref{le:E25}}\label{pf:le:E25}

For any $k\geq 0$, using Lemma \ref{le:Moreau} (2) and H\"{o}lder's inequality, we have
\begin{align*}
    \langle \nabla M(Q_k - Q^*), h(Q_k, \pi_k, Y_{k+1}) \rangle 
     \leq \,&\|Q_{k} - Q^*\|_m \left\| \nabla \|Q_k - Q^*\|_m \right\|_m^* \cdot \|h(Q_k, \pi_k, Y_{k+1})\|_m\\
     \leq \,&\|Q_k - Q^*\|_m \|h(Q_k, \pi_k, Y_{k+1})\|_m\tag{Lemma \ref{lem:shalev}}\\
     \leq \,&\frac{1}{\ell_m}\sqrt{2M(Q_k-Q^*)} \|h(Q_k, \pi_k, Y_{k+1})\|_\infty\tag{Lemma \ref{le:Moreau} (2) and (3)}\\
     \leq \,&\frac{2\bar{C}_k}{\ell_m(1 - \bar{\rho}_k)(1-\gamma)}\sqrt{2M(Q_k-Q^*)},
\end{align*}
where the last inequality follows from Eq. (\ref{eq:PES_boundedness}). It follows that
\begin{align*}
    &\frac{\alpha_{k+1}-\alpha_{k}}{\alpha_k} \langle \nabla M(Q_k - Q^*), h(Q_k, \pi_k, Y_{k+1})\rangle   \\
     \leq \,& \frac{2|\alpha_{k+1}-\alpha_{k}|\bar{C}_k}{\alpha_k\ell_m(1 - \bar{\rho}_k)(1-\gamma)}\sqrt{2M(Q_k-Q^*)}\\
     \leq \,&\frac{1}{2}\left(1-\frac{u_m}{\ell_m}\gamma_k\right)M(Q_k - Q^*) + \frac{4(\alpha_{k+1}-\alpha_{k})^{2}\bar{C}_k^{2}}{\alpha_{k}^{2}\ell_{m}^{2}(1 - \bar{\rho}_k)^{2} (1-\gamma)^{2} \left(1-\frac{u_m}{\ell_m}\gamma_k\right)}.
\end{align*}
where the last inequality follows from $(a^2+b^2\geq 2ab)$ for any $a,b\in\mathbb{R}$. Taking expectations on both sides of the previous inequality yields
\begin{align*}
    E_{3,5}=\,&\frac{\alpha_{k+1}-\alpha_{k}}{\alpha_k} \mathbb{E}[\langle \nabla M(Q_k - Q^*), h(Q_k, \pi_k, Y_{k+1})\rangle]\\
    \leq \,&\frac{1}{2}\left(1-\frac{u_m}{\ell_m}\gamma_k\right)\mathbb{E}[M(Q_k - Q^*)] + \frac{4(\alpha_{k+1}-\alpha_{k})^{2}\bar{C}_k^{2}}{\alpha_{k}^{2}\ell_{m}^{2}(1 - \bar{\rho}_k)^{2} (1-\gamma)^{2} \left(1-\frac{u_m}{\ell_m}\gamma_k\right)}.
\end{align*}

\subsection{Solving the Recursion}\label{pf:le:Qk-final-bound-constant}
We begin by simplifying the bound in Proposition~\ref{le:final-drift-ineq} under constant parameters $\alpha_k \equiv \alpha$, $\epsilon_k \equiv \epsilon$, and $\tau_k \equiv \tau$. For clarity, we write $E_{2,2}$ as $E_{2,2}(k)$ to emphasize its dependence on $k$.
 Then, we have
\begin{align*}
        \mathbb{E}[M(Q_{k+1}-Q^*)]
        \leq\, & \left[ 1 - \alpha_{k}\left(1-\frac{u_m}{\ell_m}\gamma_k\right) \right] \mathbb{E}[M(Q_k-Q^*)]+\alpha_kE_{2,2}(k)+ \frac{\alpha_{k} N_k^2}{\ell_{m}^{2}\left(1-\frac{u_m}{\ell_m}\gamma_k\right)}\\
        &+ \frac{6\bar{C}_{k+1} L (|\mathcal{S}||\mathcal{A}|)^{2/p} \alpha_{k}^{2}}{(1 - \bar{\rho}_{k+1})(1 - \gamma)^2} 
        + \frac{4(\alpha_{k+1}-\alpha_{k})^{2}\bar{C}_k^{2}}{\alpha_{k}(1 - \bar{\rho}_k)^{2} (1-\gamma)^{2} \left(1-\frac{u_m}{\ell_m}\gamma_k\right)}\\
        =\,&\left[ 1 - \alpha\left(1-\frac{u_m}{\ell_m}\bar{\gamma}\right) \right] \mathbb{E}[M(Q_k-Q^*)]+\alpha E_{2,2}(k)\\
        &+ \frac{100\alpha^3}{\tau^2\ell_{m}^{2}\left(1-\frac{u_m}{\ell_m}\bar{\gamma}\right)(1-\gamma)^4}\left(\frac{\log(2\alpha(1-\bar{\rho})/[8\bar{C}\tau(1-\gamma)])}{\log(\bar{\rho})}\right)^4\\
        &+ \frac{6\bar{C} L (|\mathcal{S}||\mathcal{A}|)^{2/p} \alpha^{2}}{(1 - \bar{\rho})(1 - \gamma)^2},
    \end{align*}
    where we recall that $\lambda:=\min_{1\leq k\leq K}\min_{s,a}\pi_k(a|s)\geq \epsilon/|\mathcal{A}|$, and 
    \begin{align*}
        \bar{\gamma}=\,&1-\lambda^{r_b}\mu_{\pi_b,\min}\delta_b(1-\gamma),\quad \bar{C}=  \left(1 - \frac{1}{2}\delta_{b} \lambda^{r_{b}+1} \mu_{\pi_b,\min} \pi_{b,\min}\right)^{-1},\\       
        \bar{\rho} =\,&  \left(1 - \frac{1}{2}\delta_{b} \lambda^{r_{b}+1} \mu_{\pi_b,\min} \pi_{b,\min}\right)^{1/(r_{b}+1)}.
    \end{align*}
    Repeatedly using the previous inequality, we obtain
    \begin{align}
        \mathbb{E}[M(Q_k-Q^*)]\leq \,&\left[ 1 - \alpha\left(1-\frac{u_m}{\ell_m}\bar{\gamma}\right) \right]^k \mathbb{E}[M(Q_0-Q^*)]+\underbrace{\sum_{i=0}^{k-1}\alpha E_{2,2}(i)\left[ 1 - \alpha\left(1-\frac{u_m}{\ell_m}\bar{\gamma}\right) \right]^{k-i-1}}_{\text{The telescoping term}}\nonumber\\
        &+ \frac{100\alpha^2}{\tau^2\ell_{m}^{2}\left(1-\frac{u_m}{\ell_m}\bar{\gamma}\right)^2(1-\gamma)^4}\left(\frac{\log(2\alpha(1-\bar{\rho})/[8\bar{C}\tau(1-\gamma)])}{\log(\bar{\rho})}\right)^4\nonumber\\
        &+ \frac{6\bar{C} L (|\mathcal{S}||\mathcal{A}|)^{2/p} \alpha}{\left(1-\frac{u_m}{\ell_m}\bar{\gamma}\right)(1 - \bar{\rho})(1 - \gamma)^2}.\label{eq:before_last}
    \end{align}
    We next simplify the telescoping term. For simplicity of notation, denote 
    \begin{align*}
        v_k=\mathbb{E}[\langle \nabla M(Q_k - Q^*), h(Q_k, \pi_k, Y_k) \rangle]\quad \text{and}\quad \phi=1 - \alpha\left(1-\frac{u_m}{\ell_m}\bar{\gamma}\right).
    \end{align*}
    Then, we have
    \begin{align*}
    \sum_{i=0}^{k-1}\alpha E_{2,2}(i)\phi^{k-i-1}
    =\,&\alpha\phi^{k}\sum_{i=0}^{k-1} \frac{v_i-v_{i+1}}{\phi^{i+1}}\\
    =\,&\alpha\phi^{k}\left(\sum_{i=0}^{k-1} \frac{v_i}{\phi^{i+1}}-\sum_{i=0}^{k-1} \frac{v_{i+1}}{\phi^{i+1}}\right)\\
    =\,&\alpha\phi^{k}\left(\frac{1}{\phi}\sum_{i=0}^{k-1} \frac{v_i}{\phi^{i}}-\sum_{i=1}^{k} \frac{v_{i}}{\phi^{i}}\right)\\
    =\,&\alpha \phi^{k-1}v_0-\alpha v_k+\alpha\phi^{k-1}\left(1-\phi\right)\sum_{i=1}^{k-1} \frac{v_i}{\phi^{i}}.
\end{align*}
To proceed, we next bound $|v_k|$. Note that for any $k\geq 0$, we have
\begin{align*}
    |v_{k}| & = \left|\mathbb{E}[\langle \nabla M(Q_k - Q^*), h(Q_k, \pi_k, Y_k) \rangle] \right| \\
    & \leq  \mathbb{E}\left[ \left| \langle \nabla M(Q_k - Q^*), h(Q_k, \pi_k, Y_k) \rangle \right| \right] \tag{Jensen's inequality}\\
    & \leq  \mathbb{E}\left[ \|Q_{k} - Q^*\|_m \left\| \nabla \|Q_k - Q^*\|_m \right\|_m^* \cdot \|h(Q_k, \pi_k, Y_{k+1})\|_m \right] \nonumber \tag{Lemma \ref{le:Moreau} and H\"{o}lder's inequality}\\ 
    & \leq   \mathbb{E}\left[ \|Q_k - Q^*\|_m \|h(Q_k, \pi_k, Y_{k+1})\|_m \right] \nonumber \tag{Lemma \ref{lem:shalev}} \\
    & \leq \frac{1}{\ell_{m}^{2}} \mathbb{E}\left[ \|Q_k - Q^*\|_\infty \|h(Q_k, \pi_k, Y_{k+1})\|_\infty \right]\\
   & \leq \frac{4\bar{C}}{\ell_{m}^{2}(1 - \bar{\rho})(1-\gamma)^2}\tag{Eq. (\ref{eq:PES_boundedness}) and $\|Q_k-Q^*\|_\infty\leq 2/(1-\gamma)$}
\end{align*}
It follows that
\begin{align*}
    &\sum_{i=0}^{k-1}\alpha E_{2,2}(i)\phi^{k-i-1}\\
    =\,&\alpha \phi^{k-1}v_0-\alpha v_k+\alpha\phi^{k-1}\left(1-\phi\right)\sum_{i=1}^{k-1} \frac{v_i}{\phi^{i}}\\
    \leq \,&\alpha \phi^{k-1}\frac{4 \bar{C} }{\ell_{m}^{2}(1 - \bar{\rho})(1-\gamma)^{2}}+\alpha \frac{4 \bar{C} }{\ell_{m}^{2}(1 - \bar{\rho})(1-\gamma)^{2}}+\frac{4 \bar{C} \alpha}{\ell_{m}^{2}(1 - \bar{\rho})(1-\gamma)^{2}}\phi^{k-1}\left(1-\phi\right)\sum_{i=1}^{k-1} \frac{1}{\phi^{i}}\\
    \leq \,&\frac{4 \bar{C} \alpha \phi^{k-1}}{\ell_{m}^{2}(1 - \bar{\rho})(1-\gamma)^{2}}+\frac{4 \bar{C} \alpha}{\ell_{m}^{2}(1 - \bar{\rho})(1-\gamma)^{2}}+\frac{4 \bar{C} \alpha}{\ell_{m}^{2}(1 - \bar{\rho})(1-\gamma)^{2}}\phi^{k-1}\\
    \leq \,&\frac{12 \bar{C} \alpha}{\ell_{m}^{2}(1 - \bar{\rho})(1-\gamma)^{2}}.
\end{align*}
Using the previous inequality in Eq. (\ref{eq:before_last}), we have
\begin{align*}
    \mathbb{E}[M(Q_k-Q^*)]\leq \,&\left[ 1 - \alpha\left(1-\frac{u_m}{\ell_m}\bar{\gamma}\right) \right]^k \mathbb{E}[M(Q_0-Q^*)]+\frac{12 \bar{C} \alpha}{\ell_{m}^{2}(1 - \bar{\rho})(1-\gamma)^{2}}\\
        &+ \frac{100\alpha^2}{\tau^2\ell_{m}^{2}\left(1-\frac{u_m}{\ell_m}\bar{\gamma}\right)^2(1-\gamma)^4}\left(\frac{\log(2\alpha(1-\bar{\rho})/[8\bar{C}\tau(1-\gamma)])}{\log(\bar{\rho})}\right)^4\\
        &+ \frac{6\bar{C} L (|\mathcal{S}||\mathcal{A}|)^{2/p} \alpha}{\left(1-\frac{u_m}{\ell_m}\bar{\gamma}\right)(1 - \bar{\rho})(1 - \gamma)^2}\\
        \leq \,&\left[ 1 - \alpha\left(1-\frac{u_m}{\ell_m}\bar{\gamma}\right) \right]^k \mathbb{E}[M(Q_0-Q^*)]\\
        &+ \frac{100\alpha^2}{\tau^2\ell_{m}^{2}\left(1-\frac{u_m}{\ell_m}\bar{\gamma}\right)^2(1-\gamma)^4}\left(\frac{\log(2\alpha(1-\bar{\rho})/[8\bar{C}\tau(1-\gamma)])}{\log(\bar{\rho})}\right)^4\\
        &+ \frac{6 \bar{C}(|\mathcal{S}||\mathcal{A}|)^{2/p} \alpha}{(1 - \bar{\rho})(1-\gamma)^{2}}\left(\frac{2}{\ell_{m}^{2}}+\frac{L}{\left(1-\frac{u_m}{\ell_m}\bar{\gamma}\right)}\right)
\end{align*}
To translate the above into a bound on $\mathbb{E}[\|Q_k - Q^*\|_\infty]$, using Lemma \ref{le:Moreau} (3), we have
\begin{align*}
    \mathbb{E}[\|Q_k-Q^*\|_\infty^2]\leq \,&\frac{u_m^2}{\ell_m^2}\left[ 1 - \alpha\left(1-\frac{u_m}{\ell_m}\bar{\gamma}\right) \right]^k \mathbb{E}[\|Q_0-Q^*\|_\infty^2]\\
        &+ \frac{200u_m^2\alpha^2}{\tau^2\ell_{m}^{2}\left(1-\frac{u_m}{\ell_m}\bar{\gamma}\right)^2(1-\gamma)^4}\left(\frac{\log(2\alpha(1-\bar{\rho})/[8\bar{C}\tau(1-\gamma)])}{\log(\bar{\rho})}\right)^4\\
        &+ \frac{12 \bar{C}(|\mathcal{S}||\mathcal{A}|)^{2/p} \alpha}{(1 - \bar{\rho})(1-\gamma)^{2}}\left(\frac{2u_m^2}{\ell_{m}^{2}}+\frac{Lu_m^2}{\left(1-\frac{u_m}{\ell_m}\bar{\gamma}\right)}\right).
\end{align*}
The final step of the proof is to make all constants in the convergence bound explicit. We begin by specifying the tunable parameters $\theta$ and $p$ used in defining the Lyapunov function $M(\cdot)$. By choosing $p = 2 \log(|\mathcal{S}||\mathcal{A}|)$ and $\theta = \left((1 + \bar{\gamma}) / 2\bar{\gamma}\right)^{2} - 1$, we have
\begin{align*}
    &(|\mathcal{S}||\mathcal{A}|)^{2/p}=e\leq 3,\;
    u_p=1,\;\ell_p=(|\mathcal{S}||\mathcal{A}|)^{-1/p}=\frac{1}{\sqrt{e}},\\
    &\frac{u_{m}^{2}}{\ell_{m}^{2}} = \frac{1 + \theta u_p^2}{1 + \theta \ell_p^2} = \frac{1 + \theta}{1 + \frac{\theta}{e}} 
    = \frac{e(1 + \theta)}{e + \theta} < e < 3,\\
    &u_{m}^{2} = (1 + \theta) = \left(\frac{1 + \bar{\gamma}}{2 \bar{\gamma}}\right)^{2} < \frac{1}{\bar{\gamma}^{2}} = \frac{1}{(1 - \lambda^{r_b}\delta_{b} \mu_{\pi_b,\min}(1-\gamma))^{2}}\leq 4,\\
    &\frac{u_m}{\ell_m}=\sqrt{\frac{e(1+\theta)}{e+\theta}}\leq \sqrt{1+\theta}=\frac{1 + \bar{\gamma}}{2\bar{\gamma}}\;\Rightarrow\;1 - \frac{u_{m}}{\ell_{m}}\hat{\gamma} \geq  \frac{1 - \bar{\gamma}}{2},\\
    &L=\frac{p-1}{\theta}\leq \frac{8\log(|\mathcal{S}||\mathcal{A}|)}{1-\bar{\gamma}}.
\end{align*}
Therefore, we have
\begin{align*}
    \mathbb{E}[\|Q_k-Q^*\|_\infty^2]\leq \,&3\left[ 1 - \alpha\left(\frac{1-\bar{\gamma}}{2}\right) \right]^k \mathbb{E}[\|Q_0-Q^*\|_\infty^2]+\frac{2520\bar{C}\log(|\mathcal{S}||\mathcal{A}|) \alpha}{(1 - \bar{\rho})(1-\gamma)^{2}(1-\bar{\gamma})^2}\\
        &+ \frac{2400\alpha^2}{\tau^2\left(1-\bar{\gamma}\right)^2(1-\gamma)^4}\left(\frac{\log(2\alpha(1-\bar{\rho})/[8\bar{C}\tau(1-\gamma)])}{\log(\bar{\rho})}\right)^4.
\end{align*}
Finally, since
\begin{align*}
        \bar{\gamma}=\,&1-\lambda^{r_b}\mu_{\pi_b,\min}\delta_b(1-\gamma),\quad \bar{C}=  \left(1 - \frac{1}{2}\delta_{b} \lambda^{r_{b}+1} \mu_{\pi_b,\min} \pi_{b,\min}\right)^{-1},\\       
        \bar{\rho} =\,&  \left(1 - \frac{1}{2}\delta_{b} \lambda^{r_{b}+1} \mu_{\pi_b,\min} \pi_{b,\min}\right)^{1/(r_{b}+1)}\Rightarrow 1-\bar{\rho}\geq \frac{\delta_{b} \lambda^{r_{b}+1} \mu_{\pi_b,\min} \pi_{b,\min}}{2(r_b+1)},
    \end{align*}
    where the last inequality follows from Bernoulli's inequality, we have
    \begin{align*}
    \mathbb{E}[\|Q_k-Q^*\|_\infty^2]\leq \,&3\left[ 1 - \alpha\left(\frac{\lambda^{r_b}\mu_{\pi_b,\min}\delta_b(1-\gamma)}{2}\right) \right]^k \mathbb{E}[\|Q_0-Q^*\|_\infty^2]\\
    &+\frac{10080(r_b+1)\log(|\mathcal{S}||\mathcal{A}|) \alpha}{ \lambda^{3r_{b}+1}\pi_{b,\min}\mu_{\pi_b,\min}^3\delta_b^3(1-\gamma)^4}\\
        &+ \frac{2400\alpha^2}{\tau^2\lambda^{2r_b}\mu_{\pi_b,\min}^2\delta_b^2(1-\gamma)^6}\left(\frac{(r_b+1)\log(8\bar{C}\tau(1-\gamma))/[4\alpha(1-\bar{\rho}])}{\delta_{b} \lambda^{r_{b}+1} \mu_{\pi_b,\min} \pi_{b,\min}}\right)^4\\
        \leq \,&3\left[ 1 - \alpha\left(\frac{\lambda^{r_b}\mu_{\pi_b,\min}\delta_b(1-\gamma)}{2}\right) \right]^k \mathbb{E}[\|Q_0-Q^*\|_\infty^2]\\
    &+\frac{10080(r_b+1)\log(|\mathcal{S}||\mathcal{A}|) \alpha}{ \lambda^{3r_{b}+1}\pi_{b,\min}\mu_{\pi_b,\min}^3\delta_b^3(1-\gamma)^4}\\
        &+ \frac{38400(r_b+1)^4\alpha^2}{\tau^2\lambda^{6r_b+4}\mu_{\pi_b,\min}^6\pi_{b,\min}^4\delta_b^6(1-\gamma)^6}\log^4\left(\frac{4(r_b+1)}{\alpha \delta_{b} \lambda^{r_{b}+1} \mu_{\pi_b,\min} \pi_{b,\min}}\right).
\end{align*}
The final result follows from using the definitions of $c_1$, $c_2$, $c_3$, and $c_4$ to simplify the notation.

\subsection{Auxiliary Lemma}

\begin{lemma}\label{le:mu-pi-lipschitz}
    For $\pi_{1}, \pi_{2} \in \Pi$, we have
    \begin{align*}
        \|\Bar{\mu}_{\pi_{1}} - \Bar{\mu}_{\pi_{2}}\|_1\leq 2\left(\frac{\log(\frac{\|\pi_1-\pi_2\|_\infty}{4\bar{C}_c})}{\log(\bar{\rho}_c)}\right)\|\pi_1-\pi_2\|_\infty.
    \end{align*}
\end{lemma}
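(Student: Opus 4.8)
The plan is to regard $\bar{\mu}_{\pi_1}-\bar{\mu}_{\pi_2}$ as the residual of a perturbed fixed-point equation and control it through the lazy chains, in the same spirit as Proposition~\ref{prop:MC}(2). Recall from Lemma~\ref{le:mixing-parameters} that, for $i\in\{1,2\}$, the lazy matrix $\bar{\mathcal{P}}_{\pi_i}=(\bar{P}_{\pi_i}+I)/2$ is irreducible and aperiodic, has stationary distribution $\bar{\mu}_{\pi_i}$, and satisfies $\max_y\|\bar{\mathcal{P}}_{\pi_i}^{n}(y,\cdot)-\bar{\mu}_{\pi_i}\|_{\mathrm{TV}}\le \bar{C}_{\pi_i}\bar{\rho}_{\pi_i}^{n}$ for all $n\ge 0$. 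Starting from the invariance relations $\bar{\mu}_{\pi_1}^\top=\bar{\mu}_{\pi_1}^\top\bar{\mathcal{P}}_{\pi_1}$ and $\bar{\mu}_{\pi_2}^\top=\bar{\mu}_{\pi_2}^\top\bar{\mathcal{P}}_{\pi_2}$ and adding/subtracting $\bar{\mu}_{\pi_2}^\top\bar{\mathcal{P}}_{\pi_1}$, I get the one-step identity $(\bar{\mu}_{\pi_1}-\bar{\mu}_{\pi_2})^\top=(\bar{\mu}_{\pi_1}-\bar{\mu}_{\pi_2})^\top\bar{\mathcal{P}}_{\pi_1}+\bar{\mu}_{\pi_2}^\top(\bar{\mathcal{P}}_{\pi_1}-\bar{\mathcal{P}}_{\pi_2})$; iterating it $n$ times gives, for every integer $n\ge 1$,
\[
(\bar{\mu}_{\pi_1}-\bar{\mu}_{\pi_2})^\top=(\bar{\mu}_{\pi_1}-\bar{\mu}_{\pi_2})^\top\bar{\mathcal{P}}_{\pi_1}^{n}+\bar{\mu}_{\pi_2}^\top(\bar{\mathcal{P}}_{\pi_1}-\bar{\mathcal{P}}_{\pi_2})\sum_{k=0}^{n-1}\bar{\mathcal{P}}_{\pi_1}^{k}.
\]
Telescoping in the other direction instead carries the geometric factor on $\bar{\mathcal{P}}_{\pi_2}$, which is why the statement is symmetric in the two policies; I orient it to match the downstream use in Lemma~\ref{le:E24}, so $(\bar{C}_c,\bar{\rho}_c)=(\bar{C}_{\pi_1},\bar{\rho}_{\pi_1})$.

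Next I bound the two pieces in $\ell_1$. For the tail term, $d:=\bar{\mu}_{\pi_1}-\bar{\mu}_{\pi_2}$ has zero coordinate sum, so $d^\top\bar{\mathcal{P}}_{\pi_1}^{n}=d^\top(\bar{\mathcal{P}}_{\pi_1}^{n}-\mathbf{1}\bar{\mu}_{\pi_1}^\top)$, hence
\[
\|d^\top\bar{\mathcal{P}}_{\pi_1}^{n}\|_1\le \|d\|_1\,\max_y\sum_{y'}|\bar{\mathcal{P}}_{\pi_1}^{n}(y,y')-\bar{\mu}_{\pi_1}(y')|\le 2\bar{C}_{\pi_1}\bar{\rho}_{\pi_1}^{n}\|d\|_1\le 4\bar{C}_{\pi_1}\bar{\rho}_{\pi_1}^{n},
\]
using $\|d\|_1\le 2$. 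For the main term, since $\|v^\top P\|_1\le\|v\|_1$ for any row-stochastic $P$, it is at most $n\,\|\bar{\mu}_{\pi_2}^\top(\bar{\mathcal{P}}_{\pi_1}-\bar{\mathcal{P}}_{\pi_2})\|_1$. The key computation here is that the $((s,a),(s',a'))$-entry of $\bar{\mathcal{P}}_{\pi_1}-\bar{\mathcal{P}}_{\pi_2}$ is $\frac12 p(s'\mid s,a)(\pi_1(a'\mid s')-\pi_2(a'\mid s'))$; left-multiplying by $\bar{\mu}_{\pi_2}$ and marginalizing the kernel via $\sum_{s,a}\bar{\mu}_{\pi_2}(s,a)p(s'\mid s,a)=\mu_{\pi_2}(s')$ collapses the $(s',a')$-entry to $\frac12(\pi_1(a'\mid s')-\pi_2(a'\mid s'))\mu_{\pi_2}(s')$, and summing absolute values with $\sum_{a'}|\pi_1(a'\mid s')-\pi_2(a'\mid s')|\le\|\pi_1-\pi_2\|_\infty$ yields $\|\bar{\mu}_{\pi_2}^\top(\bar{\mathcal{P}}_{\pi_1}-\bar{\mathcal{P}}_{\pi_2})\|_1\le\frac12\|\pi_1-\pi_2\|_\infty$.

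Combining the two pieces, $\|d\|_1\le \frac{n}{2}\|\pi_1-\pi_2\|_\infty+4\bar{C}_{\pi_1}\bar{\rho}_{\pi_1}^{n}$ for every $n\ge1$. Choosing $n=\lceil\log(\|\pi_1-\pi_2\|_\infty/(4\bar{C}_{\pi_1}))/\log\bar{\rho}_{\pi_1}\rceil$ — a valid positive integer, and in fact $n\ge r_b+1\ge1$, because $\|\pi_1-\pi_2\|_\infty\le 2\le 4\le 4\bar{C}_{\pi_1}$, $\bar{\rho}_{\pi_1}\in(0,1)$, and the mixing rate in Lemma~\ref{le:mixing-parameters} gives $\log(1/\bar{\rho}_{\pi_1})\le (\log 2)/(r_b+1)$ — makes the second term at most $\|\pi_1-\pi_2\|_\infty$, so $\|d\|_1\le(\frac n2+1)\|\pi_1-\pi_2\|_\infty\le 2n\|\pi_1-\pi_2\|_\infty$, which is exactly the claimed bound $\|\bar{\mu}_{\pi_1}-\bar{\mu}_{\pi_2}\|_1\le 2\big(\log(\|\pi_1-\pi_2\|_\infty/(4\bar{C}_c))/\log\bar{\rho}_c\big)\|\pi_1-\pi_2\|_\infty$. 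I expect the only real obstacle to be bookkeeping: orienting the telescoping so the geometric factor attaches to $\bar{\mathcal{P}}_{\pi_1}$ (so the constants line up with Lemma~\ref{le:E24}), and executing the marginalization $\sum_{s,a}\bar{\mu}_{\pi_2}(s,a)p(s'\mid s,a)=\mu_{\pi_2}(s')$ cleanly so that the transition-kernel perturbation is measured by \emph{exactly} $\|\pi_1-\pi_2\|_\infty$; the integer rounding of $n$ and the precise leading constant are minor slacks that do not affect the structure of the argument.
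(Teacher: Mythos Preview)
Your proof is correct and follows essentially the same approach as the paper: both write $\bar\mu_{\pi_1}-\bar\mu_{\pi_2}$ via the lazy chain raised to a power $n$, use the zero-sum property of $d=\bar\mu_{\pi_1}-\bar\mu_{\pi_2}$ to get the geometric tail $4\bar C_{\pi_1}\bar\rho_{\pi_1}^{n}$, bound the perturbation piece linearly in $n\|\pi_1-\pi_2\|_\infty$, and then balance by choosing $n$ of order $\log(\|\pi_1-\pi_2\|_\infty/\bar C)/\log\bar\rho$. Your marginalization step $\sum_{s,a}\bar\mu_{\pi_2}(s,a)p(s'\mid s,a)=\mu_{\pi_2}(s')$ is a slight refinement over the paper's cruder bound $\|\bar{\mathcal P}_{\pi_1}^k-\bar{\mathcal P}_{\pi_2}^k\|_\infty\le k\|\pi_1-\pi_2\|_\infty$, yielding $n/2$ rather than $n$ in front of $\|\pi_1-\pi_2\|_\infty$, but the structure is identical.
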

\begin{proof}[Proof of Lemma \ref{le:mu-pi-lipschitz}]
    Similar results establishing the continuous dependence of the stationary distributions on the policies have been previously obtained in \cite{chen2023finite} and \cite{zhang2023global}, but in different contexts 
and with respect to different norms. We reproduce the proofs for our setting with respect to $\ell_{\infty}$-norm. 

Let $\bar{M}_{\pi_{1}} \in \mathbb{R}^{|\mathcal{S}||\mathcal{A}|\times |\mathcal{S}||\mathcal{A}|}$ be the matrix with $\Bar{\mu}_{\pi_{1}}^{\top}$ as every row. 
Since $\Bar{\mu}_{\pi_{1}}^{\top} = \Bar{\mu}_{\pi_{1}}^{\top} \bar{\mathcal{P}}_{\pi_{1}}^{k}$ and $\Bar{\mu}_{\pi_{2}}^{\top} = \Bar{\mu}_{\pi_{2}}^{\top} \bar{\mathcal{P}}_{\pi_{2}}^{k}$ for any $k\geq 0$, we have
\begin{align}
    \|\Bar{\mu}_{\pi_{1}} - \Bar{\mu}_{\pi_{2}}\|_{1} 
    & = \|(\bar{\mathcal{P}}_{\pi_{1}}^{k})^{\top}\Bar{\mu}_{\pi_{1}} - (\bar{\mathcal{P}}_{\pi_{2}}^{k})^{\top}\Bar{\mu}_{\pi_{2}} \|_{1} \nonumber\\ 
    & \leq \|(\bar{\mathcal{P}}_{\pi_{1}}^{k})^{\top}(\Bar{\mu}_{\pi_{1}} - \Bar{\mu}_{\pi_{2}})\|_{1} + \|(\bar{\mathcal{P}}_{\pi_{1}}^{k} - \bar{\mathcal{P}}_{\pi_{2}}^{k})^{\top} \Bar{\mu}_{\pi_{2}}\|_{1} \nonumber\\
    & = \|(\bar{\mathcal{P}}_{\pi_{1}}^{k} - \bar{M}_{\pi_{1}} + \bar{M}_{\pi_{1}})^{\top}(\Bar{\mu}_{\pi_{1}} - \Bar{\mu}_{\pi_{2}})\|_{1} + \|(\bar{\mathcal{P}}_{\pi_{1}}^{k} - \bar{\mathcal{P}}_{\pi_{2}}^{k})^{\top} \Bar{\mu}_{\pi_{2}}\|_{1} \nonumber\\
    & \leq \|(\bar{\mathcal{P}}_{\pi_{1}}^{k} - \bar{M}_{\pi_{1}})^{\top}(\Bar{\mu}_{\pi_{1}} - \Bar{\mu}_{\pi_{2}})\|_{1} + \|\bar{M}_{\pi_{1}}^{\top}(\Bar{\mu}_{\pi_{1}} - \Bar{\mu}_{\pi_{2}})\|_{1} + \|(\bar{\mathcal{P}}_{\pi_{1}}^{k} - \bar{\mathcal{P}}_{\pi_{2}}^{k})^{\top} \Bar{\mu}_{\pi_{2}}\|_{1} \nonumber\\
    & \leq \|(\bar{\mathcal{P}}_{\pi_{1}}^{k} - \bar{M}_{\pi_{1}})^{\top}\|_{1}\|\Bar{\mu}_{\pi_{1}} - \Bar{\mu}_{\pi_{2}}\|_{1} + \|\bar{M}_{\pi_{1}}^{\top}(\Bar{\mu}_{\pi_{1}} - \Bar{\mu}_{\pi_{2}})\|_{1} + \|(\bar{\mathcal{P}}_{\pi_{1}}^{k} - \bar{\mathcal{P}}_{\pi_{2}}^{k})^{\top}\|_{1} \|\Bar{\mu}_{\pi_{2}}\|_{1} \nonumber\\
    & \leq  2\|\bar{\mathcal{P}}_{\pi_{1}}^{k} - \bar{M}_{\pi_{1}}\|_{\infty} + \|\bar{M}_{\pi_{1}}^{\top}(\Bar{\mu}_{\pi_{1}} - \Bar{\mu}_{\pi_{2}})\|_{1} + \|\bar{\mathcal{P}}_{\pi_{1}}^{k} - \bar{\mathcal{P}}_{\pi_{2}}^{k}\|_{\infty}.\label{eq1:pf:le:mu-pi-lipschitz}
\end{align}
To proceed, observe that
\begin{align}
    \|\bar{\mathcal{P}}_{\pi_{1}}^{k} - \bar{M}_{\pi_{1}}\|_{\infty}
    =\,&\max_{s,a}\sum_{s',a'}|\bar{\mathcal{P}}_{\pi_{1}}^{k}((s,a),(s',a'))-\bar{\mu}_{\pi_1}(s',a')|\nonumber\\
    =\,&2\max_{s,a}\|\Bar{\mathcal{P}}_{\pi_{1}}^{k}((s,a),(\cdot,\cdot)) - \bar{\mu}_{\pi_{1}}(\cdot,\cdot)\|_{\text{TV}}\nonumber\\
    \leq \,&2\bar{C}_1 \bar{\rho}_1^{k},\quad \forall\;k \geq 0.\label{eq2:pf:le:mu-pi-lipschitz}
\end{align}
Moreover, we have
\begin{align}
    \bar{M}_{\pi_{1}}^{\top}(\Bar{\mu}_{\pi_{1}} - \Bar{\mu}_{\pi_{2}})
    =\Bar{\mu}_{\pi_{1}} \mathbf{1}^\top (\Bar{\mu}_{\pi_{1}} - \Bar{\mu}_{\pi_{2}})
    =\Bar{\mu}_{\pi_{1}}-\Bar{\mu}_{\pi_{1}}=0.\label{eq3:pf:le:mu-pi-lipschitz}
\end{align}
and
\begin{align}
    \|\bar{\mathcal{P}}_{\pi_{1}}^{k} - \bar{\mathcal{P}}_{\pi_{2}}^{k}\|_{\infty}
    \leq \,&k\|\bar{\mathcal{P}}_{\pi_{1}} - \bar{\mathcal{P}}_{\pi_{2}}\|_{\infty}\nonumber\\
    =\,&k\max_{s,a}\sum_{s',a'}p(s'|s,a)|\pi_1(a'|s')-\pi_2(a'|s')|\nonumber\\
    \leq \,&k\max_{s'}\sum_{a'}|\pi_1(a'|s')-\pi_2(a'|s')|\nonumber\\
    = \,&k\|\pi_1-\pi_2\|_\infty,\label{eq4:pf:le:mu-pi-lipschitz}
\end{align}
which follows from the same analysis as in the proof of Proposition \ref{prop:MC} (2). Using the inequalities obtained in Eqs. \eqref{eq2:pf:le:mu-pi-lipschitz}, \eqref{eq3:pf:le:mu-pi-lipschitz}, and \eqref{eq4:pf:le:mu-pi-lipschitz} together in Eq. (\ref{eq1:pf:le:mu-pi-lipschitz}), we have
\begin{align*}
    \|\Bar{\mu}_{\pi_{1}} - \Bar{\mu}_{\pi_{2}}\|_1\leq\,& 4 \bar{C}_1 \bar{\rho}_1^k + k\|\pi_1-\pi_2\|_\infty\\
    \leq \,&4 \bar{C}_1 k\bar{\rho}_1^k + k\|\pi_1-\pi_2\|_\infty,\quad \forall\,k\geq 0.
\end{align*}
The final result follows from choosing
\begin{align*}
    k=\frac{\log(\frac{\|\pi_1-\pi_2\|_\infty}{4\bar{C}_c})}{\log(\bar{\rho}_c)}.
\end{align*}
\end{proof}

\end{document}